\def\gA{{\mathcal{A}}}
\def\gB{{\mathcal{B}}}
\def\gC{{\mathcal{C}}}
\def\gD{{\mathcal{D}}}
\def\gG{{\mathcal{G}}}
\def\gO{{\mathcal{O}}}
\def\gS{{\mathcal{S}}}
\def\gX{{\mathcal{X}}}
\newcommand{\greentick}{\textcolor{green}{\ding{51}}}
\newcommand{\redcross}{\textcolor{red}{\ding{55}}}
\newcommand{\abs}[1]{\left| {#1} \right|}
\def\sP{{\mathbb{P}}}
\def\S{{\mathcal{S}}}
\def\A{{\mathcal{A}}}
\newcommand{\lp}{\left(}
\newcommand{\rp}{\right)}
\newcommand{\lb}{\left\{}
\newcommand{\rb}{\right\}}
\newcommand{\ls}{\left[}
\newcommand{\rs}{\right]}
\newcommand{\labs}{\left|}
\newcommand{\rabs}{\right|}
\newcommand{\lnorm}{\left\|}
\newcommand{\rnorm}{\right\|}
\newcommand{\expect}{\mathbb{E}}
\newcommand{\Nashgap}{\mathrm{Nash-}\mathrm{Gap}}
\def\TV{\mathrm{TV}}
\definecolor{greenp}{rgb}{0.0, 0.51, 0.5}
\definecolor{tf}{rgb}{0.1,0.1,0.8}
\newenvironment{proofsk}{%
  \proof}{\endproof}
\newcommand{\joa}{\textbf{a}}
\newcommand{\jopolicy}{\boldsymbol{\pi}}
\definecolor{lu}{rgb}{0.8,0.1,0.1}
\newcommand{\initial}{d_0}
\newcommand{\bcc}[1]{\left\{{#1}\right\}}
\newcommand{\brr}[1]{\left({#1}\right)}
\newcommand{\bs}[1]{\left[{#1}\right]}
\newcommand{\norm}[1]{\left\| {#1} \right\|}
\newcommand{\expertpair}{\mu_E, \nu_E}
\newcommand{\innerprod}[2]{\left\langle{#1},{#2}\right\rangle}
\definecolor{gr}{rgb}{0.4,0.7,0.2}
\newcommand{\argmin}{\mathop{\rm argmin}}
\newcommand{\argmax}{\mathop{\rm argmax}}
\newcommand{\expert}{\operatorname{E}}
\newcommand{\muE}{\mu^{\expert}}
\newcommand{\nuE}{\nu^{\expert}}
\newtheorem{definition}{Definition}[section]
\newtheorem{theorem}{Theorem}[section]
\newtheorem{lemma}{Lemma}[section]
\newtheorem{remark}{Remark}[section]
\title{Learning Equilibria from Data:\\ Provably Efficient Multi-Agent Imitation Learning }
\author{Till Freihaut$^\ast$ \\
  University of Zurich\\
  \texttt{freihaut@ifi.uzh.ch} \\
  \And
  Luca Viano$^\ast$ \\
  EPFL \\
  \texttt{luca.viano@epfl.ch} \\
  \AND
  Volkan Cevher \\
  EPFL \\
  \texttt{volkan.cevher@epfl.ch} \\
  \And
  Matthieu Geist \\
  Earth Species Project \\
  \texttt{matthieu@earthspecies.org} \\
  \And
  Giorgia Ramponi \\
  University of Zurich \\
  \texttt{ramponi@ifi.uzh.ch} \\
}
\begin{document}

\maketitle

\begin{abstract}
This paper provides the first expert sample complexity characterization for learning a Nash equilibrium from expert data in Markov Games.
We show that a new quantity named the \emph{all policy deviation concentrability coefficient} is unavoidable in the non-interactive imitation learning setting, and we provide an upper bound for behavioral cloning (BC) featuring such coefficient.
BC exhibits substantial regret in games with high concentrability coefficient, leading us to utilize expert queries to develop and introduce two novel solution algorithms: MAIL-BRO and MURMAIL. The former employs a best response oracle and learns an $\varepsilon$-Nash equilibrium with $\mathcal{O}(\varepsilon^{-4})$ expert and oracle queries. The latter bypasses completely the best response oracle at the cost of a worse expert query complexity of order $\mathcal{O}(\varepsilon^{-8})$. Finally, we provide numerical evidence, confirming our theoretical findings.
\end{abstract}
\begingroup %
\renewcommand\thefootnote{$\ast$}
\footnotetext{Equal contribution.}
\endgroup

\section{Introduction}
\vspace{-3mm}
Learning in systems with multiple agents is common in real-world applications, such as autonomous driving \citep{shalevshwartz2016safemultiagentreinforcementlearning}, traffic light control \citep{Bakker2010}, and games \citep{10.5555/3306127.3332052}. Designing reward functions in these applications is challenging, as it requires defining multiple, potentially opposing, objectives. However, expert data are often available, making Multi-Agent Imitation Learning (MAIL) an important approach for learning policies that perform well in underlying Markov Games (MGs) with unknown reward functions. MAIL has the potential to ensure the alignment of agents with the original experts' goals and to avoid potentially exploitable policies that can lead to socially undesirable behavior \citep{CAIF_1}.

A key distinction between Multi-Agent Imitation Learning and Single-Agent Imitation Learning (SAIL) is that the performance of a strategy in MAIL depends on the strategies of other agents. This means that an expert need not maximize reward directly; instead, the goal is to reach a state where no agent benefits from unilaterally deviating from its strategy, typically referred to as an equilibrium. The most common equilibrium concept is the Nash equilibrium (NE). To evaluate how close a given strategy is to an NE, the objective must consider strategic deviations of one agent while holding the others fixed.

In this work, we consider 2-player Zero-Sum Markov Games\footnote{For the sake of simplicity, the main text will focus on this case. The appendix outlines the extension to $n$ player general-sum games.} where the agents' rewards are perfectly opposing, i.e., $r_1(s,a,b) = -r_2(s,a,b)$. In this setting, denoting the state value function of a strategy pair $\mu,\nu$ as $V^{\mu, \nu}:\S\rightarrow \mathbb{R} $, we measure the gap of a strategy pair to an NE by the following metric
\[
\mathrm{Nash}\text{-}\mathrm{Gap}(\mu,\nu) := V^{\mu^{\star}, \nu}(s_0) - V^{\mu, \nu^{\star}}(s_0),
\]
where $s_0$ is the starting state of the game\footnote{We will relax this to a stochastic starting state in the next sections.} and $\nu^{\star}$ denotes one of the strategies from the set of best-response strategies to $\mu$. That is, $\mu^{\star} \in \mathrm{br}(\nu):= \argmax_{\mu}V^{\mu, \nu}(s_0)$ and $ \nu^{\star} \in \mathrm{br}(\mu) := \argmin_{\nu}V^{\mu, \nu}(s_0).$
This objective has been widely adopted in Multi-Agent Reinforcement Learning (see, e.g., \citep{cui2022offlinetwoplayerzerosummarkov, cui2022provablyefficientofflinemultiagent}) and it is easily motivated by the fact that any strategy profile output by an algorithm under study $(\mu_{\mathrm{out}}, \nu_{\mathrm{out}})$ such that $\mathrm{Nash}\text{-}\mathrm{Gap}(\mu_{\mathrm{out}}, \nu_{\mathrm{out}} ) \leq \varepsilon$ is an $\varepsilon$-approximate Nash equilibrium, often shortened as $\varepsilon$-NE.
However, it remained largely unexplored in the MAIL setting until the seminal work of \citet{tang2024multiagentimitationlearningvalue}, who showed that minimizing the Nash Gap is fundamentally hard in MAIL since deviations in out-of-distribution states can incur linear regret.
\renewcommand{\arraystretch}{3.0} 
\begin{table}[t]
    \caption{\label{tab:literature_MAIL} For simplicity, we report results for the two-player zero-sum with discount factor $\gamma$, finite state space $\abs{\S}$, finite action spaces $\A$, $\mathcal{B}$. Let $\abs{\A_{\mathrm{max}}} = \max{\abs{\A}, \abs{\mathcal{B}}}$. Being consistent with \cite{tang2024multiagentimitationlearningvalue}, we denote $\beta = \min_{s\in\S} d^{\muE,\nuE}(s) $, by $u$ the recoverability coefficient and with $H$ the finite horizon of the considered game. Moreover, we refer to \cite{tang2024multiagentimitationlearningvalue} for the definition of the convex functions $\ell_{\mathrm{MALICE}}$ and $\ell_{\mathrm{BLADES}}$. Additionally, for the behavioral cloning (BC) output pair $\widehat{\mu},\widehat{\nu}$ with an input dataset $\mathcal{D}$ we define $\mathcal{C}_{\max} = \max_{\mu,\nu} \max \left\{\max_{\nu^\star \in \mathrm{br}(\mu)} \norm{\frac{d^{\muE,\nu^\star}}{\rho}}_{\infty}, \max_{\mu^\star \in \mathrm{br}(\nu)} \norm{\frac{d^{\mu^\star,\nuE}}{\rho}}_{\infty}\right\},$ where $\rho$ is any state state distribution, in BC we set it to $\rho = d^{\muE,\nuE}.$
    For this comparison, notice that the main text by \cite{tang2024multiagentimitationlearningvalue} focuses on learning correlated equilibria, but as specified in their appendix, the same proofs can be performed for the problem of learning Nash equilibria.
    For the algorithms presented by \cite{tang2024multiagentimitationlearningvalue}, we can not specify the bound on the number of expert queries since their analysis as an error propagation only flavor. 
    As a final minor difference, we apply our analysis to the infinite horizon discounted setting, which is more relevant for practical settings.
    Finally, we abbreviated Queriable Expert by QE.
    }
    \vspace{1mm}
    \resizebox{\textwidth}{!}{%
    \begin{tabular}{|c|c|c|c|c|c|c|}
        \hline
        \textbf{Algorithm}  & \textbf{MG assum.} & \textbf{Computational Cost}    & \textbf{ \boldmath$\mathrm{Nash-}\mathrm{Gap}$}   & \textbf{Expert Data} & \textbf{Required Computational Oracles} & \textbf{QE}\\ \hline
        BC \cite{tang2024multiagentimitationlearningvalue}    & $\beta > 0$ &  $0$ (analytical solution is available)                     & $\mathcal{O}\brr{u H \varepsilon\beta^{-1}}$ &  Not specified  & $\varepsilon$-accurate TV minimizer & \redcross\\ \hline  
        MALICE \cite{tang2024multiagentimitationlearningvalue}    & $\beta > 0$ & $\exp\brr{\abs{\S}}$   & $\mathcal{O}\brr{u H \varepsilon}$ &  Not specified  & $\varepsilon$-accurate $\ell_{\mathrm{MALICE}}$ minimizer &  \redcross\\ \hline 
        BLADES \cite{tang2024multiagentimitationlearningvalue}    & None & $\exp\brr{\abs{\S}}$   & $\mathcal{O}\brr{u H \varepsilon}$ &  Not specified  & $\varepsilon$-accurate $\ell_{\mathrm{BLADES}}$ minimizer &  \greentick \\ \hline 
        \textbf{BC (Our analysis)}   & $\mathcal{C}_{\max} < \infty$ & $0$ (analytical solution is available)           & $\mathcal{O}\brr{\varepsilon}$ &  $\widetilde{\mathcal{O}}\brr{\frac{\abs{\S}\abs{\A_{\mathrm{max}}} \mathcal{C}^2_{\max}}{(1-\gamma)^4 \varepsilon^2}}$  & None &  \redcross\\ \hline 
        \textbf{MAIL-BRO (Ours)}   &  None & $\mathrm{poly}(\abs{\S}, \abs{\A_{\mathrm{max}}}, (1-\gamma)^{-1}, \varepsilon^{-1})$     & $\mathcal{O}\brr{\varepsilon}$ &  $\widetilde{\mathcal{O}}\brr{\frac{\abs{\S}\abs{\A_{\mathrm{max}}}^2}{(1-\gamma)^4 \varepsilon^4}}$  & Best response oracle & \greentick\\ \hline 
        \textbf{MURMAIL (Ours)}   & None & $\mathrm{poly}(\abs{\S}, \abs{\A_{\mathrm{max}}}, (1-\gamma)^{-1}, \varepsilon^{-1})$  & $\mathcal{O}\brr{\varepsilon}$ &  $\widetilde{\mathcal{O}}\brr{\frac{\abs{\S}^4\abs{\A_{\mathrm{max}}}^5}{(1-\gamma)^{12} \varepsilon^8}}$  & None &  \greentick \\ \hline 
    \end{tabular}}
\end{table}
\renewcommand{\arraystretch}{1.0} 

 A limitation of \citet{tang2024multiagentimitationlearningvalue} is that they provide an error propagation analysis only. While their analysis has the advantage of suggesting meaningful losses that can be minimized to ensure small $\mathrm{Nash}\text{-}\mathrm{Gap}$, it falls short in characterizing the amount of expert samples needed to learn a $\varepsilon$-NE from expert data. Moreover, their BLADES and MALICE algorithms have computational complexity that scales exponentially with the number of states in the game due to their for loops over the set of all possible deviations. This set has cardinality exponential in $\abs{\S}$.

This work presents the first theoretical analysis of sample complexity in MAIL, and notably, it achieves this without exponential dependencies. Specifically, our contributions are as follows:
\begin{enumerate}
    \item We provide a sample complexity analysis for BC, revealing the emergence of an \emph{all deviation concentrability coefficient} (\cref{thm:BC}).
    \item We formally separate MAIL from SAIL, proving in \cref{thm:lowerbound} that even with fully known transitions, for any non-interactive imitation learning algorithm (like BC) there exists a Markov Game with infinite single deviation concentrability coefficient where the Nash Gap remains constant even with infinite expert data.
    \item On the positive side, we show that the dependence on the concentrability coefficient can be avoided if an interactive expert is available. In particular, assuming access to a Best Response Oracle, we propose an algorithm that achieves an $\varepsilon$-NE with $\mathcal{O}(\varepsilon^{-4})$ expert queries and oracle calls (\cref{alg:name}). 
    \item Additionally, we develop an algorithm that avoids the Best Response oracle and the concentrability coefficient simultaneously, achieving an $\varepsilon$-NE with $\mathcal{O}(\varepsilon^{-8})$ expert queries. Moreover, the algorithm is computationally efficient. Its design is based on the novel principle of maximum uncertainty response.
\end{enumerate}

For clarity, we report a comparison of our results with existing MAIL algorithms in \Cref{tab:literature_MAIL}.

\section{Preliminaries}
\label{sec:Preliminaries}
We start by formalizing the concept of Two-Player Zero-Sum Markov Games. Then, we define the imitation learning settings considered in this work dubbed interactive and non-interactive respectively.
\paragraph{Two-Player Zero-sum Markov Game.}
An infinite-horizon two-player zero-sum Markov game is defined by the tuple $\gG = (\gS, \gA,\gB, P, r, \gamma, \initial),$ where $\gS$ is the finite (joint-)state space, $\gA$ is the finite action space of the first player, $\gB$ is the finite action space of the second player, $P \in \mathbb{R}^{\abs{\gS}\abs{\gA}\abs{\gB} \times \abs{\gA}}$ is the (unknown) transition function, $r \in [-1,1]^{\abs{\gS}\abs{\gA}\abs{\gB}}$ the reward vector, a discount factor $\gamma \in [0,1)$ and $\initial$ a distribution over the state space from which the starting state is sampled. In a zero-sum Markov Game there is one player trying to maximize the rewards and one player aims to minimize the rewards. We assume that the first player is maximizing the reward and the second player aims to minimize it. It holds that $r^1(s,a,b) = -r^2(s,a,b) \quad \forall (s,a,b) \in \gS \times \gA \times \gB.$ Therefore, we can omit the superscript in the reward and simply refer to the reward as $r$. We define a policy of player 1 as $\mu:\gS \to \Delta_\gA$ and the policy of player 2 as $\nu:\gS \to \Delta_\gB$, where $\Delta$ is the probability simplex over the finite action spaces $\gA$ and $\gB$, respectively. Next, we define the value function for a given state $s \in \gS$ and the state-action value function for a given state $s \in \gS$ and joint actions $(a,b) \in \gA \times \gB$ for a given policy pair $(\mu,\nu)$. To this end, let us denote by $\bcc{S_t, A_t, B_t}^{\infty}_{t=0}$  the stochastic process generated by the interaction of the policy pair $(\mu,\nu)$ in the Markov Game, then we can define the value functions as follows $V^{\mu,\nu}(s):= \expect_{\mu,\nu}\ls\sum_{t=0}^{\infty} r(S_t, A_t, B_t) \mid S_0=s\rs$ and $Q^{\mu,\nu}(s,a,b):= \expect_{\mu,\nu}\ls\sum_{t=0}^{\infty} r(S_t, A_t, B_t) \mid S_0=s, A_0=a, B_0=b \rs.$

Additionally, we define the state visitation probability induced by a policy pair $(\mu,\nu)$ as $d^{\mu,\nu}(s'):= (1-\gamma) \expect_{\mu,\nu}\ls\sum_{t=0}^{\infty} \gamma^t \mathbf{1}_{\{S_t=s'\}} \mid s_0 \sim \initial\rs$ . If one player's policy is fixed, then the Markov Game induces a Markov decision process (MDP). Assuming that player~2 fixes their strategy, the induced transition function for a given state-action pair $(s,a) \in \gS \times\gA$ to new state $s' \in \gS $ is given by $P_{\nu}(s' \mid s,a):= \sum_{b \in \gB} \nu(b \mid s) P(s' \mid s,a,b).$ It is analogously defined if the policy of player~1 is fixed. Additionally, for a fixed strategy of the opponent player, we define the best response set as
$\mathrm{br}(\nu)  = \argmax_{\mu \in \Pi} \innerprod{\initial}{V^{\mu,\nu}}$ and $\mathrm{br}(\mu) = \argmin_{\nu \in \Pi} \innerprod{\initial}{V^{\mu,\nu}},$ respectively, where $\Pi$ denotes the set of all possible policies and $\mu^{\star}, \nu^{\star}$ as elements of these sets. It is important to note that the best response may not be unique, but the value is. A pair of policies is called a Nash equilibrium if both policies are best responses to each other. Last, we introduce the \emph{Nash gap}, which measures how close a given policy pair $(\mu, \nu)$ is to a NE:
\begin{equation}
\label{eq:Nash_gap}
   \Nashgap(\mu,\nu):= \innerprod{\initial}{V^{\mu^{\star}, \nu} - V^{\mu,\nu^{\star}}}.
\end{equation}
The Nash-Gap has the desirable property, that $\Nashgap(\mu,\nu)=0,$ if $(\mu,\nu)$ is a NE and $\Nashgap(\mu,\nu)>0,$ otherwise.
\paragraph{Non-interactive Multi-Agent Imitation Learning.} 
In \emph{non-interactive} MAIL, the learner observes a dataset $\gD := \{\tau_k\}_{k=1}^{N}$ containing $N$ trajectories collected in the two-player zero-sum Markov Game, where the actions are sampled from the NE expert policy pair $(\muE,\nuE).$ For each trajectory $\tau_k$, a random length $H\sim \mathrm{Geo}(1-\gamma)$ is sampled and then the sequence of states and (joint-)actions up to time $H$ are saved, i.e. $\tau_k := \{(s_t,a_t,b_t)\}_{t=1}^H.$ 
After such dataset is collected, the learner can no longer collect new expert data. For this reason, we refer to this setting as non-interactive. 
Moreover, the learner might know the transition function of the Markov Game.
The learner's goal is to adopt an algorithm $\mathrm{Alg}$ that takes as input $\mathcal{D}$, and outputs a pair of policies $(\widehat{\mu}, \widehat{\nu})$ such that $\mathbb{E}_{\mathrm{Alg}}\bs{\Nashgap(\widehat{\mu}, \widehat{\nu})}< \varepsilon.$

\paragraph{Interactive Multi-Agent Imitation learning.}
In \emph{interactive} MAIL, there is no initial dataset $\mathcal{D}$. The learner interacts with the environment for a certain number of rounds. 
At each round, the learner can collect a trajectory with a chosen policy pair and decide to query the expert at the visited states.
The learner's goal is to adopt an algorithm $\mathrm{Alg}$ that after $\mathrm{poly}(\varepsilon^{-1})$ main expert queries outputs a pair of policies $(\widehat{\mu}, \widehat{\nu})$ such that $\mathbb{E}_{\mathrm{Alg}}\bs{\Nashgap(\widehat{\mu}, \widehat{\nu})}< \varepsilon.$ 
Compared to the non-interactive setting, the expert can be queried during learning.

In the following, we present the theoretical results concerning the two above settings. In the next section, we study the non-interactive setting.

\section{On the sample complexity of Multi-Agent Behavior Cloning}
In this section, we give our first result, which concerns the sample complexity of Behavior Cloning.

Interestingly, our upper bound depends on a novel quantity $\mathcal{C}_{\max}\in \mathbb{R}$ dubbed \emph{all policy deviation concentrability} coefficient, which is an infinite norm ratio between the occupancy distributions related to the notion of data coverage assumptions needed in Offline Zero-Sum Markov Games \citep{cui2022offlinetwoplayerzerosummarkov,zhong2022pessimisticminimaxvalueiteration} and concentrability coefficients in approximate dynamic programming \citep{scherrer2012approximate,Geist:2019,Vieillard:2020}.
Contrary to the analysis of \cite{tang2024multiagentimitationlearningvalue}, we do not require that the occupancy measure of the equilibrium policy pair used to collect $\mathcal{D}$ is lower bounded by $\beta$. Therefore, our analysis also applies to the realistic setting where some states have zero probability to appear in $\mathcal{D}$. 

We conclude this section with a lower bound inspired by the construction of~\citet{tang2024multiagentimitationlearningvalue}, which separates Multi-Agent Imitation Learning from Single-Agent Imitation Learning, showing the necessity of the concentrability coefficient in the multi-agent non-interactive setting even with a known transition model. 
\looseness=-1

\paragraph{Behavioral cloning in Markov Games.}
In the context of Markov games, BC aims to recover a pair of policies $(\widehat{\mu}, \widehat{\nu})$ from expert demonstrations $\gD$ based on maximum likelihood estimation. Formally, we have that $
  (\widehat{\mu}, \widehat{\nu}) = \argmax_{(\mu, \nu)} \sum_{\tau \in \gD} \log \lp \sP (\tau; \mu, \nu) \rp,$
where $\sP (\tau; \mu, \nu) = \initial (s_0) \prod_{h=1}^{H} \mu (a\mid s) \nu (b \mid s) P(s' \mid s, a, b)$ is the probability of generating trajectory $\tau$ under policies $(\mu, \nu)$, where $H \sim \mathrm{Geo}(1-\gamma)$. In the tabular set-up, we can obtain the closed-form solution of the above optimization problem $\widehat{\mu} (a\mid s) = \frac{N(s, a)}{N(s)},$ if $N(s) > 0$ and $\widehat{\mu} (a\mid s) = \frac{1}{\abs{\gA}}$ otherwise. Similarly, this holds for $\widehat{\nu}(b \mid s)$ by replacing $N(s,a)$ with $N(s,b)$. Here $N(s, a), N(s,b)$ and $N(s)$ denote the number of times that state-action pair $(s, a), (s, b)$ and state $s$ appear in $\gD$.

Now, we can state our result for the upper bound of Behavior Cloning when minimizing the Nash Gap \eqref{eq:Nash_gap}. We give a proof sketch below the theorem and the full proof can be found in \cref{sec:BC_proof}.

\begin{theorem}
\label{thm:BC}
Let $(\mu^E, \nu^E)$ denote a Nash equilibrium policy pair in a two-player zero-sum Markov game, and let $\gD$ contain trajectories from this expert policy pair. Let $(\widehat{\mu}, \widehat{\nu})$ be the policies obtained via Behavior Cloning from $\gD$ of size $N$. Then, with probability at least $1-\delta$, it holds: 
\begin{align*}
    \Nashgap(\widehat{\mu},\widehat{\nu}) \leq \gC_{\max} \frac{8}{(1-\gamma)^2} \sqrt{\frac{\abs{\gS}\abs{\gA_{\max}}  \log^2 (2\abs{\gS} /\delta)}{N}},  
\end{align*}
where $C_{\max} = \max_{\mu,\nu} \max \left\{\max_{\nu^\star \in \mathrm{br}(\mu)} \norm{\frac{d^{\muE,\nu^\star}}{\rho}}_{\infty}, \max_{\mu^\star \in \mathrm{br}(\nu)} \norm{\frac{d^{\mu^\star,\nuE}}{\rho}}_{\infty}\right\}$ and we set $\rho = d^{\muE,\nuE}.$
\end{theorem}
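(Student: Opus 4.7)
The plan is to combine a Nash-gap decomposition, the performance difference lemma (PDL) applied in a carefully chosen direction, and a statistical bound for the MLE tabular policies, mirroring the standard single-agent BC analysis but adapted to the two-player setting.

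\textbf{Step 1: Nash-gap decomposition.} Let $\mu^\star \in \mathrm{br}(\widehat{\nu})$ and $\nu^\star \in \mathrm{br}(\widehat{\mu})$. Insert the values $\innerprod{\initial}{V^{\mu^\star,\nuE}}$, $\innerprod{\initial}{V^{\muE,\nuE}}$, and $\innerprod{\initial}{V^{\muE,\nu^\star}}$ into the Nash gap. Of the four resulting telescoped differences, the two inner ones, $\innerprod{\initial}{V^{\mu^\star,\nuE} - V^{\muE,\nuE}}$ and $\innerprod{\initial}{V^{\muE,\nuE} - V^{\muE,\nu^\star}}$, are non-positive by the NE property of $(\muE,\nuE)$ and can be dropped, leaving
\begin{equation*}
\Nashgap(\widehat\mu,\widehat\nu) \leq \innerprod{\initial}{V^{\mu^\star,\widehat\nu} - V^{\mu^\star,\nuE}} + \innerprod{\initial}{V^{\muE,\nu^\star} - V^{\widehat\mu,\nu^\star}}.
\end{equation*}
Each summand now isolates a single-player deviation from the expert, with the other player held at a best response to the learned policy.

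\textbf{Step 2: PDL in the right direction.} Fix $\mu^\star$ and view the game as an induced MDP for the min player. Apply the PDL to $V^{\mu^\star,\nuE} - V^{\mu^\star,\widehat\nu}$ rather than its opposite, so that the occupancy surfaces as $d^{\mu^\star,\nuE}$ (expert-containing). Combining with $\norm{Q}_\infty \leq (1-\gamma)^{-1}$ and $\norm{\cdot}_1 = 2\TV(\cdot,\cdot)$ gives
\begin{equation*}
\innerprod{\initial}{V^{\mu^\star,\widehat\nu} - V^{\mu^\star,\nuE}} \leq \frac{2}{(1-\gamma)^2}\,\expect_{s\sim d^{\mu^\star,\nuE}}\bs{\TV\brr{\widehat\nu(\cdot\mid s),\nuE(\cdot\mid s)}},
\end{equation*}
and symmetrically for the second summand, with $d^{\muE,\nu^\star}$ and $\TV(\widehat\mu(\cdot\mid s),\muE(\cdot\mid s))$. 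A change of measure to $\rho = d^{\muE,\nuE}$ in each expectation surfaces precisely the two Radon--Nikodym ratios whose worst case over $\mu^\star,\nu^\star$ defines $\gC_{\max}$, upper-bounding both terms by $\tfrac{2\gC_{\max}}{(1-\gamma)^2}\expect_{s\sim\rho}\bs{\TV(\widehat\pi(\cdot\mid s),\piE(\cdot\mid s))}$ for the appropriate pair $(\widehat\pi,\piE) \in \{(\widehat\mu,\muE),(\widehat\nu,\nuE)\}$.

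\textbf{Step 3: Statistical bound on the weighted TV.} Each of the $N$ trajectories has independent $\mathrm{Geo}(1-\gamma)$ length, so the empirical state and state-action counts are unbiased estimates of $N\rho(s)$ and $N\rho(s)\piE(\cdot\mid s)$. A standard MLE-on-multinomials argument (Weissman-type $L_1$-deviation on each conditional, with a union bound over states) yields, with probability at least $1-\delta$,
\begin{equation*}
\expect_{s\sim\rho}\bs{\TV\brr{\widehat\pi(\cdot\mid s),\piE(\cdot\mid s)}} \leq \mathcal{O}\brr{\sqrt{\frac{\abs{\gS}\abs{\gA_{\max}}\log^2(2\abs{\gS}/\delta)}{N}}},
\end{equation*}
for both pairs $(\widehat\pi,\piE)$. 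Plugging back into Step 2 and absorbing constants recovers the stated bound.

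\textbf{Main obstacle.} Step 3 is the delicate one: on states with zero expert visits, $\widehat\pi$ defaults to uniform and the per-state TV can be up to $1$. The change of measure in Step 2 does not repair this, because a state rarely visited by $(\muE,\nuE)$ can still be visited (up to a factor $\gC_{\max}$) by some deviation pair $(\mu^\star,\nuE)$ or $(\muE,\nu^\star)$. The standard remedy is a careful binning of states by expert-visit count: the low-count bin contributes negligibly because $\rho$ itself puts little mass there, while the well-sampled bin is controlled by Bernstein-type concentration; combining the two regimes after a union bound over states is what produces the $\log^2$ factor inside the square root. Full bookkeeping of this step is deferred to \cref{sec:BC_proof}.
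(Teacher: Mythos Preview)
Your Steps 1 and 2 match the paper's proof exactly: the same Nash-gap decomposition exploiting the NE property of $(\muE,\nuE)$, and the same choice of direction in the performance difference lemma so that the occupancy is $d^{\mu^\star,\nuE}$ (resp.\ $d^{\muE,\nu^\star}$), followed by the change of measure to $\rho=d^{\muE,\nuE}$. One small point you glide over: since $\mu^\star \in \mathrm{br}(\widehat\nu)$ depends on the random $\widehat\nu$, the paper explicitly enlarges the maximum to $\max_{\nu\in\Pi}\max_{\mu\in\mathrm{br}(\nu)}$ before invoking $\gC_{\max}$; this is a trivial upper bound, but worth stating so that the concentrability factor is a deterministic constant.

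For Step 3, your binning route can be made to work, but the paper takes a cleaner path that avoids any case analysis on visit counts. It applies the per-state Weissman bound uniformly with $\max\{N(s),1\}$ in the denominator (so zero-visit states cause no issue for the inequality itself), then Cauchy--Schwarz on $\sum_s \rho(s)\sqrt{c/\max\{N(s),1\}}$, leaving $\sum_s \rho(s)/\max\{N(s),1\}$ inside the root. Each summand is then controlled by the elementary binomial bound $p/(N\vee 1) \le 8\log(1/\delta)/n$ (multiplicative Chernoff, not Bernstein). The $\log$ from Weissman and the $\log$ from this step combine to give the $\log^2$ in the final bound---no separate low-count vs.\ high-count treatment is needed. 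Your diagnosis of the obstacle is correct, but the remedy is simpler than you anticipated.
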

\begin{proofsk}
    In the first step of the proof, we add and subtract the value function of the Nash equilibrium expert. Additionally, we use the definition of the Nash equilibrium, in particular that the policies are best responses to each other, to upper bound it by replacing it with the best responding policies to $\widehat{\mu}$ and $\widehat{\nu}$ respectively. 
    \begin{align*}
   V^{\mu^{\star}, \widehat{\nu} }(s_0) -   V^{ \widehat{\mu}, \nu^{\star}}(s_0) 
   &\leq \underbrace{V^{\mu^{\star}, \widehat{\nu} }(s_0) - V^{\mu^{\star},\nuE}(s_0)}_{:=\mathrm{Error}(\widehat{\nu})} + \underbrace{V^{\muE,\nu^{\star}}(s_0) - V^{ \widehat{\mu}, \nu^{\star}}}_{:=\mathrm{Error}(\widehat{\mu})},
\end{align*}
where $\mu^{\star} \in \mathrm{br}(\widehat{\nu}), \nu^{\star} \in \mathrm{br}(\widehat{\mu}).$
Next, we can upper bound the two error terms separately. Note that the error terms each share one fixed policy, therefore, we can apply a version of the performance difference lemma, the triangle inequality, and fix one best response for each player to obtain
\begin{align}
\label{eq:changeofmeasure}
        &\mathrm{Error}(\widehat{\nu}) \leq \frac2{1-\gamma} \max_{\mu \in \mathrm{br}(\widehat{\nu})}\expect_{ \mu, \nuE} \ls \sum_{t=0}^{\infty} \gamma^t \TV \lp \nuE (\cdot \mid s), \widehat{\nu} (\cdot \mid s) \rp \rs.
\end{align}
We note that $\max_{\mu \in \mathrm{br}(\widehat{\nu})}$ contains the random variable $\hat{\nu}$ but can be upper bounded with the best response to with respect to the general policy class $ \max_{\nu \in \Pi}\max_{\mu \in \mathrm{br}(\nu)}$. Last, we do a change of measure to get the expectation with respect to the expert policy pair. Then, we bound the ratio of the state visitation distribution and bound the expectation with concentration inequalities to obtain with probability of at least $1-\delta$
\begin{align*}
&V^{\mu^{\star} , \widehat{\nu} }(s_0) - V^{\mu^{\star} ,\nuE}(s_0) \leq \frac8{(1-\gamma)^2} \max_{\nu \in \Pi}\max_{\mu \in \mathrm{br}(\nu)}\lnorm \frac{d^{\mu, \nuE} }{d^{\muE, \nuE} } \rnorm_{\infty} \sqrt{\frac{\abs{\gS}\abs{\gB}  \log^2 (2\abs{\gS} /\delta) }{N}}. \\
\end{align*}
Analogous calculations for the second player complete the proof. The full proof is given in \Cref{sec:BC_proof}.\looseness=-1
\end{proofsk}

We now discuss several important implications of the derived theorem, particularly focusing on the quantity $\gC_{\max}$, referred to as the \emph{all policy deviation concentrability} coefficient (see, e.g., \citep{cui2022offlinetwoplayerzerosummarkov, zhong2022pessimisticminimaxvalueiteration}). Intuitively, the theorem indicates that if a best response against a possible recovered policy shifts the support of the state visitation distribution away from the one induced by the observed Nash equilibrium, the corresponding objective becomes unbounded.
\begin{remark}
While restricting, this requirement is weaker than a uniform lower bound on the equilibrium state occupancy measure assumed by \cite{tang2024multiagentimitationlearningvalue}, that is $d^{\mu_E,\nu_E} \geq \beta$. In particular, it always holds that $\mathcal{C}_{\max}\leq \beta^{-1}$.
\end{remark}

Next, we introduce a lower bound on $\gC_{\max}$, that intuitively describes how different the coverage of different Nash equilibrium policies can be. Formally, we introduce $\gC(\muE,\nuE) := \max \left\{\max_{\mu \in \mathrm{br}(\nuE)} \lnorm\frac{d^{\mu, \nuE}}{d^{\muE, \nuE}}\rnorm_{\infty}, \max_{\nu \in \mathrm{br}(\muE)} \lnorm\frac{d^{\muE, \nu}}{d^{\muE, \nuE}}\rnorm_{\infty}\right\},$ which we refer to as the \emph{expert policy deviation concentrability} coefficient. Clearly, we have that $\gC_{\max} \geq \gC(\muE,\nuE)$ as the latter only consider the best responses to the Nash equilibrium policies.

Unfortunately, we show that even for this smaller quantity there exists a zero-sum Markov game in which $\mathcal{C}(\muE,\nuE)$ is unbounded, and we show that in such a game, no non-interactive algorithm can recover a Nash profile even with an infinite amount of data. We present this result in the next section. Note, that it remains an open question if it is possible to have a bounded $\gC(\muE,\nuE)$ but an unbounded $\gC_{\max}.$ We discuss this further in \cref{sec:conclusion}.

The observations are similar in spirit to those obtained in the offline setting \citep{cui2022offlinetwoplayerzerosummarkov,zhong2022pessimisticminimaxvalueiteration}. In these works, the authors derive a lower bound that shows the necessity of a \emph{unilateral concentration} assumption to minimize the Nash gap. However, their construction does not apply to the Imitation Learning setting.
\subsection{Necessity of $\mathcal{C}(\mu_E,\nu_E)$ in non-interactive MAIL}
\label{sec:necessity}
In this section, we provide the negative result, that a Markov Game exists, such that the expert deviation concentrability coefficient of \cref{thm:BC} is unbounded.

The first hardness results to minimize the Nash Gap in Multi-Agent Imitation Learning were derived by ~\citet[Thm.~4.3]{tang2024multiagentimitationlearningvalue}. Next, we will give a stronger result, showing that even in the case of full knowledge of the transition model and perfect recovery of the state visitation distribution of the expert, the Nash gap is of the order $(1-\gamma)^{-1}$. A detailed discussion on the difference between the following result and the one obtained by \citet[Thm.~4.3]{tang2024multiagentimitationlearningvalue} can be found in \cref{sec:comparisontang}. An illustration of the Zero-Sum Markov game can be found in \cref{fig:lowerbound} and the full proof in \cref{sec:proof_necessity}.
\begin{theorem}[Construction of MG]
    \label{thm:lowerbound}
    For any learning algorithm $\mathrm{Alg}$ in the non-interactive imitation learning setting, there exists a zero-sum Markov game with $\mathcal{C}(\muE,\nuE) = \infty$ such that the output policies $\hat{\mu},\hat{\nu}$ satisfy $\mathbb{E}_{\mathrm{Alg}}\bs{\innerprod{\initial}{V^{\mu^{\star} , \widehat{\nu} } -   V^{ \widehat{\mu}, \nu^{\star} }}} \geq (1-\gamma)^{-1}$. 
    The result continues to hold even if $\mathrm{Alg}$ is aware of the transition dynamics of the game.
\end{theorem}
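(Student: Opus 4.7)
The approach is a two-game indistinguishability argument inspired by the construction of~\citet{tang2024multiagentimitationlearningvalue}. I would construct a pair of zero-sum Markov games $\gG_1,\gG_2$ sharing state and action spaces, transition kernel, and initial distribution, differing only in the rewards at a designated \emph{off-equilibrium} state $s_D$. The construction is arranged so that the same policy pair $(\muE,\nuE)$ is a Nash equilibrium in both games and the equilibrium occupancy satisfies $d^{\muE,\nuE}(s_D)=0$. Consequently, the expert trajectory distribution is identical across $\gG_1$ and $\gG_2$, so any non-interactive algorithm $\mathrm{Alg}$ (even one that knows the transitions) produces output policies $(\hat\mu,\hat\nu)$ with the same joint law in both games.

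Next, I would arrange the transitions so that some alternative best response $\mu\in\mathrm{br}(\nuE)$ routes through $s_D$ with positive probability, which yields $d^{\mu,\nuE}(s_D)>0$ and therefore $\mathcal{C}(\muE,\nuE)=\infty$. Concretely, this can be done by making player~1 indifferent at the initial state between an on-path action and a deviation action whose trajectory visits $s_D$, keeping both actions as best responses to $\nuE$. The rewards at $s_D$ are then chosen so that the Nash-optimal action of player~2 at $s_D$ is $b_1^\star$ in $\gG_1$ and $b_2^\star\neq b_1^\star$ in $\gG_2$. Since $\hat\nu(\cdot\mid s_D)$ has the same law in both games, there is at least one $\gG_i$ in which $\hat\nu(b_i^\star\mid s_D)\leq 1/2$. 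In that game the best response $\mu^\star\in\mathrm{br}(\hat\nu)$ strictly prefers the deviation reaching $s_D$ and harvests a per-step exploitation payoff of constant order. Making $s_D$ absorbing multiplies this payoff by the discounted horizon length, so $\Nashgap(\hat\mu,\hat\nu)\gtrsim (1-\gamma)^{-1}$ in the bad game. Averaging over the two games via a Yao-style pigeonhole yields a single game on which $\mathrm{Alg}$ suffers expected Nash gap at least $(1-\gamma)^{-1}$, establishing the claim.

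\textbf{Main obstacle.} The delicate part is engineering the Markov game so that three properties hold simultaneously: (i) $(\muE,\nuE)$ is a genuine Nash equilibrium in both $\gG_1$ and $\gG_2$ despite their differing off-path rewards, which requires carefully balancing the continuation values so that no player can profitably deviate on the equilibrium path; (ii) some alternative best response must visit $s_D$, so that $\mathcal{C}(\muE,\nuE)=\infty$ and the conclusion is non-vacuous; and (iii) the Nash-optimal action at $s_D$ flips across the two games, so that no single learner action there can be near-optimal in both. Because $\mathrm{Alg}$ may know the transition kernel, the two games are forced to be reward-indistinguishable on the entire equilibrium path, which is precisely what makes the construction hinge on the unvisited state $s_D$.
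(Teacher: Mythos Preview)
Your plan has a genuine gap at the pigeonhole step. You require $(\muE,\nuE)$ to be a Nash equilibrium \emph{simultaneously} in $\gG_1$ and $\gG_2$, and you also require an alternative best response $\mu\in\mathrm{br}(\nuE)$ to visit $s_D$ (so that $\mathcal{C}(\muE,\nuE)=\infty$). Together these force player~1 to be \emph{indifferent} between the on-path action and the deviation reaching $s_D$ when facing $\nuE$, in both games. But then any learner that outputs $\hat\nu(\cdot\mid s_D)=\nuE(\cdot\mid s_D)$ inherits this indifference: player~1's best response against $\hat\nu$ gains nothing by routing through $s_D$, and the Nash gap is zero in both $\gG_1$ and $\gG_2$. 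Your pigeonhole only gives $\hat\nu(b_i^\star\mid s_D)\le 1/2$, not strict inequality, and equality is exactly the hedging learner (e.g.\ uniform over $\{b_1^\star,b_2^\star\}$ when the two games are symmetric). Since $\nuE(\cdot\mid s_D)$ is a fixed distribution that you chose in your construction, nothing stops an algorithm from outputting it; the lower bound must hold for \emph{every} algorithm, including this one.

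The paper avoids this by using a family of $\abs{\gA}\ge 3$ games rather than two. In game $\gG_i$ the only non-exploitable action at the unvisited state $s_1$ is the pure action $a_i$ (respectively $b_i$), and this differs across games. No single distribution $\hat\mu(\cdot\mid s_1)$ can put mass near~1 on all of $a_1,\dots,a_{\abs{\gA}}$ at once; indeed $\sum_i(1-\hat\mu(a_i\mid s_1))=\abs{\gA}-1$, so averaging over the family gives per-game exploitability at least $\frac{\abs{\gA}-1}{\abs{\gA}}\cdot(1-\gamma)^{-1}$ for each player, and summing both players yields the stated $(1-\gamma)^{-1}$. The essential point you are missing is that the adversarial family must be rich enough that the Nash-safe policies at the unvisited state are mutually incompatible across games; two games with a shared equilibrium profile cannot achieve this.
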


This theorem illustrates a fundamental limitation of BC in zero-sum Markov games. Specifically, it reveals that even perfect recovery of the Nash expert's state visitation distribution, along with complete knowledge of the transition model, is insufficient for minimizing the Nash gap. The key insight is that a Nash equilibrium only guarantees robustness against \emph{unilateral} deviations. As a result, regions of the Markov game that require \emph{joint} deviations to be visited may remain underexplored by the expert, leaving the learner vulnerable in those regions. This can be seen in \cref{fig:lowerbound}, if the learner has a (jointly) inaccurate policy in state $s_1$, the best response of the agents can change \textcolor{blue}{the expert path} to exploit the opponent in \textcolor{red}{the red path of the Markov Game} and \textcolor{greenp}{the green one} respectively. Notably, this phenomenon persists even when the transition model is known. This can be seen as $\textcolor{red}{S_{\mathrm{xplt1}}}, \textcolor{greenp}{S_{\mathrm{xplt2}}}$ and $S_{\mathrm{copy}}$ are sets of states, and each action combination leads to a different unique state, i.e. $\abs{\textcolor{red}{S_{\mathrm{xplt1}}} \cup \textcolor{greenp}{S_{\mathrm{xplt2}}} \cup S_{\mathrm{copy}}} = \abs{\gA}\abs{\gB}$. This highlights the necessity of \emph{interactive} Imitation Learning to explore strategically important but unobserved regions of the state space.

This issue marks a critical distinction between multi-agent and single-agent imitation learning. In single-agent settings, BC suffices to achieve a good performance \citep{rajaraman2020toward, rajaraman2021value, NEURIPS2024_da84e39a}.
 
Moreover, it is important to notice that in the construction used by \citet[Thm.~4.3]{tang2024multiagentimitationlearningvalue}, knowledge of the transition model enables learners to steer toward expert-visited trajectories. In contrast, our result establishes a hardness construction in the zero-sum setting showing that the guidance provided by transition knowledge is insufficient.

\begin{figure}[h!]
    \centering

\begin{tikzpicture}[>=stealth',shorten >=1pt,auto,node distance=2.1cm]
  \node[state] (s0) [draw=blue] [text=blue] {$s_0$};
  \node[state] (s1) [above right of=s0] {$s_1$};
  \node[state] (s2) [draw=blue] [below right of=s0] [text=blue]{$s_2$};
  \node[state] (s_expl_1) [draw=red][above right of=s1] [text=red]{$S_{\mathrm{xplt1}}$};
  \node[state] (s_expl_2) [draw=greenp][below right of=s1] [text=greenp]{$S_{\mathrm{xplt2}}$};
  \node[state] (s4') [right=2.5cm of s1] {$S_\mathrm{copy}$};
  \node[state] (s4)[draw=blue] [right of=s2] [text=blue]{$s_3$};
  \path[->] 
    (s0) edge node {else} (s1)
    (s0) edge [draw=blue] node[text=blue] {$a_3b_3$} (s2)
    (s1) edge[draw=red] node[left, near end] [text=red]{$a_1b_1, a_2b_2$} (s_expl_1)
    (s1) edge[draw=greenp] node[text=greenp] {$a_2b_1,  a_1b_2$} (s_expl_2)
    (s1) edge node {else} (s4')
    (s2) edge [draw=blue] node[text=blue] {all} (s4)
    
    (s4) edge [loop right, draw=blue] node[text=blue] {all} (s4)
    
    (s_expl_1) edge [loop right, draw=red] node[text=red] {all} (s_expl_1)
    
    (s4') edge [loop right] node {all} (s4')
    
    (s_expl_2) edge [loop right, draw=greenp] node[text=greenp] {all} (s_expl_2);
\end{tikzpicture}
    \caption{2 Player Zero-Sum Game with Linear Regret in case of full knowledge of transition.}
    \label{fig:lowerbound}
\end{figure}
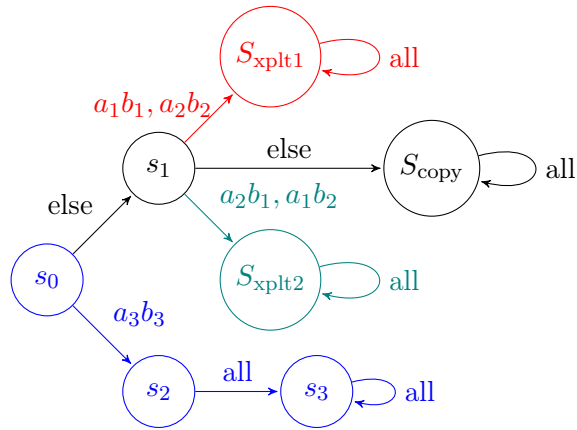

\paragraph{Possible ways to learn under unbounded $\mathcal{C}(\muE,\nuE).$} 
The above theorem makes clear that no algorithm can learn in the non-interactive setting if $\mathcal{C}(\muE,\nuE) = \infty$. We can think of several remedies to this fact.
First, we could require to observe data from the possible strategies in the set of Nash equilibria. In this case, we would encounter a smaller concentrability coefficient which features the average of the equilibria occupancy measures in the denominator.
A second remedy is to move to the interactive MAIL setting which allows the learner to collect reward free trajectories in the Markov Game and query the expert policy pair along the visited states.

Since the former assumption is rarely realistic, we later propose an interactive algorithm (\cref{alg:name}) that actively queries expert demonstrations to reduce the Nash gap of the resulting policy.

\section{Avoiding the single deviation concentrability in interactive MAIL}
In this section, we introduce two algorithms (MAIL-BRO and MURMAIL) designed to avoid dependence on $\mathcal{C}(\muE,\nuE)$ at the cost of moving to the interactive imitation learning setting.

Both of our algorithms address key limitations of the approaches proposed in BLADES and MALICE by~\citet{tang2024multiagentimitationlearningvalue}, as their methods require exponential compute and focus solely on error propagation analysis without providing convergence guarantees for the resulting policies. In contrast, our algorithms are accompanied by both convergence guarantees and polynomial computational cost. To motivate these algorithms, let us briefly revisit the structure of the original proof. In offline BC, we lack data corresponding to the best responses against the estimated expert policies. As a result, it is not feasible to directly estimate the expectation in \cref{eq:changeofmeasure}. To circumvent this, we apply a change of measure at the cost of introducing the all deviation concentrability term.

Our first approach to overcome this limitation is to introduce a \emph{Best Response oracle}, which enables sampling from the distributions $(\mu , \nuE)$ and $(\muE, \nu)$, where $\mu \in \mathrm{br}(\widehat{\nu})$ and $\nu \in \mathrm{br}(\widehat{\mu})$, thereby allowing us to estimate the relevant expectations without incurring the concentrability coefficient. Formally, we have the following definition, also used in previous works (see e.g. \cite{10.1287/opre.2019.1853}).
\looseness=-1
\begin{definition}[Best Response Oracle]
  \label{def:bestresponse}  
  Let $(\widehat{\mu},\widehat{\nu})$ be a pair of policies for a Markov Game $\gG$. Then, a \emph{Best Response Oracle} generates policies $\mu \in \mathrm{br}(\widehat{\nu})$ and $\nu \in \mathrm{br}(\widehat{\mu})$. 
\end{definition}

However, it is not straightforward to use the policies given by the Best Response Oracle. Starting from \eqref{eq:changeofmeasure}, we derive the following optimization problem
\begin{align}
\label{eq:opt_problem}
\min_{\widehat{\mu} \in \Pi}\max_{\nu \in \mathrm{br}(\widehat{\mu})}\expect_{ \muE, \nu} \ls \sum_{t=0}^{\infty} \gamma^t \TV \lp \muE (\cdot\mid s), \widehat{\mu} (\cdot \mid s) \rp \rs.
\end{align}
Even under the assumption of being able to generate samples from $\muE,\nu$, where $\nu \in \mathrm{br}(\widehat{\mu})$, two problems remain. First of all, the optimization problem \cref{eq:opt_problem} is non-convex in $\widehat{\mu}$. Secondly, in order to estimate the minimizer $\widehat{\mu}$, we need to collect data from the occupancy measure of the policy pair $\muE,\nu$ for $\nu \in \mathrm{br}\brr{\widehat{\mu}}$, which depends on the minimizer itself.

To overcome this issue, we make use of the following bound, here only obtained for fixing $\mu_k$:

\begin{equation}
\label{eq:bound_exploit}
\resizebox{\textwidth}{!}{$
\frac{1}{K}\sum_{k=1}^K \innerprod{\initial}{V^{\mu_E,\nu^{\star}_k}  - V^{\mu_k,\nu^{\star}_k}} \leq
\sqrt{\frac{\abs{\A_{max}}}{K (1-\gamma)^2 }\sum_{k =1}^K\mathbb{E}_{s \sim d^{\mu_k,\nu^{\star}_k}} \bs{
\norm{\mu_E(\cdot \mid s) - \mu_k(\cdot \mid s)}^2
}},
$}
\end{equation}
where $\nu^{\star}_k \in \mathrm{br}(\mu_k).$ This expression can be derived via the performance difference Lemma, Cauchy-Schwarz, and eventually Jensen's inequality, and it analogously holds for $\nu_k$ fixed.
The above inequality is crucial for the design of our algorithms in the interactive setting, as shown next.

\subsection{Efficient algorithm with a best response oracle} 
In this section, we present our statistically and computationally efficient algorithm with a best response oracle defined as follows.

\begin{algorithm}[!ht]
\caption{Multi-Agent Imitation Learning with Best Response Oracle (MAIL-BRO)}
\label{alg:broracle}
\KwIn{number of iterations $K$, learning rates $\eta$, BR oracle, initial policies $(\mu_1,\nu_1)$}
\KwOut{$\varepsilon$-Nash equilibrium $(\hat{\mu},\hat{\nu})$}
\SetAlgoLined
\For{$k = 1$ \textbf{to} $K$}{
    \textbf{Update policies:}\;
    Query BR oracle to obtain $\mu_k^{\star} \in \mathrm{br(\widehat{\nu}_k)}, \nu_k^{\star} \in \mathrm{br(\widehat{\mu}_k)}$ \;
    Sample $S^\mu_k \sim d^{\mu_k,\nu_k^{\star}}$, $A^\mu_k \sim \mu_E(\cdot\mid S^\mu_k)$, $S^\nu_k \sim d^{\mu_k^{\star}, \nu_k}$, $A^\nu_k \sim \nu_E(\cdot\mid S^\nu_k)$ \;
    $g^\mu_k(s,a) = \mu_k(a\mid S^\mu_k)\mathds{1}_{S^\mu_k=s} - \mathds{1}_{A^\mu_k = a}$ \;
    $g^\nu_k(s,a) =  \nu_k(a\mid S^\nu_k)\mathds{1}_{S^\nu_k=s} - \mathds{1}_{A^\nu_k = a}$ \;
    $\mu_{k+1}(a \mid s) \propto \mu_k(a\mid s) \exp\brr{ - \eta g^\mu_k(s,a)}$ \;
    $\nu_{k+1}(b \mid s) \propto \nu_k(b\mid s) \exp\brr{ - \eta g^\nu_k(s,a)}$  
}
\Return{$\mu_{\widehat{k}}$, $\nu_{\widehat{k}}$ for $\widehat{k} \sim \mathrm{Unif}([K])$}
\end{algorithm}

With the best response oracle and the bound in~\cref{eq:bound_exploit} in place, we can aim at applying a no-regret algorithm to the loss sequence $\bcc{\mathbb{E}_{s \sim d^{\mu_k,\nu^{\star}_k}} \bs{
\norm{\mu_E(\cdot \mid s) - \mu_k(\cdot \mid s)}^2
}}^K_{k=1}$. Since these losses are not directly observable, MAIL-BRO (see~\Cref{alg:broracle}) at each iteration performs a step of exponential weights updates with a stochastic unbiased gradient denoted by $g^\mu_k$ and $g^\nu_k$ for the two players respectively. These gradient estimates can be shown to have almost surely bounded noise too. 

Exploiting these facts in the analysis of MAIL-BRO, we can attain the following formal result.
\begin{restatable}{theorem}{BestResponseOracle}
\label{thm:br_oracle}
Let us run \cref{alg:broracle} for $K = \mathcal{O}\brr{\frac{\abs{\S}\abs{\A_{\max}}^2 \log \abs{\A_{\max}} \log(1/\delta)}{(1-\gamma)^4 \varepsilon^{4}}} $ iterations  with learning rate $\eta = \frac{2\abs{\S}\log\abs{\A_{\max}}}{K}$. Then, the sequence of policies $\bcc{\mu_k, \nu_k}^K_{k=1}$ satisfies with probability at least $1-5 \delta$ that
$
\frac{1}{K}\sum^K_{k=1} \max_{\mu\in\Pi} \innerprod{\initial}{V^{\mu,\nu_k}} - \min_{\nu\in\Pi} \innerprod{\initial}{V^{\mu_k,\nu}} \leq \mathcal{O}(\varepsilon).
$
Therefore, setting $\delta = \mathcal{O}(\varepsilon)$ ensures that for a certain $\hat{k} \sim \mathrm{Unif}([K])$ it holds that
$
\mathbb{E}\bs{\Nashgap(\mu_{\widehat{k}}, \nu_{\widehat{k}} )} \leq \varepsilon
$. That is, $\mu_{\hat{k}}, \nu_{\hat{k}}$ is an $\varepsilon$-Nash equilibrium in expectation.
\end{restatable}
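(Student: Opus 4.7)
The plan is to bound the time-average $\frac{1}{K} \sum_{k=1}^K \Nashgap(\mu_k, \nu_k)$, which coincides with $\mathbb{E}_{\widehat k \sim \mathrm{Unif}([K])}\bs{\Nashgap(\mu_{\widehat k}, \nu_{\widehat k})}$. For each iterate I would decompose $V^{\mu_k^\star, \nu_k} - V^{\mu_k, \nu_k^\star}$ by adding and subtracting $V^{\mu_k^\star, \nu_E}$ and $V^{\mu_E, \nu_k^\star}$; using the Nash inequalities at the expert pair, $V^{\mu_k^\star, \nu_E} \le V^{\mu_E, \nu_E} \le V^{\mu_E, \nu_k^\star}$, two of the four resulting terms are nonpositive and can be dropped, leaving
\begin{align*}
\Nashgap(\mu_k, \nu_k) \le \bigl(V^{\mu_k^\star, \nu_k} - V^{\mu_k^\star, \nu_E}\bigr) + \bigl(V^{\mu_E, \nu_k^\star} - V^{\mu_k, \nu_k^\star}\bigr).
\end{align*}
Averaging over $k$ and invoking~\eqref{eq:bound_exploit} together with its symmetric $\nu$-counterpart upper bounds the right-hand side by terms of the form $\sqrt{\frac{\abs{\gA_{\max}}}{K(1-\gamma)^2} \sum_k \mathbb{E}_{s \sim d^{\mu_k, \nu_k^\star}}\bs{\norm{\mu_E(\cdot|s) - \mu_k(\cdot|s)}^2}}$, so the task reduces to controlling the cumulative squared $\ell_2$ distance between the iterates and the expert along the current-iterate best-response occupancy.

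The key identity driving the next step is that the stochastic gradients in~\Cref{alg:broracle} are conditionally unbiased for exactly this quantity, e.g.
\begin{align*}
\mathbb{E}\bs{\innerprod{g_k^\mu}{\mu_k - \mu_E} \mid \gF_{k-1}} = \mathbb{E}_{s \sim d^{\mu_k, \nu_k^\star}}\bs{\norm{\mu_k(\cdot|s) - \mu_E(\cdot|s)}^2},
\end{align*}
where $\gF_{k-1}$ denotes the history prior to round $k$. Hence the cumulative $\ell_2$ distance equals, in expectation, the linear-loss ``regret against $\mu_E$'' of the exponential-weights iterates. Because $g_k^\mu$ is supported on the single sampled state $S_k^\mu$ and satisfies $\norm{g_k^\mu}_\infty \le 1$ almost surely, running $\abs{\gS}$ parallel simplex-EW experts yields the deterministic regret bound $\frac{\abs{\gS}\log\abs{\gA_{\max}}}{\eta} + \eta K$, giving $\mathcal{O}\bigl(\sqrt{K\abs{\gS}\log\abs{\gA_{\max}}}\bigr)$ after tuning $\eta$. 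I would then apply Azuma-Hoeffding to the bounded-increment martingale $\sum_k \bigl(\innerprod{g_k^\mu}{\mu_k - \mu_E} - \mathbb{E}_{s \sim d^{\mu_k, \nu_k^\star}}\bs{\norm{\mu_k - \mu_E}^2}\bigr)$ to upgrade the in-expectation identity to a high-probability statement at cost $\mathcal{O}\bigl(\sqrt{K \log(1/\delta)}\bigr)$, and repeat the entire argument symmetrically for $\nu$.

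Chaining all bounds yields $\frac{1}{K}\sum_k \Nashgap(\mu_k, \nu_k) = \mathcal{O}\brr{\frac{(\abs{\gS}\abs{\gA_{\max}}^2 \log\abs{\gA_{\max}})^{1/4}}{K^{1/4}(1-\gamma)}}$ with probability $1 - 5\delta$, matching the claimed iteration complexity for $K$; the five failure events correspond to the two concentration steps per player plus one for the rollout-based sampling of $S_k^\mu, S_k^\nu$. The in-expectation guarantee follows by setting $\delta = \Theta((1-\gamma)\varepsilon)$ and using $\Nashgap \le 2/(1-\gamma)$ on the complement to bound the remaining contribution by $\mathcal{O}(\varepsilon)$. \emph{The main obstacle} is the stochastic EW step: cleanly establishing the conditional-unbiasedness identity, deriving a tight per-state regret bound despite $g_k^\mu$ being state-sparse (only one coordinate block is updated per iteration), and controlling the martingale concentration when the iterates depend on the past randomness. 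A secondary subtlety is that implementing $S_k^\mu \sim d^{\mu_k, \nu_k^\star}$ requires a geometric-horizon rollout that itself interacts with the best-response oracle; this must be folded carefully into the filtration $\gF_{k-1}$ so that the gradient estimator remains conditionally unbiased.
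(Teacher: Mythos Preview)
Your proposal is correct and follows essentially the same route as the paper: decompose the Nash gap via the expert, apply the performance-difference/Cauchy--Schwarz bound~\eqref{eq:bound_exploit}, then control $\sum_k \mathbb{E}_{s\sim d^{\mu_k,\nu_k^\star}}\bigl[\norm{\mu_E-\mu_k}^2\bigr]$ via exponential-weights regret plus Azuma--Hoeffding. Your version is in fact slightly more streamlined: you directly exploit the identity $\mathbb{E}\bigl[\innerprod{g_k^\mu}{\mu_k-\mu_E}\mid\gF_{k-1}\bigr]=\mathbb{E}_{s\sim d^{\mu_k,\nu_k^\star}}\bigl[\norm{\mu_k-\mu_E}^2\bigr]$ (which holds exactly because the loss is quadratic with minimum at $\mu_E$), whereas the paper proceeds in two stages---first sampling $S_k^\mu$ to replace the state expectation by $\norm{\mu_E(\cdot|S_k^\mu)-\mu_k(\cdot|S_k^\mu)}^2$ (one martingale), then linearizing via convexity and replacing the exact gradient $\mu_k-\mu_E$ by its estimate $\mu_k-\mathbf{e}_{A_k^\mu}$ (a second martingale). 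Your accounting of five failure events is slightly off relative to your own argument (you describe only one Azuma step per player, and the geometric rollout that realizes $S_k^\mu\sim d^{\mu_k,\nu_k^\star}$ is exact, so no separate concentration is needed there), but this does not affect correctness or the final $K=\mathcal{O}\bigl(\abs{\S}\abs{\A_{\max}}^2\log\abs{\A_{\max}}\log(1/\delta)/((1-\gamma)^4\varepsilon^4)\bigr)$ rate.
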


The proof can be found in \cref{sec:proof_mailbro}. We observe that, compared to standard BC, the sample complexity now is of the order $\mathcal{O}(\varepsilon^{-4})$, which is worse by a factor of $\varepsilon^{-2}$. However, this trade-off allows us to completely avoid dependence on the all policy deviation concentrability coefficient in the MAIL-BRO upper bound.
That is, MAIL-BRO is able to effectively recover an approximate equilibrium from expert data in a larger class of games.

Unfortunately, assuming a best response oracle might be limiting in some cases. For those cases, we can replace the call to the oracle with the maximum uncertainty responding policy as we explain in the next section.
\subsection{Avoiding the best response oracle thanks to the maximum uncertainty response}
Here, we introduce our algorithm MURMAIL (\Cref{alg:name}) which can be applied in the most general setting where $\mathcal{C}(\muE,\nuE)=\infty$ and the best response oracle is not available. The idea is again to start from \eqref{eq:bound_exploit}, but instead of querying the Best Response Oracle, the objective is upper bounded by the maximum uncertainty policy. It is important to note that the exploration follows in a decentralized way, avoiding the \emph{curse of multi-agents} by exploring induced MDPs instead of the original Markov Game.
\looseness=-1

\begin{algorithm}[!h]
\caption{Maximum Uncertainty Response Multi-Agent Imitation Learning (MURMAIL)}
\label{alg:name}
\KwIn{ number of iterations $K$, learning rates $\eta$, inner iteration budget $T$, initial $(\mu_1,\nu_1)$}
\KwOut{$\varepsilon$-Nash equilibrium $(\hat{\mu},\hat{\nu})$}
\SetAlgoLined
\For{$k = 1$ \textbf{to} $K$}{
    \textbf{Inner Single-Agent RL Updates:}\\
    \textcolor{blue}{\% Maximum uncertainty response to $\mu$-player update} \\
        Define single agent transition  $P_{\mu_k}(s'\mid s,b) = \sum_{a \in \mathcal{A}} \mu_k(a\mid s) P(s'\mid s,a,b)$; \\
        Define single agent stochastic reward
        $R_{\mu_k}(s) \rightarrow \mathds{1}_{\{A_E = A_E'\}} - 2 \mu_k(A_E\mid s) + \| \mu_k(\cdot|s)\|^2$ where $A_E, A_E' \sim \mu_E(\cdot \mid s)$;\\
     $y_k = \texttt{UCBVI}(T, P_{\mu_k}, R_{\mu_k})$;\\
    \textcolor{blue}{\% Maximum uncertainty response to $\nu$-player update} \\
         $P_{\nu_k}(s'|s,a) = \sum_{b \in \mathcal{B}} \nu_k(b|s) P(s' \mid s,a,b)$; \\
        $R_{\nu_k}(s) \rightarrow \mathds{1}_{\{A_E = A_E'\}} - 2 \nu_k(A_E \mid s) + \| \nu_k(\cdot \mid s)\|^2$ where $A_E, A_E' \sim \nu_E(\cdot \mid s)$;\\
     $z_k = \texttt{UCBVI}(T, P_{\nu_k}, R_{\nu_k})$ \\
    \textbf{Update policies:}\\
    Sample $S^\mu_k \sim d^{\mu_k,y_k}$, $A^\mu_k \sim \mu_E(\cdot \mid S^\mu_k)$, $S^\nu_k \sim d^{z_k, \nu_k}$, $A^\nu_k \sim \nu_E(\cdot \mid S^\nu_k)$. \\
    $g^\mu_k(s,a) = \mu_k(a \mid S^\mu_k)\mathds{1}_{S^\mu_k=s} - \mathds{1}_{A^\mu_k = a}$ \\
    $g^\nu_k(s,a) =  \nu_k(a \mid S^\nu_k)\mathds{1}_{S^\nu_k=s} - \mathds{1}_{A^\nu_k = a}$ \\
    $\mu_{k+1}(a \mid s) \propto \mu_k(a\mid s) \exp\brr{ - \eta g^\mu_k(s,a)}$ \;
    $\nu_{k+1}(b \mid s) \propto \nu_k(b\mid s) \exp\brr{ - \eta g^\nu_k(s,a)}$  
}
\Return{$\mu_{\widehat{k}}$, $\nu_{\widehat{k}}$ for $\widehat{k} \sim \mathrm{Unif}([K])$}
\end{algorithm}

If the best response $\nu^{\star}_k \in \mathrm{br}(\mu_k)$ cannot be computed, we can majorize the above quantity by the policy $y_k$ such that
$
y_k \in \argmax_{\nu\in\Pi}\frac{\abs{\A_{\max}}}{K (1-\gamma)^2 }\mathbb{E}_{s \sim d^{\mu_k,\nu}} \bs{
\norm{\mu_E(\cdot \mid s) - \mu_k(\cdot \mid s)}^2
}.$
In words, $y_k$ is the policy that solves a single-agent MDP with reward $\norm{\mu_E(\cdot \mid s) - \mu_k(\cdot \mid s)}^2$ where the opponent keeps the strategy $\mu_k$ fixed and the player with strategy $\nu$ seeks to maximize the probability of visiting \emph{uncertain} states where the uncertainty is captured by $\norm{\mu_E(\cdot\mid s) - \mu_k(\cdot \mid s)}^2$.
This intuition motivates the name \emph{maximum uncertainty response}.

At this point, since both the policies $\mu_k$ and $y_k$ are known, it is possible to roll out such policy pair in the environment and collect data to control $\norm{\mu_E(\cdot \mid s) - \mu_k(\cdot \mid s)}^2$ for states $s$ in the support of $d^{\mu_k,y
_k}$.
Of course, exact computation of $y_k$ is not possible because we know neither the transition dynamics nor $\mu_E$ (which enters the reward function) exactly. However, an approximate solution can be computed, for example, via \texttt{UCBVI}\footnote{or any other algorithm for solving a single agent discounted tabular Markov decision process.} adapted to handle the stochastic nature of the reward and the discounted setting considered in this work.

The following result states the theoretical guarantees for \Cref{alg:name}. A proof can be found in \cref{sec:proof_mailbro}.
\begin{restatable}{theorem}{guaranteesInteractionOnly}
\label{thm:Alg2}
Let us run \Cref{alg:name} for $K = \mathcal{O}\brr{\frac{\abs{\S}\abs{\A_{\max}}^2 \log \abs{\A_{\max}} \log(1/\varepsilon)}{(1-\gamma)^4 \varepsilon^{4}}} $ outer iterations  and $T = \mathcal{O}\brr{\frac{\abs{\S}^3 \abs{\A_{\max}}^3 \log(1/\varepsilon)}{(1-\gamma)^8 \varepsilon^4}}$ inner iterations with learning rate  $\eta = \frac{2\abs{\S}\log\abs{\A_{\max}}}{K}$. Then, for a certain $\hat{k} \sim \mathrm{Unif}([K])$ it holds that
$
\mathbb{E}\bs{\Nashgap(\mu_{\widehat{k}}, \nu_{\widehat{k}} )} \leq \varepsilon
$.
\end{restatable}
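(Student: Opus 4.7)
The plan is to import the entire machinery behind \Cref{thm:br_oracle} and then pay an extra price for replacing the exact best response by the UCBVI approximation of the maximum uncertainty response. Concretely, I would start by applying the Nash gap decomposition already used in the sketch of \Cref{thm:BC} to the pair $(\mu_{\hat{k}},\nu_{\hat{k}})$ with $\hat{k}\sim\mathrm{Unif}([K])$, then use the key bound \eqref{eq:bound_exploit} (and its symmetric version for $\nu$) to reduce the analysis to controlling the two averages
\begin{equation*}
    \frac{1}{K}\sum_{k=1}^K \mathbb{E}_{s\sim d^{\mu_k,\nu_k^{\star}}}\!\bs{\norm{\muE(\cdot\mid s)-\mu_k(\cdot\mid s)}^2} \quad\text{and its $\nu$-analogue},
\end{equation*}
where $\nu_k^{\star}\in\mathrm{br}(\mu_k)$. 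Since these best responses are unavailable, I would majorize each summand by the single-agent optimum $V_{k}^{\star}:=\max_{\nu\in\Pi}\mathbb{E}_{s\sim d^{\mu_k,\nu}}[\norm{\muE(\cdot\mid s)-\mu_k(\cdot\mid s)}^2]$, which is precisely the optimal discounted return of the MDP $(P_{\mu_k},\bar R_{\mu_k})$ where $\bar R_{\mu_k}(s)=\mathbb{E}[R_{\mu_k}(s)]=\norm{\muE(\cdot\mid s)-\mu_k(\cdot\mid s)}^2$ (a quick expansion of the stochastic reward defined in \Cref{alg:name} verifies this unbiasedness).

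The next step is to replace $V_{k}^{\star}$ by $V_k^{y_k}=\mathbb{E}_{s\sim d^{\mu_k,y_k}}[\bar R_{\mu_k}(s)]$ at the cost of an inner-loop error $\Delta_k^{\mathrm{UCB}}:=V_k^{\star}-V_k^{y_k}$. I would invoke a discounted adaptation of the UCBVI guarantee (the reward is bounded in $[-2,2]$ almost surely and is a bounded-variance unbiased estimator of $\bar R_{\mu_k}$, so standard Bernstein-based bonuses work) to show that choosing $T=\widetilde{\mathcal{O}}(\abs{\S}^3\abs{\A_{\max}}^3/((1-\gamma)^8\varepsilon^4))$ is enough to guarantee $\frac{1}{K}\sum_k \Delta_k^{\mathrm{UCB}} \leq \varepsilon^2(1-\gamma)^2/\abs{\A_{\max}}$ with high probability, which, once pushed through the square root of \eqref{eq:bound_exploit}, contributes only $\mathcal{O}(\varepsilon)$ to the Nash gap.

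Once $V_k^{y_k}$ replaces the best-response quantity, the remaining term has exactly the structure handled in the proof of \Cref{thm:br_oracle}: the exponential weights updates driven by the unbiased, almost surely bounded gradient estimators $g_k^\mu, g_k^\nu$ implement online mirror descent against the surrogate loss whose expectation equals $V_k^{y_k}$. The same OMD/regret analysis as in MAIL-BRO (local-norm regret bound of order $\abs{\S}\log\abs{\A_{\max}}/\eta+\eta K$, optimized by the stated $\eta$) combined with a Hoeffding--Azuma argument to pass from expected to realized losses then yields the average regret bound of order $\sqrt{\abs{\S}\abs{\A_{\max}}^2\log\abs{\A_{\max}}/K}/(1-\gamma)^2$. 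Plugging in the prescribed $K$ makes this contribution $\mathcal{O}(\varepsilon)$, and a union bound over the $K$ UCBVI calls and the two OMD martingales (with $\delta=\mathcal{O}(\varepsilon)$) gives the stated in-expectation result.

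\textbf{Main obstacle.} The principal technical hurdle is the UCBVI step: I need a clean, high-probability discounted-infinite-horizon regret bound for UCBVI when the per-step reward is a stochastic, signed estimator of $\norm{\muE(\cdot\mid s)-\mu_k(\cdot\mid s)}^2$ and must ensure that the $T$ chosen uniformly over the $K$ outer iterations suffices, despite the reward function itself depending on the changing iterate $\mu_k$. Handling this cleanly, together with verifying that the effective horizon $(1-\gamma)^{-1}$ enters only polynomially (giving the $(1-\gamma)^{-8}$ factor in $T$), is the key quantitative step that delivers the advertised $\varepsilon^{-8}$ expert query complexity.
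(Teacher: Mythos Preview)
Your proposal is correct and follows essentially the same route as the paper: decompose the average Nash gap via the expert value, apply the performance-difference/Cauchy--Schwarz/Jensen chain to reach the squared-norm averages under $d^{\mu_k,\nu_k^\star}$, majorize these by the single-agent optimum and then by the UCBVI output $y_k$ at an additive $\varepsilon_{\mathrm{opt}}$ cost, and finally reuse the MAIL-BRO martingale plus OMD regret argument on the losses sampled from $d^{\mu_k,y_k}$. The paper's inner-loop analysis uses Hoeffding-type (not Bernstein) bonuses for discounted UCBVI with the stochastic reward of \Cref{lemma:stochastic_reward}, but otherwise your outline matches the paper's proof in \cref{sec:proof_murmail} step for step, including the choice $\varepsilon_{\mathrm{opt}}=\abs{\A_{\max}}^{-1}(1-\gamma)^2\varepsilon^2$ that produces the stated $T$.
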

It is easy to see that since the total number of expert queries is of order $\mathcal{O}(K \cdot T)$, the total number of expert queries to achieve an $\varepsilon$-approximate Nash equilibrium in expectation is $ \widetilde{\mathcal{O}}\brr{\frac{\abs{\S}^4 \abs{\A_{\max}}^5}{ (1-\gamma)^{12}
 \varepsilon^{8}}} $.

Again, notice that there is no concentrability requirement in the upper bound and that the result is achieved without the need to call a best response oracle. This comes at the cost of a worse sample complexity bound but is applicable to a larger class of games, even in those where a best response oracle is not available.

\begin{remark}\label{rem:n_extension}
    Note that our algorithms scale with $\mathrm{poly}(\abs{\A_{\max}})$. While this may appear suboptimal in the two-player zero-sum setting, it is important to emphasize that the underlying algorithms support decentralized execution. In particular, in \cref{alg:broracle}, the dependence on $\A_{\max}^2$ does not stem from the two-player structure, but rather from the reformulation of the objective necessary to obtain an unbiased estimator for the gradient update. Similarly, the $\abs{\A_{\max}}^5$ dependence in \cref{alg:name} arises from the squared objective and the RL inner loop. Crucially, in this inner loop, each agent solves a single-agent MDP, ensuring that the algorithm remains fully decentralized. Altogether, these observations indicate that our algorithms scale linearly with $\abs{\A_{\max}}$ and \textbf{do not suffer from the curse of multi-agents} in the $n$-player setting. A sketched version for $n$-player general-sum games can be found in \cref{sec:n_extension}.
\end{remark}

\section{Numerical Validation}
\label{sec:numerical_validation}
In this section, we provide a numerical evaluation of our proposed algorithms in the Markov Game considered in the lower bound construction (\cref{fig:lowerbound}) as this environment allows us to control $\gC(\muE,\nuE)$ by considering different convex combinations of the two pure Nash equilibria profiles (i.e., the black and the blue path in \Cref{fig:lowerbound}). This environment serves as a proof of concept to demonstrate the practical feasibility of our methods. In particular, we aim to highlight that the performance of BC depends on the concentrability coefficient $\gC(\muE,\nuE)$ even when it is bounded, and completely fails when $\gC(\muE,\nuE) = \infty$. Note that in all considered cases, we have that $\beta = 0$ and therefore the BC bound proven in \citet{tang2024multiagentimitationlearningvalue} would always be vacuous, while \cref{thm:BC} remains valid. The code used for the experiments is available at \href{https://github.com/tfreihaut/Murmail}{https://github.com/tfreihaut/Murmail}.

We evaluate Multi-Agent BC and MURMAIL (\cref{alg:name}) in the considered environment and measure the exploitability of the resulting policies with respect to the number of expert queries (for MURMAIL) and dataset size (for BC). The results are presented in \cref{fig:empirical_evaluation}.

\begin{figure}[ht]
    \centering
    \includegraphics[width=\textwidth]{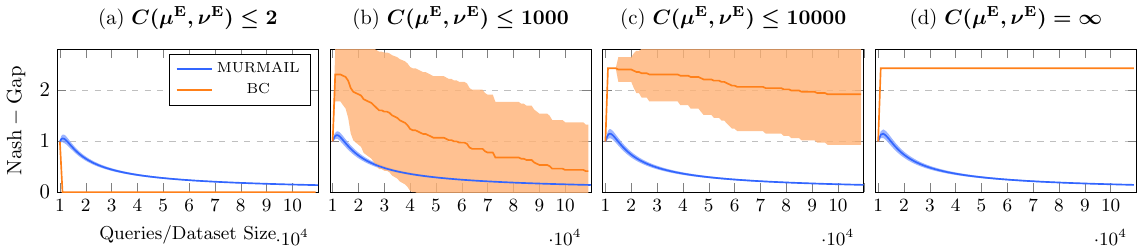}
    \caption{Empirical evaluation for environments with different $\gC(\muE,\nuE)$.}
    \label{fig:empirical_evaluation}
\end{figure}

As predicted by our theoretical analysis, Multi-Agent BC fails in settings with $\gC(\muE,\nuE) = \infty$, whereas MURMAIL still succeeds in minimizing the Nash gap. However, in environments where $\gC(\muE,\nuE) < \infty$, BC can outperform MURMAIL in terms of efficiency $\varepsilon^{-2}$ compared to $\varepsilon^{-8}$. Nevertheless, one should also consider that the performance of MURMAIL is independent of $\gC(\muE,\nuE)$ and therefore MURMAIL can outperform BC in cases where $\gC(\muE,\nuE)$ is bounded but large. This highlights the importance of algorithm selection based on the underlying environment. Additional details, experiments in another environment, and practical insights for improving MURMAIL's performance are discussed in \cref{sec:experimental_validations}.

\section{Conclusion and Future Directions}
\label{sec:conclusion}
This paper provides the first sample complexity analysis of behavioral cloning in the multi-agent setting. The provided upper bound depends on the \emph{all policy deviation concentrability} coefficient. Additionally, which is shown to be unavoidable in general.
Unfortunately, it is quite easy to come up with MGs where the concentrability coefficient is unbounded. In particular, we introduce the smaller quantity $\gC(\muE,\nuE)$ is unbounded and where no non-interactive MAIL algorithm can learn an equilibrium from data.  In this situation, we resort to expert queries and we introduce novel algorithms dubbed MAIL-BRO and MURMAIL, which achieve an $\varepsilon$-approximate Nash equilibrium with a polynomial number of expert queries and computational cost polynomial in all problem parameters.

We outline a few interesting future directions and research questions left open by our work.

\paragraph{Lower Bound.}
This work has provided a hardness result, showing that if $\gC(\muE,\nuE)$ is unbounded, there exists no non-interactive MAIL algorithm that can learn. However, it holds true that $\gC_{\max}\geq \gC(\muE,\nuE)$ and it remains an open question if there exists a Markov Game, where $\gC(\muE,\nuE)$ is bounded but $\gC_{\max} = \infty$ and no non-interactive MAIL algorithm can recover an equilibrium. Answering this would close the gap between the provided upper bound of BC and the lower bound. In particular, this would answer what the fundamental quantity of non-interactive MAIL is.
Additionally, it remains an open question if one can prove a statistical lower bound featuring $\gC({\muE,\nuE})$ or $\gC_{\max}$ and whether BC is rate optimal in $\varepsilon.$ 

\paragraph{Improving the theoretical bounds in $\varepsilon$ and problem dependent parameters.}
The focus of this work was to show the first sample complexity bound for a computationally efficient algorithm in the queriable expert setting. For the sake of simplicity, we did not try to optimize the dependence of the upper bound in the accuracy parameters $\varepsilon$, effective horizon $(1-\gamma)^{-1}$, states and actions cardinality $\abs{\S}$ and $\abs{\A}$. 
A possible direction of improvement is to derive a tighter analysis of the outer loop using faster rates for the regret of the squared loss (see for example \cite[Chapter 3]{Cesa-Bianchi:2006}).

Moreover, the upper bound could be improved by removing the need of the RL inner loop in MURMAIL. In general, replacing it with a no regret learner that minimizes the regret in an MDP with changing reward function and transitions is not possible because of the negative result by \cite{abbasiyadkori2013online}. On the positive side, it is known from the game theoretic literature that no regret learning would be possible under these conditions if the state space is tree structured as in an extensive form game \citep{osborne1994course}.
We leave the study of this improvement, which can be relevant for several games such as Poker, for future works.

\paragraph{Extension to deep imitation learning.}
The current analysis is limited to tabular Markov Games. However, the main conceptual ideas easily carry on to deep imitation learning experiments.
The largest theory-practice gap would be in the inner loop where \texttt{UCBVI} would need to be replaced by a Deep RL algorithm such as DQN \cite{Mnih:2015} or Soft Actor Critic \cite{Haarnoja:2018}, just to name a few.
\looseness=-1

\bibliographystyle{abbrvnat}  %
\bibliography{references}  

\clearpage

\appendix

\section*{Contents of Appendix}
This appendix provides supplementary material to support the main findings of the paper. It begins with an overview of all the relevant notations used throughout the work, followed by a review of related work.
We then present the complete proofs for key results (\cref{sec:BC_proof} to \cref{sec:inner_loop}), which were omitted from the main text.
\cref{sec:n_extension} outlines how our framework can be extended to n-player general-sum games.
Further details on our experimental setup, results, and practical application considerations are provided in \cref{sec:experimental_validations}; this section also features a comparison with the lower bound from \cite{tang2024multiagentimitationlearningvalue}.
Finally, the appendix compiles a list of useful results, along with their proofs, that are referenced throughout this work. For a better overview we provide a table of contents.
\startcontents[appendixmaterialtoc]

\printcontents[appendixmaterialtoc] 
                 {l}               
                 {1}{              
                   \setcounter{tocdepth}{2} 
                 }
\newpage

\section{Notation}

\begin{table}[h!]
\centering
\resizebox{\textwidth}{!}{\begin{tabular}{|l|l|}
\hline
\textbf{Notation} & \textbf{Description} \\
\hline
$\mathcal{G}$ & Two-Player Zero-Sum Markov Game \\
$\mathcal{S}$ & Finite (joint-)state space \\
$\mathcal{A}$ & Player 1's finite action space \\
$\mathcal{B}$ & Player 2's finite action space \\
$\mathcal{A}_{max}$ & Max action space size, $max(|\mathcal{A}|, |\mathcal{B}|)$ or $max_i(|\mathcal{A}_i|)$ \\
$P$ & Transition function \\
$r$ & Reward vector \\
$\gamma$ & Discount factor \\
$d_{0}$ & Initial state distribution \\
$\Delta_{\mathcal{A}}, \Delta_{\mathcal{B}}$ & Probability simplex over action spaces $\mathcal{A}, \mathcal{B}$ \\
$\mu$ & $\gS \to \Delta_{\gA}$, Policy of player 1 \\
$\nu$ & $\gS \to \Delta_{\gB}$, Policy of player 2 \\
$V^{\mu,\nu}(s)$ & State value function for policy pair $(\mu,\nu)$ at state $s$ \\
$Q^{\mu,\nu}(s, a, b)$ & State-action value for $(\mu,\nu)$ at $(s,a,b)$\\
$d^{\mu,\nu}(s')$ & State visitation probability for policy pair $(\mu,\nu)$ \\
$\mathrm{Nash-Gap}(\mu,\nu)$ & Gap to Nash Equilibrium (NE) for strategy pair $(\mu,\nu)$ \\
$br(\cdot)$ & Best response set \\
$\mu^{*} \in \mathrm{br}(\nu)$ & Best response strategy for player 1 to $\nu$ \\
$\nu^{*} \in \mathrm{br}(\mu)$ & Best response strategy for player 2 to $\mu$ \\
$\varepsilon$-NE & $\varepsilon$-approximate Nash Equilibrium \\
$P_{\nu}(s'|s,a)$ & Induced transition to $s'$ from $(s,a)$ with fixed $\nu$ \\
$\Pi$ & Set of all possible policies \\
$\mathcal{D}$ & Dataset of N trajectories \\
$N$ & Number of trajectories in dataset \\
$\tau_k$ & k-th trajectory in dataset \\
$H$ & Trajectory length, $H \sim Geo(1-\gamma)$ \\
$\mathrm{Alg}$ & Algorithm outputting a policy pair \\
$(\hat{\mu},\hat{\nu})$ & Output/Behavior Cloning policy pair \\
$\mathcal{C}(\mu,\nu)$ & Single policy deviation concentrability of $(\mu,\nu)$ \\
$\mathbb{P}(\tau;\mu,\nu)$ & Probability of trajectory $\tau$ given policies $(\mu,\nu)$ \\
$N(s,a), N(s,b), N(s)$ & Counts of $(s,a)$, $(s,b)$, $s$ in $\mathcal{D}$ \\
$\delta$ & Probability threshold (confidence bounds) \\
$Error(\hat{\nu})$ & Error term for player 2's estimated policy \\
$Error(\hat{\mu})$ & Error term for player 1's estimated policy \\
$S_{xplt1}, S_{xplt2}, S_{copy}$ & State sets in constructed Markov Game \\
$g_{k}^{\mu}, g_{k}^{\nu}$ & Stochastic unbiased gradient estimates \\
$\eta$ & Learning rate \\
$y_k, z_k$ & Policies from UCBVI in MURMAIL \\
$R_{\mu_k}(s), R_{\nu_k}(s)$ & Single agent stochastic reward in MURMAIL \\
$\varepsilon_{opt}$ & Optimality gap for RL inner loop \\
$\pi_i, \pi_{-i}$ & Policy of player $i$ and others in n-player games \\
$TV(\cdot, \cdot)$ & Total Variation distance \\
\hline
\end{tabular}}
\label{tab:notations}
\end{table}
\newpage

\section{Related Work}
Here, we present the most related work to our results.
\paragraph{Single-Agent Imitation Learning.} 
There has been significant progress in the theoretical analysis of single-agent imitation learning. In the fully offline setting, Behavior Cloning (BC) \citep{Pomerleau:1991} has recently been revisited by \citet{NEURIPS2024_da84e39a} using the log loss as a supervised learning notion to be minimized between the imitator and the expert policy. \citet{NEURIPS2024_da84e39a} shows an expert sample complexity bound independent of the horizon parameter under deterministic stationary policies and sparse reward function. 
Therefore, a dependence on the horizon appears only if the reward function is dense or if the class containing the expert policy is non stationary. Moreover, they prove that, without further assumptions, interactive imitation learning cannot outperform BC in a worst case sense.

This last finding is surprising given the seminal results showing that interactive imitation learning algorithms such as Dagger \citep{Ross:2011}, Logger \citep{li2022efficient} and On-Q or reward moments matching \citep{swamy2021moments} outperform BC with the 0/1 or total variation loss in terms of error propagation analysis. Alternatively, better error propagation analysis properties can be derived if resetting to states sampled from the expert state occupancy measure is allowed \citep{swamy2023inverse}.

Some benefits over BC in the single-agent setting can be instead obtained with known transitions and initial distributions. Along this line Mimic-MD \citep{rajaraman2020toward} shows that the expert sample complexity can be improved by a factor $\sqrt{H}$ where $H$ is the finite horizon of the problem. Moreover, this is the best possible improvement without further assumptions given the lower bound of \cite{rajaraman2021provablybreaking} for $N \geq 6H$. \cite{swamy2022minimax} improve further the upper bound in the small data regime $ N \leq H$. Later, MB-TAIL \citep{xu2023provably} achieves the optimal sample complexity for the large sample regime just under trajectory access to the environment (without requiring perfect knowledge of dynamics and initial state distribution).

Moreover, given trajectory access to the MDP, imitation learning in the single-agent setting is possible without observing the expert actions. For example, it is possible to imitate observing only the states visited by the expert \citep{sun2019provably,kidambi2021mobile,viel2025soar} or from reward features in Linear MDPs \citep{moulin2025optimistically,viano2024imitation,viano2022proximal}. 
 To summarize, we have seen that in the single agent setting, interactive expert does not give an advantage while knowledge of the transition or sampling access to the environment comes with two main advantages (improved horizon dependence in the tabular setting and possibility of imitation without seeing expert actions).

 Strikingly, the scenario is completely swapped in the multi-agent setting. Our negative result \Cref{thm:lowerbound} shows that given knowledge of the transition no significant improvements over BC can be expected. On the other side, 
 our positive result \Cref{thm:Alg2} shows that in the interactive setting a consistent improvement is expected over BC if $\mathcal{C}(\muE,\nuE)$ is large.

\paragraph{Multi-Agent Imitation Learning.}
Theoretical work in multi-agent imitation learning is limited. Existing studies mainly focus on empirical results in cooperative \citep{NEURIPS2024_2fbeed1d, 10.5555/3305381.3305587} and adversarial \citep{pmlr-v97-yu19e, NEURIPS2018_240c945b} settings, typically optimizing a value-gap objective, which  does not capture the strategic component of multi-agent interactions. This is in contrast with the usual objective in most forward multi-agent methods which instead minimize Nash or regret gaps to measure deviations. To cite few examples, \citet{bai2020provable} learn Nash equilibria in zero-sum games with online access, \cite{xie2020learning} extends the result to linear turn-based Markov Games,
\cite{liu2021sharp, jin2024v} learn $\varepsilon$-CE, $\varepsilon$-CCE and $\varepsilon$-NE with or without suffering the curse of multi-agent respectively,  \cite{cui2022offlinetwoplayerzerosummarkov} learn Nash equilibria in an offline manner and, finally, 
 \cite{bai2021sampleefficientlearningstackelbergequilibria} with bandit feedback and online interactions with the environment.

In imitation learning, the first to adopt the Nash gap in Normal Form Games are \citet{inproceedings}. Recently, the Nash gap has also been considered in Imitation Learning for mean-field games \citep{ramponi2023imitationmeanfieldgames}. The authors provide an upper bound for BC and adversarial Imitation Learning that is exponential in the horizon in case the dynamics and the rewards depend on the population distribution. To overcome this exponential dependency, the authors introduce a proxy to the Nash imitation gap, based on a mean field control formulation, that allows to construct an upper bound that is quadratic in the horizon. Overall, they focus on finding metrics that, if minimized they imply a small Nash gap in virtue of the above error propagation analysis. However, their work left open how the Nash Gap can be minimized algorithmically. In this work, we take an alternative approach that works directly on upper-bounding the Nash Gap and focuses on developing an algorithmic rather than a general error propagation analysis.
The closest to our work is \citet{tang2024multiagentimitationlearningvalue}, extending this to finite-horizon Markov games where the observed expert data are sampled from a correlated equilibrium profile. They show that when the transition dynamics are unknown, the regret scales linearly with the horizon, even if behavior cloning successfully recovers the expert policy within the support of the expert’s state distribution. To address this issue, they propose two algorithms: BLADES, which explicitly queries all single-policy deviations from the current strategy, incurring an exponential dependence on the size of the state space, and MALICE, which assumes full state coverage in the offline dataset and still incurs in a computational cost exponential in the number of states. While \citet{tang2024multiagentimitationlearningvalue} present an error propagation analysis, neither MALICE or BLADES are accompanied by a formal sample complexity analysis, leaving open questions about their statistical efficiency in practical settings.

Our work proposes an algorithm MURMAIL which is provably statistically and computationally efficient, marking a significant step forward with respect to the current literature on the topic. 
Moreover, on the lower bound side, we extend the construction by \citet{tang2024multiagentimitationlearningvalue} to hold even if the learner knows the transition dynamics of the game.

\paragraph{Offline Zero-Sum Games.}
In offline zero-sum Markov games, \citet{cui2022offlinetwoplayerzerosummarkov} and \citet{zhong2022pessimisticminimaxvalueiteration} show that learning is impossible if the dataset only covers Nash equilibrium strategies. Instead they show that a unilateral concentration is required to recover Nash equilibrium strategies. This result highlights a fundamental gap between offline learning in multi-agent versus single-agent settings. Their lower bound is constructed by considering two distinct Normal Form Games that differ only in the reward of a single joint action, resulting in different Nash equilibria. The dataset includes actions from the equilibrium and suboptimal strategies but omits data corresponding to deviations from the observed equilibrium. As a result, the two games become indistinguishable under the available data, as the missing deviations preclude disambiguation. However, this argument does not extend to the imitation learning setting, where the dataset is restricted to the (deterministic) expert policy, resulting in a perfect recoverability for their considered Normal Form Game. In this case, establishing hardness requires a more nuanced analysis that leverages the multiplicity of equilibria. Furthermore, it is important to note that their deviation coefficient is defined with respect to the maximum over all possible policies, whereas in imitation learning, it is defined only relative to the estimated Nash equilibrium strategy.

\section{Proofs on BC Upper bound}
\label{sec:BC_proof}
In this section, we give the omitted proofs for \cref{thm:BC}. In the first step we state the error decomposition of the Nash Gap

\begin{align*}
   V^{\mu^{\star} , \widehat{\nu} }(s_0) -   V^{ \widehat{\mu}, \nu^{\star}} &= V^{\mu^{\star} , \widehat{\nu} }(s_0) - V^{\muE,\nuE}(s_0) + V^{\muE,\nuE}(s_0) - V^{ \widehat{\mu}, \nu^{\star} } \\
   &\leq \underbrace{V^{\mu^{\star} , \widehat{\nu} }(s_0) - V^{\mu^{\star} ,\nuE}(s_0)}_{:=\mathrm{Error}(\widehat{\nu})} + \underbrace{V^{\muE,\nu^{\star} }(s_0) - V^{ \widehat{\mu}, \nu^{\star} }}_{:=\mathrm{Error}(\widehat{\mu})},
\end{align*}
where $\mu^{\star} \in \mathrm{br}(\widehat{\nu})$ and $\nu^{\star} \in \mathrm{br}(\widehat{\mu}).$ 
We can see that we can split the error into an error for the policy recovered for player 1 depending on the estimation of player 2's policy ($\mathrm{Error}(\widehat{\nu})$) and for player 1's policy respectively ($\mathrm{Error}(\widehat{\mu})$). In the following, we will only give the proofs for player 1, as the proofs for player 2 follow analogously. Next, we give a useful lemma that upper-bound the value difference in a two-player game, when one player's policy is fixed in both value functions, by the total variation. We give the general result and then apply it to our case.
\begin{lemma}
    For any policy $\mu$ of the max-player, we have that
    \begin{align*}
        \labs V^{\mu, \nu} (s_0) - V^{\mu, \nu'} (s_0)  \rabs \leq \frac2{1-\gamma} \expect_{\mu, \nu} \ls \sum_{t=0}^{\infty} \gamma^t \TV \lp \nu (\cdot\mid s), \nu' (\cdot\mid s) \rp  \rs.
    \end{align*}
    Similarly, for any policy $\nu$ of the min-player, we have that
    \begin{align*}
        \labs V^{\mu, \nu} (s_0) - V^{\mu', \nu} (s_0)  \rabs \leq \frac2{1-\gamma} \expect_{\mu,\nu} \ls \sum_{t=0}^{\infty} \gamma^t \TV \lp \mu (\cdot\mid s), \mu' (\cdot\mid s) \rp \rs.
    \end{align*}
\end{lemma}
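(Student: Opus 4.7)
The plan is to derive both inequalities by a one-sided application of the performance difference lemma (PDL) to the single-agent MDP induced by freezing one of the two players. The two statements are symmetric, so I would work out the inequality with $\mu$ fixed in detail and simply note that the one with $\nu$ fixed follows after swapping roles.

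First, I would invoke the standard PDL in the MDP with dynamics $P_\mu(s'\mid s,b) = \sum_a \mu(a\mid s) P(s'\mid s,a,b)$ and policy $\nu$ (respectively $\nu'$). This produces the exact identity
\[
V^{\mu,\nu}(s_0) - V^{\mu,\nu'}(s_0) \;=\; \frac{1}{1-\gamma}\,\mathbb{E}_{s\sim d^{\mu,\nu},\, a\sim \mu(\cdot\mid s)}\!\left[\sum_{b}\bigl(\nu(b\mid s) - \nu'(b\mid s)\bigr)Q^{\mu,\nu'}(s,a,b)\right].
\]
Next, I would take absolute values, push them inside, and apply Hölder pointwise in $(s,a)$ by pairing the $\ell_1$ norm of $\nu(\cdot\mid s) - \nu'(\cdot\mid s)$ against the $\ell_\infty$ norm of $Q^{\mu,\nu'}(s,a,\cdot)$. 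Since the reward is bounded in $[-1,1]$ we have $\|Q^{\mu,\nu'}\|_\infty \leq (1-\gamma)^{-1}$, and the elementary identity $\|\nu(\cdot\mid s) - \nu'(\cdot\mid s)\|_1 = 2\,\TV\!\bigl(\nu(\cdot\mid s), \nu'(\cdot\mid s)\bigr)$ produces the factor of two. Collecting terms yields
\[
\bigl|V^{\mu,\nu}(s_0) - V^{\mu,\nu'}(s_0)\bigr| \;\leq\; \frac{2}{(1-\gamma)^2}\,\mathbb{E}_{s\sim d^{\mu,\nu}}\!\left[\TV\!\bigl(\nu(\cdot\mid s), \nu'(\cdot\mid s)\bigr)\right].
\]

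Finally, I would undo the occupancy-measure reparametrization. By the definition $d^{\mu,\nu}(s) = (1-\gamma)\,\mathbb{E}_{\mu,\nu}\!\left[\sum_{t\geq 0}\gamma^t \mathds{1}_{\{S_t=s\}}\right]$ recalled in the preliminaries, rewriting $\mathbb{E}_{s\sim d^{\mu,\nu}}[\cdot]$ as a discounted trajectory sum cancels one of the two $(1-\gamma)^{-1}$ factors and delivers exactly the claimed constant $\tfrac{2}{1-\gamma}$ in front of $\mathbb{E}_{\mu,\nu}\!\bigl[\sum_{t\geq 0}\gamma^t \TV(\nu(\cdot\mid S_t),\nu'(\cdot\mid S_t))\bigr]$. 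I do not anticipate any real obstacle; the only bookkeeping point to watch is making sure that exactly one power of $(1-\gamma)^{-1}$ survives after combining the PDL prefactor and the occupancy-to-trajectory rewriting, so that the final constant is $\tfrac{2}{1-\gamma}$ and not $\tfrac{2}{(1-\gamma)^2}$. The companion bound with $\nu$ held fixed follows by the same three steps applied to the induced MDP $P_\nu(s'\mid s,a) = \sum_b \nu(b\mid s) P(s'\mid s,a,b)$.
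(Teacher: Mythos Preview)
Your proposal is correct and follows essentially the same route as the paper: performance difference lemma with one player frozen, then H\"older against $\|Q^{\mu,\nu'}\|_\infty \le (1-\gamma)^{-1}$, then $\|\nu-\nu'\|_1 = 2\,\TV$. The only cosmetic difference is that you state the PDL in occupancy-measure form and then rewrite $\mathbb{E}_{s\sim d^{\mu,\nu}}[\cdot]$ as a discounted trajectory sum, whereas the paper invokes the trajectory-form PDL (their Lemma~\ref{lem:pdl_ma}) directly; the bookkeeping you flagged about exactly one surviving factor of $(1-\gamma)^{-1}$ is precisely the content of that equivalence.
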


\begin{proof}
    Here, we only prove the first statement. The second statement can be proved by the same idea. By \ref{lem:pdl_ma}, we have that
    \[
\resizebox{\textwidth}{!}{$
    \begin{aligned}
        V^{\mu, \nu} (s_0) - V^{\mu, \nu'} (s_0) 
        &= \mathbb{E}_{\mu, \nu} \left[ \sum_{t=0}^{\infty} \gamma^t 
        \left( \mathbb{E}_{(a,b) \sim (\mu, \nu)} 
        \left[ Q^{\mu, \nu'}(s,a,b) \right]  
        - \mathbb{E}_{(a,b) \sim(\mu, \nu')} 
        \left[ Q^{\mu, \nu'}(s,a,b) \right] \right) \right].
    \end{aligned}
$}
\]
    Applying the triangle inequality leads to
    \[
\resizebox{\textwidth}{!}{$
    \begin{aligned}
        \labs V^{\mu, \nu} (s_0) - V^{\mu, \nu'} (s_0) \rabs \leq \expect_{\mu, \nu} \ls \sum_{t=0}^{\infty} \gamma^t \labs  \expect_{(a,b) \sim (\mu, \nu)} \ls Q^{\mu, \nu'}(s,a,b) \rs  - \expect_{(a,b) \sim(\mu, \nu')} \ls Q^{\mu, \nu'}(s,a,b) \rs \rabs  \rs. 
      \end{aligned}
$}
\]
    For the term $\labs \expect_{(a,b) \sim (\mu, \nu)} \ls Q^{\mu, \nu'}(s,a,b) \rs  - \expect_{(a,b) \sim(\mu, \nu')} \ls Q^{\mu, \nu'}(s,a,b) \rs \rabs$, we have that
    \begin{align*}
        & \quad \labs \expect_{(a,b) \sim (\mu, \nu)} \ls Q^{\mu, \nu'}(s,a,b) \rs  - \expect_{(a,b) \sim(\mu, \nu')} \ls Q^{\mu, \nu'}(s,a,b) \rs \rabs
        \\
        &=  \labs \sum_{a \in \gA} \sum_{b \in \gB} \mu (a\mid s) \lp \nu (b\mid s) - \nu' (b\mid s)  \rp Q^{\mu, \nu'} (s, a, b) \rabs
        \\
        &\leq \frac2{1-\gamma} \sum_{a \in \gA} \mu (a\mid s) \sum_{b \in \gB}  \labs \nu (b\mid s) - \nu' (b\mid s)  \rabs 
        \\
        &= \frac2{1-\gamma} \TV \lp   \nu (\cdot\mid s), \nu' (\cdot\mid s)  \rp,
    \end{align*}
    where we again applied the triangle inequality and the fact that the rewards are bounded by $1$. Additionally, in the last equality we used the definition of the total variation and the fact that $\mu (\cdot\mid s)$ is a probability distribution.
    
    Combining the obtained results we get
    \begin{align*}
        \labs V^{\mu, \nu} (s_0) - V^{\mu, \nu'} (s_0) \rabs \leq \frac2{1-\gamma} \expect_{\mu, \nu} \ls \sum_{t=0}^{\infty} \gamma^t  \TV \lp   \nu (\cdot\mid s), \nu' (\cdot\mid s)  \rp  \rs,
    \end{align*}
    which completes the proof of the first statement. 
\end{proof}
Applying the result to the BC setting and noting that by definition of the best response we have $V^{\mu^{\star}, \widehat{\nu} }(s_0) - V^{\mu^{\star},\nuE}(s_0) \geq 0$ for $\mu^{\star} \in \mathrm{br}(\widehat{\nu})$ and that the result needs to hold true $\forall \mu \in \mathrm{br}{(\nu)}$, we obtain
\begin{align*}
        \mathrm{Error}(\widehat{\nu}) = V^{\mu^{\star} , \widehat{\nu} }(s_0) - V^{\mu^{\star} ,\nuE}(s_0)  \leq \frac2{1-\gamma} \max_{\mu \in \mathrm{br}(\widehat{\nu})}\expect_{ \mu, \nuE} \ls \sum_{t=0}^{\infty} \gamma^t \TV \lp \nuE (\cdot\mid s), \widehat{\nu} (\cdot\mid s) \rp \rs.
    \end{align*}
    Similarly, for any policy $\nu$ of the min-player, we have that
    \begin{align*}
        \mathrm{Error}(\widehat{\mu}) =V^{\muE, \nu^{\star} }(s_0) - V^{\widehat{\mu},\nu^{\star}}(s_0) \leq \frac2{1-\gamma} \max_{\nu \in \mathrm{br}(\widehat{\mu})}\expect_{\muE, \nu} \ls \sum_{t=0}^{\infty} \gamma^t \TV \lp \muE (\cdot\mid s), \widehat{\mu} (\cdot\mid s) \rp  \rs.
    \end{align*}
The reason for the additional $\max$ (and $\min$) is that the best response map for a given policy is generally not unique, so we need to pick a distribution from that set. To make the bound apply to all possible best responses, we pick the maximum (or minimum) from the best response set.

Using the definition of the expectation and doing a change of measure, we get
\begin{align*}
    V^{\mu^{\star}, \widehat{\nu} }(s_0) - V^{\mu^{\star} ,\nuE}(s_0) &\leq \frac2{1-\gamma} \max_{\mu \in \mathrm{br}(\widehat{\nu})}\expect_{ \mu, \nuE} \ls \sum_{t=0}^{\infty} \gamma^t \TV \lp \nuE (\cdot\mid s), \widehat{\nu}(\cdot\mid s) \rp \rs \\
    &= \frac2{1-\gamma} \max_{\mu \in \mathrm{br}(\widehat{\nu})}    \sum_{t=0}^{\infty} \gamma^t \sum_{s \in \gS} \frac{d^{\mu, \nuE} (s)}{d^{\muE, \nuE} (s)} d^{\muE, \nuE} (s)  \TV \lp   \nuE(\cdot\mid s), \widehat{\nu} (\cdot\mid s)  \rp. \\
    &\leq \frac2{1-\gamma} \max_{\mu \in \mathrm{br} ({\widehat{\nu}}) } \lnorm \frac{d^{\mu, \nuE} }{d^{\muE, \nuE} } \rnorm_{\infty} \expect_{\muE, \nuE} \ls \sum_{t=0}^{\infty} \gamma^t  \TV \lp   \nuE (\cdot\mid s), \widehat{\nu} (\cdot\mid s)  \rp  \rs.
\end{align*}

Note that we can do the change of measure also with respect to a more general offline distribution $\rho.$ Here, we use $\rho = d^{\muE,\nuE}.$ If we would use the general offline sampling distribution, we would have 

\begin{align*}
    V^{\mu^{\star}, \widehat{\nu} }(s_0) - V^{\mu^{\star} ,\nuE}(s_0) 
    &= \frac2{1-\gamma} \max_{\mu \in \mathrm{br}(\widehat{\nu})}    \sum_{t=0}^{\infty} \gamma^t \sum_{s \in \gS} \frac{d^{\mu, \nuE} (s)}{\rho (s)} \rho (s)  \TV \lp   \nuE(\cdot\mid s), \widehat{\nu} (\cdot\mid s)  \rp. \\
    &\leq \frac2{1-\gamma} \max_{\mu \in \mathrm{br} ({\widehat{\nu}}) } \lnorm \frac{d^{\mu, \nuE} }{\rho } \rnorm_{\infty} \expect_{\rho} \ls \sum_{t=0}^{\infty} \gamma^t  \TV \lp   \nuE (\cdot\mid s), \widehat{\nu} (\cdot\mid s)  \rp  \rs.
\end{align*}

Next note that $\mu \in \mathrm{br}(\hat{\nu})$ depends on the estimation of the policy $\nuE$ and therefore is a random variable. To avoid such a term in the upper bound of the expression we make use of the following upper bound:
\begin{align*}
    & \frac2{1-\gamma} \max_{\mu \in \mathrm{br} ({\widehat{\nu}}) } \lnorm \frac{d^{\mu, \nuE} }{d^{\muE, \nuE} } \rnorm_{\infty} \expect_{\muE, \nuE} \ls \sum_{t=0}^{\infty} \gamma^t  \TV \lp   \nuE (\cdot\mid s), \widehat{\nu} (\cdot\mid s)  \rp  \rs \\
   &\leq \frac2{1-\gamma} \max_{\nu \in \Pi}\max_{\mu \in \mathrm{br} ({\nu}) } \lnorm \frac{d^{\mu, \nuE} }{d^{\muE, \nuE} } \rnorm_{\infty} \expect_{\muE, \nuE} \ls \sum_{t=0}^{\infty} \gamma^t  \TV \lp   \nuE (\cdot\mid s), \widehat{\nu} (\cdot\mid s)  \rp  \rs,
\end{align*}
where $\nu \in \Pi$ is any admissible policy for the second player. Taking the maximum over the best responses to this more general policy class forms an upper bound.

Additionally, note that the expectation is now over the expert policy, therefore we can use the dataset to get an estimate. Therefore, we apply the standard concentration argument to upper-bound term $\expect_{\muE, \nuE} \ls \sum_{t=0}^{\infty} \gamma^t  \TV \lp   \nuE (\cdot\mid s), \widehat{\nu} (\cdot\mid s) \rp \rs$. By \ref{lemma:l1_concentration} and union bound, with probability at least $1-\delta/2$, $\forall s \in \gS$
\begin{align*}
    \TV \lp   \nuE (\cdot|s), \widehat{\nu} (\cdot|s)  \rp \leq \sqrt{\frac{2 |\gB| \log (2|\gS| / \delta) }{\max\{ N (s), 1 \}}}.
\end{align*}

Then, we can have that
\begin{align*}
    \expect_{\muE, \nuE} \ls \sum_{t=0}^\infty \gamma^t \TV \lp   \nuE (\cdot|s), \widehat{\nu} (\cdot|s)  \rp  \rs &\leq \frac{1}{(1-\gamma)} \sum_{s \in \gS} d^{\muE, \nuE} (s) \sqrt{\frac{2 |\gB| \log (2|\gS| /\delta)}{\max \{ N (s), 1 \}}}
    \\
    &=  \frac{1}{(1-\gamma)}\sum_{s \in \gS} \sqrt{d^{\muE, \nuE} (s)}  \sqrt{\frac{2 |\gB| d^{\muE, \nuE} (s) \log (2|\gS| /\delta)}{\max \{ N(s), 1 \}}}
    \\
    &\overset{\mathrm{(i)}}{\leq} \frac{1}{(1-\gamma)}\sqrt{\sum_{s \in \gS} \frac{2 |\gB| d^{\muE, \nuE} (s) \log (2|\gS| /\delta)}{\max \{ N(s), 1 \}} }
    \\
    &\overset{\mathrm{(ii)}}{\leq} \frac{1}{(1-\gamma)} \sqrt{\sum_{s \in \gS} \frac{16 |\gB|  \log^2 (2|\gS| /\delta) }{N} }
    \\
    &= \frac{4}{(1-\gamma)} \sqrt{\frac{|\gS||\gB|  \log^2 (2|\gS| /\delta) }{N}},
\end{align*}
where in $\mathrm{(i)}$ we applied Cauchy Schwarz and in $\mathrm{(ii)}$ we applied \cref{lemma:binomial-concentration} and we denoted $N$ as the size of the dataset.

Finally, we obtain the policy value bound.
\[V^{\mu^{\star} , \widehat{\nu} }(s_0) - V^{\mu^{\star} ,\nuE}(s_0) \leq \frac8{(1-\gamma)^2} \max_{\nu \in \Pi}\max_{\mu \in \mathrm{br}(\nu)}\lnorm \frac{d^{\mu, \nuE} }{d^{\muE, \nuE} } \rnorm_{\infty} \sqrt{\frac{\abs{\gS}\abs{\gB}  \log^2 (2\abs{\gS} /\delta) }{N}}\]
and doing the same analysis for player 2 we get
\[V^{\muE, \nu^{\star}  }(s_0) - V^{\widehat{\mu},\nu^{\star} }(s_0) \leq \frac8{(1-\gamma)^2} \max_{\mu \in \Pi} \max_{\nu \in \mathrm{br}(\mu)}\lnorm \frac{d^{\muE, \nu}}{d^{\muE, \nuE}} \rnorm_{\infty} \sqrt{\frac{\abs{\gS}\abs{\gA}  \log^2 (2\abs{\gS}  /\delta) }{N}}.\]

Finally, by using the error decomposition derived in the first step, defining $\mathcal{C}_{\max} :=  \max \left\{ \max _{\nu \in \Pi}\max_{\mu \in \mathrm{br} ({\nu}) } \lnorm \frac{d^{\mu, \nuE} }{\rho } \rnorm_{\infty}, \, \max_{\mu \in \Pi} \max_{\nu \in \mathrm{br} ({\mu}) } \lnorm \frac{d^{\muE, \nu} }{\rho} \rnorm_{\infty} \right\}$ with $\rho = d^{\muE, \nuE}$ and use that $\abs{\gA_{\max}} = \max \{\abs{\gA}, \abs{\gB}\}$ we obtain with probability of at least $1-\delta$
\begin{align*}
    V^{\mu^{\star} , \widehat{\nu}}(s_0) -   V^{\widehat{\mu},\nu^{\star} }(s_0) \leq \gC_{\max} \frac{8}{(1-\gamma)^2} \sqrt{\frac{\abs{\gS}\abs{\gA_{\max}}  \log^2 (2\abs{\gS} /\delta)}{N}}
\end{align*}
completing the proof of \cref{thm:BC}.
\section{Proof for necessity of $\mathcal{C}(\muE,\nuE)$}
\label{sec:proof_necessity}
In this section we give the proof of \cref{thm:lowerbound}, showing the necessity of a bounded $\mathcal{C}(\muE, \nuE)$ for non-interactive imitation learning even if the learner is fully-aware of the transition model. For a better understanding of the following proof it is essential to remind ourselves of a (simplified) Markov Game hardness construction introduced in \cref{sec:necessity} and illustrated in \cref{fig:lowerbound}.
\begin{proof}[Proof of \cref{thm:lowerbound}]
    Let us consider the following family of Zero-Sum Markov Games $\gG_{\infty} =\{\gG_i\}_{i=1}^{\abs{\gA}}$ with action spaces of the same cardinality $\abs{\gA} \geq 3$ given by $\gA= \{a_1,a_2\ldots, a_{i-1}, a_{E},a_{i+1},\ldots, a_{\abs{\gA}}\}$, $\gB= \{b_1,b_2\ldots, b_{i-1}, b_{E},b_{i+1},\ldots, b_{\abs{\gA}}\}$ and a shared state space for both agents $\gS= \{s_0,s_1, s_2, s_3, s'_{3_1},\ldots, s'_{3_{2\abs{\gA}-1}}, s_{\mathrm{xplt1}_1}, \ldots, s_{\mathrm{xplt1}_{(\abs{\gA} -1)^2 /2}}, s_{\mathrm{xplt2}_1}, \ldots, s_{\mathrm{xplt2}_{(\abs{\gA}-1)^2/2}} \}$. From now on we can divide the state space into 5 parts. We have $S_{\mathrm{expert}} := \{s_0, s_2, s_3\}$ the states visited by the expert, $s_1$ the gating state, $S_{\mathrm{xplt1}}:= \{s_{\mathrm{xplt1}_1}, \ldots, s_{\mathrm{xplt1}_{(\abs{\gA} -1)^2 /2}}\}$ the states where player 1 can be exploited, $S_{\mathrm{xplt2}}:= \{s_{\mathrm{xplt2}_1}, \ldots, s_{\mathrm{xplt2}_{(\abs{\gA} -1)^2 /2}}\}$ the states where player 2 can be exploited and $S_{\mathrm{copy}} := \{s'_{3_1},\ldots, s'_{3_{2\abs{\gA}-1}}\}$ that are copies of the final states visited by the expert in the sense that they are sharing the same reward. Additionally, consider a dirac on state $s_0$ as an initial state distribution, i.e. $\initial(s) = \delta_{s_0}$. The transition model is deterministic and in all states except $s_0$ and $s_1$, simply a transition to one neighboring state. Therefore, we only give a detailed description for these two states. And for the gating state $s_1$, we only consider the next potential set of states, as states inside these sets share the same reward function and each action pair leads to a unique state inside these sets, i.e. every action combination leads to a different state. In particular, we have
    \begin{align*}
    &P_{\gG_i}(\cdot \mid s_0, a,b) = 
        \begin{dcases*}
                s_2 & \text{if $(a,b) = a_ib_i$,}
                \\
                s_1 & \text{otherwise.}
        \end{dcases*}
        \\
        &P_{\gG_i}(\cdot \mid s_1, a,b) \;=\; \begin{dcases*} S_{\mathrm{copy}}, & if $(a,b)\in\{(a_i,b_i)\}\,\cup\,\{(a_i,b_j),(a_j,b_i)\,\mid\, \forall j \neq i\}$, \\[6pt] S_{\mathrm{xplt1}}, & if $(a,b)\in \mathcal{E}_{p,q}$\\ S_{\mathrm{xplt2}}, & otherwise, \end{dcases*}
\end{align*}
where $\mathcal{E}_{p,q} := \bigl\{(a_j,b_j)\mid \forall j \neq i\bigr\}
\;\cup\;\bigl\{(a_{j_p},b_{j_q})\mid 1\le p<q\le n-1,\bigr\}$ and $j_p,j_q \in \{1, \ldots ,\abs{\gA}\} \setminus \{i\}.$
The state-only reward, which equals across all games inside the sets $\gG_i \in \gG^E_{\mathrm{conc}}$ is given by
\[
R_1(s) = 
\begin{dcases*}
                1 & \text{if $s \in S_{\mathrm{explt2}}$,}
                \\
                -1 & \text{if $s \in S_{\mathrm{explt1}}$,}\\
                0 & \text{otherwise.}
        \end{dcases*}
\]
As the considered Markov Game is a Zero-Sum Game, it holds that $R_2(s) = -R_1(s)$. While the transition dynamic looks complicated, the two important things to keep in mind that, once an agent differs from action $a_i$ and $b_i$ respectively, an action can be chosen in such a way, that the following state is inside $S_{\mathrm{xplt1}}$ or $S_{\mathrm{xplt2}}$ respectively and all action combinations has a unique follow up state, i.e. $\abs{S_{\mathrm{copy}}} \cup \abs{S_{\mathrm{xplt1}}} \cup \abs{S_{\mathrm{xplt2}}} = \abs{\gA}\abs{\gB}.$

It follows that the actions taken by the agents only matter in the states $s_0$ and $s_1$. For these states we consider the following Nash equilibrium expert policy $\muE(a_i \mid s_0) = \nuE(b_i \mid s_0) = 1$ and $\muE(a_i \mid s_1) = \nuE(b_i \mid s_1) = 1$. It follows immediately, that the given policy is indeed a Nash equilibrium as no single agent benefits by deviating. Additionally, note that for all actions $a_j \forall \, j\neq i$, each player is exploitable in $s_1$.

An illustration of the described Markov Game for $\abs{\gA} = \abs{\gB} = 3$ can be found in \cref{fig:lowerbound}, where $a_i=a_3$ and $b_i = b_3$.

Next, note that for all $\gG_i \in \gG_{\infty}$ it indeed holds that $\gC(\muE,\nuE) = \infty$ as fixing for example policy $\nuE$ another best response for player, i.e. $\muE_2 \in \mathrm{br}(\nuE)$ is given by $\muE_2( a_{j}\mid s_0) = 1,$ for $j \neq i$ and $\muE_2( a_{i}\mid s_1) = \muE(a_i \mid s_1) = 1$, meaning that the only states visited by  the policy pair $(\muE_2, \nuE)$ are $s_0, s_1,s_4'.$

Now we show that for the family of Markov Games $\gG_{\infty}$ where for each $\gG_i \in \gG_{\infty}$ it holds that $\gC(\muE,\nuE) = \infty$, any learning algorithm $\mathrm{Alg}$ has a Nash Gap of the order $(1-\gamma)^{-1}$. For that let $(\widehat{\mu},\widehat{\nu})$ be the output of any non-interactive imitation learning algorithm $\mathrm{Alg}$ with data from $(\muE, \nuE)$.
It is important to observe that since all games in $\gG_{\infty}$ are identical in $s_0$ and they differs only in transition dynamics from $s_1$. However, no information about $s_1$ is available in $\mathcal{D}$. Therefore, the learner has no mean to distinguish which game she is facing. For this reason $\widehat{\mu},\widehat{\nu}$ do not depend on the game index $i$.
Then denoting by $A_{\mathrm{Alg}}$ and $B_{\mathrm{Alg}}$ the action played by the learner in the state $s_1$, it holds true that

\begin{align*}
    &\max_{\gG_i \in \gG_{\infty}} V^{\mu^{\star} , \widehat{\nu}}_{\gG_i}(s_0) - V^{\widehat{\mu}, \nu^{\star} }_{\gG_i}(s_0)\\
    &\geq \frac{\sum_{i=1}^{\gA}V^{\mu^{\star} , \widehat{\nu}}_{\gG_i}(s_0) - V^{\widehat{\mu}, \nu^{\star} }_{\gG_i}(s_0)}{\abs{\gA}} \\
    &\overset{(i)}{=} \frac{1}{(1-\gamma)} \lp \frac{\sum_{i=1}^{\abs{\gA}} \mathbb{P}(B_{\mathrm{Alg}} \neq b_i)}{\abs{\gA}} + \frac{\sum_{i=1}^{\abs{\gA}} \mathbb{P}(A_{\mathrm{Alg}} \neq a_i)}{\abs{\gA}}\rp \\
    &= \frac{1}{(1-\gamma)} \lp \frac{\sum_{i=1}^{\abs{\gA}} (1- \widehat{\nu}(b_i \mid s_1))}{\abs{\gA}} + \frac{\sum_{i=1}^{\abs{\gA}} (1- \widehat{\mu}(a_i \mid s_1))}{\abs{\gA}}\rp \\
    &=\frac{1}{(1-\gamma)} \lp \frac{\abs{\gA} - 1}{\abs{\gA}} + \frac{\abs{\gA} - 1}{\abs{\gA}}\rp\\
    &\geq \frac{1}{1-\gamma},
 \end{align*}
where $(i)$ follows from the construction of $\gG_i \in \gC(\muE,\nuE),$ as all actions, but actions $a_i,b_i$ are exploitable by the opponent in the game $\gG_i$. 

Additionally, note that even if the learner has access to the transition dynamics, the learner can not differentiate the actions from $s_1$ as all actions lead to different states. Therefore, she cannot use this knowledge to recover an action that would lead to $s \in S_{\mathrm{copy}}$, which would avoid a regret of the order$(1-\gamma)^{-1}$. This completes the proof.

\end{proof}

\section{Analysis of BR Oracle Algorithm}
\label{sec:proof_mailbro}
This section presents the analysis of \cref{alg:broracle} which provides a sample complexity guarantee without requiring single deviation concentrability under the assumption of a Best Response Oracle (\cref{def:bestresponse}). The difference to the later presented algorithm MURMAIL \cref{alg:name} is that here we can query the best response oracle to sample from the expectation that contains the best response of the current policy $\mu_k$ and $\nu_k$ respectively. In particular, we sample a state from the induced discounted state distributions. However, as noted in our discussion around \cref{eq:opt_problem} we first have to transform the optimization problem into a convex one as the original objective of our optimization problem is non-convex. This results in \cref{eq:bound_exploit}, from where we can obtain a Martingale difference sequence and a regret term, which then can be minimized as we now can construct an unbiased gradient estimator and use a version of online mirror descent to construct an update of our policies. 

Now, we restate the theorem and give the complete proof.
\BestResponseOracle*
\begin{proof}
In the first step, we square the optimization problem and decompose into a part that considers policy $\mu_k$ for player 1 and $\nu_k$ for player 2. 
\begin{align}
&\brr{\frac{1}{K}\sum^K_{k=1} \max_{\mu\in\Pi} \innerprod{\initial}{V^{\mu,\nu_k}} - \min_{\nu\in\Pi} \innerprod{\initial}{V^{\mu_k,\nu}}}^2 = \brr{\frac{1}{K}\sum^K_{k=1} \innerprod{\initial} {V^{\mu^{\star}_k,\nu_k} - V^{\mu_k,\nu^{\star}_k}}}^2 \nonumber\\
&= \brr{\frac{1}{K}\sum^K_{k=1} \innerprod{\initial} {V^{\mu^{\star}_k,\nu_k} - V^{\expertpair} + V^{\expertpair} - V^{\mu_k,\nu^{\star}_k}}}^2 \nonumber \\
&\leq \brr{\frac{1}{K}\sum^K_{k=1} \innerprod{\initial} {V^{\mu^{\star}_k,\nu_k} - V^{\mu^{\star}_k,\nu_E} + V^{\mu_E, \nu^{\star}_k} - V^{\mu_k,\nu^{\star}_k}}}^2 \nonumber \\
&\leq 2\brr{\frac{1}{K}\sum^K_{k=1} \innerprod{\initial} {V^{\mu^{\star}_k,\nu_k} - V^{\mu^{\star}_k,\nu_E} }}^2  + 2\brr{\frac{1}{K}\sum^K_{k=1} \innerprod{\initial} {V^{\mu_E, \nu^{\star}_k} - V^{\mu_k,\nu^{\star}_k}}}^2,\label{eq:decomposition_BR}
\end{align}
where $\mu^{\star}_k \in \mathrm{br}(\nu_k)$ and $\nu^{\star}_k \in \mathrm{br}(\mu_k)$. At this point, dividing by $K$ squaring and applying the performance difference \cref{lem:pdl_ma} that allows to decompose the global regret into a weighted some of regrets at each state, we obtain
\begin{align}
&\brr{\frac{1}{K}\sum_{k=1}^K \innerprod{\initial}{V^{\mu_E,\nu^{\star}_k}  - V^{\mu_k,\nu^{\star}_k}}}^2 \nonumber \\ &= \brr{\frac{1}{K}\sum_{k =1}^K\mathbb{E}_{s \sim d^{\mu_k,\nu^{\star}_k}} \bs{
\innerprod{\mathbb{E}_{b \sim \nu^{\star}_k(\cdot|s) } Q^{\mu_E,\nu^{\star}_k}(s,\cdot,b)}{\mu_E(\cdot|s) - \mu_k(\cdot|s)}
}}^2 \nonumber \\
&\leq \brr{\frac{1}{K}\sum_{k =1}^K\mathbb{E}_{s \sim d^{\mu_k,\nu^{\star}_k}} \bs{
\norm{\mu_E(\cdot|s) - \mu_k(\cdot|s)} \sqrt{\abs{\A_{\max}}} (1-\gamma)^{-1}
}}^2 \nonumber \\
&\leq \frac{\abs{\A_{\max}}}{K (1-\gamma)^2 }\sum_{k =1}^K\mathbb{E}_{s \sim d^{\mu_k,\nu^{\star}_k}} \bs{
\norm{\mu_E(\cdot|s) - \mu_k(\cdot|s)}^2
} \label{eq:mu_bound_BR}
\end{align}
where the second last step used the Cauchy-Schwarz inequality and the last step used the Jensen's inequality and the concavity of the square root. Analogous steps give
\begin{equation}
    \brr{\frac{1}{K}\sum_{k=1}^K \innerprod{\initial}{ V^{\mu^{\star}_k,\nu_k} - V^{\mu^{\star}_k,\nu_E} }}^2 \leq \frac{\abs{\A_{\max}}}{K (1-\gamma)^2 }\sum_{k =1}^K\mathbb{E}_{s \sim d^{\mu^{\star}_k,\nu_k}} \bs{
\norm{\nu_E(\cdot|s) - \nu_k(\cdot|s)}^2}.
\label{eq:nu_bound_BR}
\end{equation}
At this point, we see that the expectation is over the best response and the current policies of iteration $k$. Therefore, we now make use of the Best Response Oracle to sample
$S_k^\mu \sim d^{\mu_k,\nu^{\star}_k}$ and $S_k^\nu \sim d^{\mu^{\star}_k, \nu_k}$. Next, we can add and subtract 
the terms $\norm{\mu_E(\cdot|S^\mu_k) - \mu_k(\cdot|S^\mu_k)}^2$ in \eqref{eq:mu_bound_BR} and $\norm{\nu_E(\cdot|S^\nu_k) - \nu_k(\cdot|S^\nu_k)}^2$ in 
\eqref{eq:nu_bound_BR}, and we obtain that 
\begin{align}
&\brr{\frac{1}{K}\sum_{k=1}^K \innerprod{\initial}{V^{\mu_E,\nu^{\star}_k}  - V^{\mu_k,\nu^{\star}_k}}}^2 \nonumber \\ &\leq \frac{\abs{\A_{\max}}}{K (1-\gamma)^2 }\sum_{k =1}^K\brr{ \mathbb{E}_{s \sim d^{\mu_k,y_k}} \bs{
\norm{\mu_E(\cdot|s) - \mu_k(\cdot|s)}^2} - \norm{\mu_E(\cdot|S^\mu_k) - \mu_k(\cdot|S^\mu_k)}^2
} \label{eq:martingale_BR} \tag{Martingale} \\
&+ \frac{\abs{\A_{\max}}}{K (1-\gamma)^2 }\sum_{k =1}^K \norm{\mu_E(\cdot|S^\mu_k) - \mu_k(\cdot|S^\mu_k)}^2 \label{eq:regret_BR} \tag{Regret}. 
\end{align}
We have that \eqref{eq:martingale_BR} can be bounded as follows via Azuma-Hoeffding inequality (see e.g. Theorem 7.2.1 in \citep{alon04}). In particular,
define $X_k = \mathbb{E}_{s \sim d^{\mu_k,y_k}} \bs{
\norm{\mu_E(\cdot|s) - \mu_k(\cdot|s)}^2} - \norm{\mu_E(\cdot|S^\mu_k) - \mu_k(\cdot|S^\mu_k)}^2 $, notice that $\bcc{X_k}^K_{k=1}$ is a martingale difference sequence almost surely bounded by $2$. Therefore, it holds with probability at least $1-\delta$ that 
\[
\sum_{k =1}^K\brr{ \mathbb{E}_{s \sim d^{\mu_k,y_k}} \bs{
\norm{\mu_E(\cdot|s) - \mu_k(\cdot|s)}^2} - \norm{\mu_E(\cdot|S^\mu_k) - \mu_k(\cdot|S^\mu_k)}^2
} \leq \sqrt{ K \log (1/\delta) }
\]
Finally, for \eqref{eq:regret_BR} let us define the loss $\ell_k(\mu) = \norm{\mu_E(\cdot|S^\mu_k) - \mu(\cdot|S^\mu_k)}^2 $ and notice that $\ell_k(\mu_E) = 0$. Therefore, by convexity of $\ell_k$,
\begin{align*}
\eqref{eq:regret} = \sum^K_{k=1} \ell_k(\mu_k) - \ell_k(\mu_E) &\leq \sum^K_{k=1} \innerprod{\nabla_\mu \ell_k(\mu_k)}{\mu_k - \mu_E } %
\end{align*}
where we have that $\nabla_\mu \ell_k(\mu_k) = \bs{\nabla_{\mu(\cdot|s_1)} \ell_k(\mu_k)^T, \dots, \nabla_{\mu(\cdot|s_{\abs{\S}})} \ell_k(\mu_k)^T}^T$ and the gradients with respect to a policy evaluated at a particular state are given as
\[
\nabla_{\mu(\cdot|s)} \ell_k(\mu_k) = 
\begin{cases}
&\mu_k(\cdot|s) - \mu_E(\cdot|s) \quad \text{if} \quad s = S^\mu_k \\
& 0 \quad \text{otherwise}
\end{cases}.
\]
Since we do not have complete knowledge of the expert policy but only sampling access to it, we need to introduce the stochastic gradient estimator $g^\mu_k$. To this end, we sample an action $A^\mu_k \sim \mu_E(\cdot|S^\mu_k)$ and we define the following gradient estimator
\[
g^\mu_k = 
\begin{cases}
&\mu_k(\cdot|s) - \mathbf{e}_{A^\mu_k} \quad \text{if} \quad s = S^\mu_k \\
& 0 \quad \text{otherwise}
\end{cases}.
\]
Notice that $g^\mu_k$ is unbiased and we have that $\norm{g^\mu_k - \nabla_{\mu(\cdot|s)} \ell_k(\mu_k)} \leq \norm{\mathbf{e}_{A^\mu_k} - \mu_E(\cdot|S^\mu_k)} \leq \sqrt{2}$. Therefore, the sequence $\bcc{Y_k}^K_{k=1}$ where $Y_k = \innerprod{\nabla_\mu \ell_k(\mu_k) - g^\mu_k}{\mu_k - \mu_E}$ is a martingale difference sequence adapted to the filtration $\mathcal{F}_t$ which includes all the algorithmic randomness up to the generation of $\mu_k$. Indeed we have that $\mathbb{E}\bs{Y_t | \mathcal{F}_t} = 0$ and 
\[
\mathbb{E}\bs{Y^2_t |\mathcal{F}_t} \leq \mathbb{E}\bs{\norm{\mathbf{e}_{A^\mu_k} - \mu_E(\cdot|S^\mu_k)}^2\norm{\mu_k - \mu_E}^2  |\mathcal{F}_t} \leq 4 \abs{\S}.
\]
Therefore, thanks to an application of the Azuma-Hoeffding inequality, with probability at least $1-\delta$
\looseness=-1
\[
\sum^K_{k=1} \innerprod{\nabla_\mu \ell_k(\mu_k) - g^\mu_k}{\mu_k - \mu_E} \leq \sqrt{2 K \abs{\S} \log(1/\delta)}.
\]
Therefore, we can bound the regret as follows
\begin{align*}
\sum^K_{k=1} \innerprod{\nabla_\mu \ell_k(\mu_k)}{\mu_k - \mu_E} &= \sum^K_{k=1} \innerprod{g^\mu_k}{\mu_k - \mu_E} + \sum^K_{k=1} \innerprod{\nabla_\mu \ell_k(\mu_k) - g^\mu_k}{\mu_k - \mu_E} \\ &\leq \frac{\abs{\S} \log \A}{\eta} + \frac{\eta}{2} \sum^K_{k=1} \norm{\mu(\cdot|S^\mu_k) - \mathbf{e}_{A^\mu_k}}_\infty + \sqrt{2 K \abs{\S} \log(1/\delta)}   \\
& \leq \frac{\abs{\S} \log \abs{\A_{\max}}}{\eta} + \frac{\eta K}{2} + \sqrt{2 K \abs{\S} \log(1/\delta)}\\
& \leq \sqrt{\frac{K \abs{\S} \log \abs{\A_{\max}} }{2}} + \sqrt{2 K \abs{\S} \log(1/\delta)},
\end{align*}
where for the first term, we recognized that the policies updates in \Cref{alg:broracle} can be seen as mirror descent updates (see \Cref{sec:mirror}) and we used the standard regret bound for online mirror descent  (see for example \cite{orabona2023modern})
instantiated with the following Bregman divergence $\sum_{s\in \S} D_{KL}(\mu(\cdot|s), \mu'(\cdot|s))$ and with learning rate $\eta = \frac{2\abs{\S}\log\abs{\A_{\max}}}{K}$ as done in \cref{alg:broracle}.
All in all, we obtain via a union bound that with probability at least $1-4\delta$
\begin{align*}
&\brr{\frac{1}{K}\sum_{k=1}^K \innerprod{\initial}{V^{\mu_E,\nu^{\star}_k}  - V^{\mu_k,\nu^{\star}_k}}}^2 \\
&\leq \frac{\abs{\A_{\max}}}{(1-\gamma)^2}\brr{\sqrt{\frac{ (2 \abs{\S} + 1)\log(1/\delta)}{K}} + \sqrt{\frac{\abs{\S} \log \abs{\A_{\max}} }{2 K}}} .
\end{align*}
Moreover, analogous calculations give
\begin{align*}
&\brr{\frac{1}{K}\sum_{k=1}^K \innerprod{\initial}{ V^{\mu^{\star}_k,\nu_k} - V^{\mu^{\star}_k,\nu_E} }}^2 \\
&\leq \frac{\abs{\A_{\max}}}{(1-\gamma)^2}\brr{\sqrt{\frac{ (2 \abs{\S} + 1) \log(1/\delta)}{K}} + \sqrt{\frac{\abs{\S} \log \abs{\A_{\max}} }{2 K}}} .
\end{align*}
Then, plugging into \eqref{eq:decomposition_BR}, using a union bound and taking square root on both sides, we obtain via another union bound that with probability at least $1-5\delta$
\begin{align*}
    &\frac{1}{K}\sum^K_{k=1} \max_{\mu\in\Pi} \innerprod{\initial}{V^{\mu,\nu_k}} - \min_{\nu\in\Pi} \innerprod{\initial}{V^{\mu_k,\nu}} \\
    &\leq \sqrt{ \frac{4\abs{\A_{\max}}}{(1-\gamma)^2}\brr{ \sqrt{\frac{ (2 \abs{\S} + 1) \log(1/\delta)}{K}} +  \sqrt{\frac{\abs{\S} \log \abs{\A_{\max}} }{2 K}}}. }
\end{align*}
At this point, setting $K = \mathcal{O}\brr{\frac{\abs{\S} \abs{\A_{\max}}^2\log \abs{\A_{\max}} \log(1/\delta)}{(1-\gamma)^4\varepsilon^{4}}}$ ensures that with probability at least $1-5\delta$
\[\frac{1}{K}\sum^K_{k=1} \max_{\mu\in\Pi} \innerprod{\initial}{V^{\mu,\nu_k}} - \min_{\nu\in\Pi} \innerprod{\initial}{V^{\mu_k,\nu}} \leq \mathcal{O}(\varepsilon).
\] 

Therefore, the total number of expert queries is $\mathcal{O}(K) = \mathcal{O}\brr{\frac{\abs{\S}\abs{\A_{\max}}^2 \log \abs{\A_{\max}} \log(1/\delta)}{(1-\gamma)^4 \varepsilon^{4}}}$.
\end{proof}
The next results shows that the policies updates used in \Cref{alg:broracle} and \Cref{alg:name} are mirror descent updates for an appropriately chosen Bregman divergence.
To this end for any $p,d \in \Delta_{\mathcal{A}}$ we define the KL divergence as $KL(p,q) = \sum_{a\in\mathcal{A}} p(a) \log (p(a)/q(a))$ with the convention that $KL(p,q) = 0$ if there exists an action $a$ such that $p(a) =0$ and $q(a) > 0$.
\begin{lemma}
\label{sec:mirror}
The updates  used in \Cref{alg:broracle} and \Cref{alg:name}, that is 
\[
\mu_{k+1}(a \mid s) \propto \mu_k(a\mid s) \exp\brr{ - \eta g^\mu_k(s,a)} \quad \nu_{k+1}(b \mid s) \propto \nu_k(b\mid s) \exp\brr{ - \eta g^\nu_k(s,a)}
\]
are equivalent to mirror descent updates for the Bregman divergence  $\sum_{s \in \S} KL(\mu(\cdot|s),\mu_k(\cdot|s))$. That is, the updates can be equivalently rewritten as 
\[
\mu_{k+1} = \argmin_{\mu\in\Pi} \innerprod{g^\mu_k}{\mu} + \frac{1}{\eta} \sum_{s \in \S} KL(\mu(\cdot|s),\mu_k(\cdot|s))
\]
and 
\[
\nu_{k+1} = \argmin_{\nu\in\Pi} \innerprod{g^\nu_k}{\nu} + \frac{1}{\eta} \sum_{s \in \S} KL(\nu(\cdot|s),\nu_k(\cdot|s))
\]
\end{lemma}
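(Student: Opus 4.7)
The plan is to verify the lemma by a direct Lagrangian computation, exploiting the fact that both the objective of the mirror descent problem and the exponential weights update factorize across states. Since $\sum_{s\in\S} KL(\mu(\cdot\mid s),\mu_k(\cdot\mid s))$ is separable in $s$, and the linear term $\innerprod{g^\mu_k}{\mu} = \sum_{s\in\S}\sum_{a\in\A} g^\mu_k(s,a)\mu(a\mid s)$ is likewise separable, it suffices to show that for every fixed $s\in\S$ the per-state subproblem
\[
    \min_{p\in\Delta_\A}\;\sum_{a\in\A} g^\mu_k(s,a)\,p(a) + \frac{1}{\eta}\sum_{a\in\A} p(a)\log\!\frac{p(a)}{\mu_k(a\mid s)}
\]
has as its unique solution the probability vector $p(a)\propto \mu_k(a\mid s)\exp(-\eta g^\mu_k(s,a))$.

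To carry this out, I would form the Lagrangian with a single multiplier $\lambda$ enforcing $\sum_a p(a)=1$ (the non-negativity constraints are automatically satisfied since the KL term acts as a barrier on the simplex interior, and the solution will be strictly positive whenever $\mu_k(\cdot\mid s)$ is). Differentiating with respect to $p(a)$ gives the first-order condition
\[
    g^\mu_k(s,a) + \frac{1}{\eta}\bigl(\log p(a) - \log \mu_k(a\mid s) + 1\bigr) + \lambda = 0,
\]
which rearranges to $p(a) = \mu_k(a\mid s)\exp(-\eta g^\mu_k(s,a))\cdot\exp(-1-\eta\lambda)$. Imposing $\sum_a p(a)=1$ pins down the multiplicative constant to be the usual partition function $Z_k(s)=\sum_{a'}\mu_k(a'\mid s)\exp(-\eta g^\mu_k(s,a'))$, recovering exactly the exponential weights update in \Cref{alg:broracle} and \Cref{alg:name}. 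Uniqueness follows because the objective is strictly convex on the simplex, as $p\mapsto KL(p,\mu_k(\cdot\mid s))$ is strictly convex and the linear term preserves convexity. The argument for $\nu_{k+1}$ is identical after replacing $\mu,\A,g^\mu_k$ with $\nu,\B,g^\nu_k$.

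There is essentially no main obstacle here: the result is a textbook equivalence between entropic mirror descent and multiplicative weights, and the only care needed is (i) to note separability across states so that the global simplex constraint $\mu(\cdot\mid s)\in\Delta_\A$ for all $s$ decouples into $\abs{\S}$ independent simplex constraints, and (ii) to handle the convention about zeros in the KL divergence, which does not arise as long as $\mu_k(\cdot\mid s)$ is strictly positive---a property preserved inductively from any strictly positive initialization $\mu_1$, since each multiplicative update scales the probabilities by strictly positive factors $\exp(-\eta g^\mu_k(s,a))$.
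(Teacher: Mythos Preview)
Your proposal is correct and follows essentially the same approach as the paper: both derive the exponential weights update from the first-order optimality condition of the entropic mirror descent problem, with a multiplier/constant enforcing normalization. Your version is slightly more explicit about the per-state decomposition and the role of strict convexity for uniqueness, but the underlying argument is identical.
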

\begin{proof}
We prove the result for one player ( the $\mu$ player ). The result for the other player would follow exactly the same steps.
Let us consider the proximal update
\[
\mu_{k+1} = \argmin_{\mu\in\Pi} \innerprod{g^\mu_k}{\mu} + \frac{1}{\eta} \sum_{s \in \S} KL(\mu(\cdot|s),\mu_k(\cdot|s))
\]
The Bregman divergence chosen is induced by the function $\psi(\mu) = \sum_{s\in\mathcal{S}} \sum_{a \in \mathcal{A}} \mu(a|s) \log \mu(a|s)$  sum of the negative entropy over the state space. Notice that the gradient norm tends to infinite as $\mu$ approaches the border of the policy space (i.e. some entries $\mu(a|s)$ tends to zero), that is $\lim_{\mu \rightarrow \delta \Pi}\norm{\nabla \psi(\mu)}$. This means that the first order optimality condition implies that the derivative  $F(\mu)= \innerprod{g^\mu_k}{\mu} + \frac{1}{\eta} \sum_{s \in \S} KL(\mu(\cdot|s),\mu_k(\cdot|s))$ equals zero at the minimizing policy which is $\mu_{k+1}$ by definition. Therefore, in the following we use this fact to derive the exponential weight updates used in \Cref{alg:broracle} and \ref{alg:name}.
\[
\nabla F(\mu_{k+1}) = 0 \implies g^\mu_k(s,a) + \frac{1}{\eta} \log \brr{\frac{\mu_{k+1}(a|s)}{\mu_k(a|s)}} = c
\]
for some $c \in \mathbb{R}$ which ensures normalization of $\mu_{k+1}$.
Therefore, inverting the last expression, we get %
\looseness=-1
\[
\mu_{k+1}(a|s) = \mu_k(a|s) \exp \brr{\eta ( c - g^\mu_k(s,a) )}
\]
Choosing $c \in \mathbb{R}$ to ensure that $\forall s \in \mathcal{S}$ it holds that $\sum_{a\in\mathcal{A}} \mu_{k+1}(a|s) = 1$ concludes the proof.
\looseness=-1
\end{proof}

\section{Analysis of \Cref{alg:name}}

\label{sec:proof_murmail}
This section presents the analysis of \Cref{alg:name} which provides a sample complexity guarantee without requiring neither concentrability or best response oracle.
\guaranteesInteractionOnly*
\begin{proof}
The proof follows similar to the one of \cref{thm:br_oracle} with the addition of an RL inner loop. In a first step, we first derive the same decomposition as in \eqref{eq:decomposition_BR}.
Again, dividing by $K$ squaring, applying the performance difference, the Cauchy-Schwartz inequality leads to \eqref{eq:mu_bound_BR} and using Jensen's inequality and the concavity of the square root we get
\begin{align}
\label{eq:mu_bound} 
&\nonumber \brr{\frac{1}{K}\sum_{k=1}^K \innerprod{\initial}{V^{\mu_E,\nu^{\star}_k}  - V^{\mu_k,\nu^{\star}_k}}}^2 \nonumber\\
&\leq \frac{\abs{\A_{\max}}}{K (1-\gamma)^2 }\sum_{k =1}^K\mathbb{E}_{s \sim d^{\mu_k,\nu^{\star}_k}} \bs{
\norm{\mu_E(\cdot|s) - \mu_k(\cdot|s)}^2
},
\end{align}
where $\nu^{\star}_k \in \mathrm{br}(\mu_k)$ and analogously for $\nu_k$
\begin{equation}
    \brr{\frac{1}{K}\sum_{k=1}^K \innerprod{\initial}{ V^{\mu^{\star}_k,\nu_k} - V^{\mu^{\star}_k,\nu_E} }}^2 \leq \frac{\abs{\A_{\max}}}{K (1-\gamma)^2 }\sum_{k =1}^K\mathbb{E}_{s \sim d^{\mu^{\star}_k,\nu_k}} \bs{
\norm{\nu_E(\cdot|s) - \nu_k(\cdot|s)}^2},
\label{eq:nu_bound}
\end{equation}
where $\mu^{\star}_k \in \mathrm{br}(\nu_k).$
At this point, as we do not assume to have access to a Best Response oracle, let us introduce the sequence $\bcc{z_k}^K_{k=1}$
and $\bcc{y_k}^K_{k=1}$ produced by UCB-VI in the inner loop of \Cref{alg:name}.
Since the stochastic reward used in the inner loop is unbiased and almost surely bounded by $2$ by \Cref{lemma:stochastic_reward}, \Cref{lem:UCBVI} run for a number of inner iterations $T = \mathcal{O}\brr{\frac{\abs{\S}^3 \abs{\A_{\max}} \log(1/\delta)}{(1-\gamma)^4 \varepsilon^2_{\mathrm{opt}}}}$ ensures that with probability $1-3\delta$
\begin{equation}
    \mathbb{E}_{s \sim d^{\mu^{\star}_k, \nu_k}} \bs{
\norm{\nu_E(\cdot|s) - \nu_k(\cdot|s)}^2} \leq 
\mathbb{E}_{s \sim d^{z_k, \nu_k}} \bs{
\norm{\nu_E(\cdot|s) - \nu_k(\cdot|s)}^2} + \varepsilon_{\mathrm{opt}} \label{eq:nu_bound_approx}
\end{equation}
and 
\begin{equation}
    \mathbb{E}_{s \sim d^{\mu_k,\nu^{\star}_k}} \bs{
\norm{\mu_E(\cdot|s) - \mu_k(\cdot|s)}^2} \leq 
\mathbb{E}_{s \sim d^{\mu_k,y_k}} \bs{
\norm{\mu_E(\cdot|s) - \mu_k(\cdot|s)}^2} + \varepsilon_{\mathrm{opt}}  \label{eq:mu_bound_approx}
\end{equation}
Note that now we can sample $S_k^\mu \sim d^{\mu_k,y_k}$ and $S_k^\nu \sim d^{z_k,\nu_k}$. Therefore, we can again add and subtract 
the terms $\norm{\mu_E(\cdot|S^\mu_k) - \mu_k(\cdot|S^\mu_k)}^2$ in \eqref{eq:mu_bound} and $\norm{\nu_E(\cdot|S^\nu_k) - \nu_k(\cdot|S^\nu_k)}^2$ in 
\eqref{eq:nu_bound} we obtain that
\begin{align}
&\brr{\frac{1}{K}\sum_{k=1}^K \innerprod{\initial}{V^{\mu_E,\nu^{\star}_k}  - V^{\mu_k,\nu^{\star}_k}}}^2 \nonumber \\ &\leq \frac{\abs{\A_{\max}}}{K (1-\gamma)^2 }\sum_{k =1}^K\brr{ \mathbb{E}_{s \sim d^{\mu_k,y_k}} \bs{
\norm{\mu_E(\cdot|s) - \mu_k(\cdot|s)}^2} - \norm{\mu_E(\cdot|S^\mu_k) - \mu_k(\cdot|S^\mu_k)}^2
} \label{eq:martingale}  \\
&+ \frac{\abs{\A_{\max}}}{K (1-\gamma)^2 }\sum_{k =1}^K \norm{\mu_E(\cdot|S^\mu_k) - \mu_k(\cdot|S^\mu_k)}^2 \label{eq:regret} \tag{Regret} \\
& + \frac{\abs{\A_{\max}}}{(1-\gamma)^2 }\varepsilon_{\mathrm{opt}} \label{eq:opt_error} \tag{Inner RL Loop Error}
\end{align}
We have that \eqref{eq:martingale} can be bounded analogously as in \cref{eq:martingale_BR} with $S^\mu_k \sim d^{\mu_k,y_k}.$

Again, as done in the proof of \cref{thm:br_oracle}, we recognize that the policies updates performed by \Cref{alg:name} are instances of online mirror descent ( see \Cref{sec:mirror} ). Therefore, we can bound the regret term as follows 
\begin{align*}
\sum^K_{k=1} \innerprod{\nabla_\mu \ell_k(\mu_k)}{\mu_k - \mu_E} \leq \sqrt{\frac{K \abs{\S} \log \abs{\A_{\max}} }{2}} + \sqrt{2 K \abs{\S} \log(1/\delta)}.
\end{align*}

All in all, we obtain via a union bound that with probability at least $1-4\delta$
\begin{align*}
&\brr{\frac{1}{K}\sum_{k=1}^K \innerprod{\initial}{V^{\mu_E,\nu^{\star}_k}  - V^{\mu_k,\nu^{\star}_k}}}^2 \\
&\leq \frac{\abs{\A_{\max}}}{(1-\gamma)^2}\brr{\sqrt{\frac{ (2 \abs{\S} + 1)\log(1/\delta)}{K}} + \sqrt{\frac{\abs{\S} \log \abs{\A_{\max}} }{2 K}} + \varepsilon_{\mathrm{opt}}} .
\end{align*}
Moreover, analogous calculations give
\begin{align*}
&\brr{\frac{1}{K}\sum_{k=1}^K \innerprod{\initial}{ V^{\mu^{\star}_k,\nu_k} - V^{\mu^{\star}_k,\nu_E} }}^2 \\
&\leq \frac{\abs{\A_{\max}}}{(1-\gamma)^2}\brr{\sqrt{\frac{ (2 \abs{\S} + 1) \log(1/\delta)}{K}} + \sqrt{\frac{\abs{\S} \log \abs{\A_{\max}} }{2 K}} + \varepsilon_{\mathrm{opt}}} .
\end{align*}
Then, using the same decomposition presented in \eqref{eq:decomposition_BR}, using a union bound and taking square root on both sides, we obtain via another union bound that with probability at least $1-5\delta$
\begin{align*}
    &\frac{1}{K}\sum^K_{k=1} \max_{\mu\in\Pi} \innerprod{\initial}{V^{\mu,\nu_k}} - \min_{\nu\in\Pi} \innerprod{\initial}{V^{\mu_k,\nu}} \\
    &\leq \sqrt{ \frac{4\abs{\A_{\max}}}{(1-\gamma)^2}\brr{ \sqrt{\frac{ (2 \abs{\S} + 1) \log(1/\delta)}{K}} +  \sqrt{\frac{\abs{\S} \log \abs{\A_{\max}} }{2 K}} + \varepsilon_{\mathrm{opt}}}. }
\end{align*}
At this point, setting $K = \mathcal{O}\brr{\frac{\abs{\S} \abs{\A_{\max}}^2\log \abs{\A_{\max}} \log(1/\delta)}{(1-\gamma)^4\varepsilon^{4}}}$ ensures that with probability at least $1-5\delta$
\[\frac{1}{K}\sum^K_{k=1} \max_{\mu\in\Pi} \innerprod{\initial}{V^{\mu,\nu_k}} - \min_{\nu\in\Pi} \innerprod{\initial}{V^{\mu_k,\nu}} \leq \mathcal{O}(\varepsilon) + 2 \sqrt{\abs{\A_{\max}} (1-\gamma)^{-2}\varepsilon_{\mathrm{opt}}}.
\] 
Finally, setting $\varepsilon_{\mathrm{opt}}  = \abs{\A_{\max}}^{-1}(1-\gamma)^2\varepsilon^2$ that is $T = \mathcal{O}\brr{ \frac{\abs{\S}^3 \abs{\A_{\max}}^3 \log(1/\delta)}{(1-\gamma)^8 \varepsilon^4}}$
ensures that with probability $1-5\delta$, we have that $\frac{1}{K}\sum^K_{k=1} \max_{\mu\in\Pi} \innerprod{\initial}{V^{\mu,\nu_k}} - \min_{\nu\in\Pi} \innerprod{\initial}{V^{\mu_k,\nu}} \leq \mathcal{O}(\varepsilon)$.
Therefore, the total number of expert queries in $\mathcal{O}(K\cdot T) = \mathcal{O}\brr{\frac{\abs{\S}\abs{\A_{\max}}^2 \log \abs{\A_{\max}} \log(1/\delta)}{(1-\gamma)^4 \varepsilon^{4}}\cdot \frac{\abs{\S}^3 \abs{\A_{\max}}^3 \log(1/\delta)}{(1-\gamma)^8 \varepsilon^4} }$.
\end{proof}

\section{Analysis for the RL inner loop}
\label{sec:inner_loop}
For the RL inner loop we analyze \texttt{UCBVI} for stochastic rewards in the discounted setting with a random reward. In particular we have that each time a state-action pair is visited we observe a stochastic reward which is unbiased and with almost surely bounded noise.
Compared to the standard analysis in \cite{azar2017minimax}, we handle the discounted infinite horizon setting.
In principle, MURMAIL can be used replacing UCBVI with other RL algorithms in the inner loop. 
\begin{algorithm}
\caption{\texttt{UCBVI}}
\label{alg:UCBVI}
\KwIn{ iteration budget $T$, transition dynamics $P$, unbiased reward function sampler $\mathcal{R}$ }
\textbf{Initialize} $Q_1(s,a) = (1-\gamma)^{-1}$ and $V_1(s) = (1-\gamma)^{-1}$ for all $s,a \in \S\times \A$. \\

\SetAlgoLined

\For{$t = 1$ \textbf{to} $T$}{
    $\pi_t(s) = \argmax_{a\in\A} Q_t(s,a)$ \\
    Sample $S_t,A_t \sim d^{\pi_t}$, $S'_t \sim P(\cdot|S_t,A_t)$. \\
    Generate stochastic reward function $R_t \sim \mathcal{R}(S_t)$. \\
    Update counts $N_t(s,a) = N_t(s,a) + \mathds{1}_{\bcc{S_t,A_t = s,a}}$, $N_t(s,a,s') = N_t(s,a) + \mathds{1}_{\bcc{S_t,A_t,S'_t = s,a,s'}}$. \\
    Estimate transitions and reward 
    \[
\widehat{P}_t(s'|s,a) = \frac{N_t(s,a,s')}{N_t(s,a) + 1} \quad \widehat{r}_t(s,a) = \frac{\sum^T_{t=1} R_t \mathds{1}_{\bcc{S_t,A_t =s,a}}}{N_t(s,a) + 1}
\]
Set bonuses 
\[
b_t(s,a) = \frac{4 \abs{\S} }{1-\gamma} \sqrt{\frac{\log(2 T (T + 1) \abs{\S} /\delta)}{N_t(s,a) + 1}}
\]
Update state action value functions
\[
    Q_{t+1} = \bs{ \widehat{r}_t + \gamma \widehat{P}_tV_t + b_t}^{Q_t}_0
\]
\[
V_{t+1}(s) = \max_{a\in\A} Q_{t+1}(s,a)
\]
}
\Return{$\pi_{\mathrm{out}}$ such that $d^{\pi_{\mathrm{out}}} = T^{-1} \sum^T_{t=1} d^{\pi_t}$}
\end{algorithm}

The first step of our analysis is to invoke a standard extended performance difference lemma in the infinite horizon setting.

We first introduce some Lemmas which will be useful in the rest of the analysis
\begin{restatable}{lemma}{lemmapdl}
\label{lemma:infinite_extendend_pdl}
Consider the MDP $M = (\S, \A, \gamma, P, r, \initial)$ and two policies $\pi, \pi^\prime: \S \rightarrow \Delta_{\A}$. Then consider for any $\widehat{Q} \in \mathbb{R}^{\abs{\S}\abs{\A}}$ and $\widehat{V}^{\pi}(s) = \innerprod{\pi(\cdot|s)}{\widehat{Q}(s,\cdot)}$ and $Q^{\pi^\prime}, V^{\pi^\prime}$ be respectively the state-action and state value function of the policy $\pi$ in MDP $M$. Then, it holds that $(1-\gamma)\innerprod{\initial}{\widehat{V}^{\pi} - V^{\pi^\prime}}$ equals
\looseness=-1
\begin{equation*}
      \innerprod{d^{\pi^\prime}}{\widehat{Q} - r - \gamma P \widehat{V}^{\pi}} + 
    \mathbb{E}_{s\sim d^{\pi^\prime}}\bs{\innerprod{\widehat{Q}(s,\cdot}{\pi(\cdot|s) -\pi^\prime(\cdot|s)}}.
\end{equation*}
\end{restatable}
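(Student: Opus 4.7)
The plan is to derive a pointwise recursion on $\widehat{V}^{\pi}(s) - V^{\pi'}(s)$, unroll it to introduce the discounted occupancy $d^{\pi'}$, and rearrange to get the claimed identity; this is a routine extended-PDL argument adapted to the infinite-horizon discounted setting. First, for every fixed $s$ I would add and subtract $\innerprod{\pi'(\cdot|s)}{\widehat{Q}(s,\cdot)}$ inside the difference $\innerprod{\pi(\cdot|s)}{\widehat{Q}(s,\cdot)} - \innerprod{\pi'(\cdot|s)}{Q^{\pi'}(s,\cdot)}$, producing immediately the policy-mismatch term $\innerprod{\widehat{Q}(s,\cdot)}{\pi(\cdot|s)-\pi'(\cdot|s)}$ plus a residual $\innerprod{\pi'(\cdot|s)}{\widehat{Q}(s,\cdot)-Q^{\pi'}(s,\cdot)}$.

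Next, I would substitute the Bellman equation $Q^{\pi'}(s,a) = r(s,a) + \gamma \sum_{s'} P(s'|s,a) V^{\pi'}(s')$ and, inside the residual, add and subtract $\gamma (P\widehat{V}^{\pi})(s,a)$. This splits the residual into a Bellman-residual piece $\innerprod{\pi'(\cdot|s)}{\widehat{Q}(s,\cdot)-r(s,\cdot)-\gamma (P\widehat{V}^{\pi})(s,\cdot)}$ and a discounted one-step expectation $\gamma \expect_{a\sim\pi'(\cdot|s), s'\sim P(\cdot|s,a)}[\widehat{V}^{\pi}(s')-V^{\pi'}(s')]$ of the same difference at the next state. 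Setting $\Delta(s) := \widehat{V}^{\pi}(s) - V^{\pi'}(s)$ and writing the sum of the two single-step terms as $F(s)$, the derivation gives $\Delta(s) = F(s) + \gamma \expect_{a\sim\pi'(\cdot|s), s'\sim P(\cdot|s,a)}[\Delta(s')]$.

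Then I would take expectation under $\initial$ and iterate: applying the recursion $t$ times yields $\expect_{\initial}[\Delta] = \sum_{t=0}^{T-1} \gamma^t \expect_{s\sim p_t^{\pi'}}[F(s)] + \gamma^T \expect_{p_T^{\pi'}}[\Delta]$, where $p_t^{\pi'}$ is the $t$-step state marginal under $\pi'$ starting from $\initial$. Sending $T\to\infty$, the tail vanishes because $\Delta$ is uniformly bounded by $2(1-\gamma)^{-1}$ and $\gamma<1$; the infinite sum collapses into $(1-\gamma)^{-1}\expect_{s\sim d^{\pi'}}[F(s)]$ by the definition $d^{\pi'} = (1-\gamma)\sum_{t\geq 0}\gamma^t p_t^{\pi'}$ used in the paper. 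Multiplying through by $(1-\gamma)$ yields the stated identity.

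The only delicate point is the unrolling step: I must line up the $(1-\gamma)$ normalization hidden in the definition of $d^{\pi'}$ with the prefactor on the left-hand side, and confirm that the Bellman-residual term has the advertised form $\innerprod{d^{\pi'}}{\widehat{Q}-r-\gamma P\widehat{V}^{\pi}}$ under the natural interpretation $d^{\pi'}(s,a) = d^{\pi'}(s)\pi'(a|s)$ as a state-action occupancy. There is no conceptual obstacle; once the recursion and the boundedness of $\Delta$ are in hand, the remaining work is purely algebraic bookkeeping.
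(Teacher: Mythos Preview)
Your argument is correct and is the standard extended performance-difference derivation. The paper does not actually give its own proof of this lemma; it simply cites an external reference, so there is nothing substantive to compare against. One small inaccuracy: you claim $\Delta$ is uniformly bounded by $2(1-\gamma)^{-1}$, but $\widehat{Q}$ is an arbitrary vector in $\mathbb{R}^{|\mathcal{S}||\mathcal{A}|}$, so $\widehat{V}^{\pi}$ need not satisfy any a priori bound in terms of $(1-\gamma)^{-1}$. This does not affect the argument, since $\widehat{Q}$ is still a fixed finite vector and hence $\Delta$ is bounded by some constant; the tail $\gamma^T \mathbb{E}_{p_T^{\pi'}}[\Delta]$ therefore vanishes as $T\to\infty$ regardless.
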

\begin{proof}
    A proof can be found in \cite{viel2025soar}.
\end{proof}
We assume for the moment to have valid bonuses, that is functions $b_k$ such that they guarantees
for all $t \in [T]$ and for all $s,a \in \S\times\A$
\[
\abs{ \gamma\widehat{P}_tV_t(s,a) - \gamma PV_t(s,a) + \widehat{r}_t(s,a) - r(s,a)} \leq b_t(s,a)
\]
and we prove that under the above conditions pointwise optimism hold. This point is made precise in the next Lemma.
\begin{lemma}
Given a sequence $b_t : \S \times \A \rightarrow \mathbb{R}$ such that 
\[
\abs{ \gamma \widehat{P}_tV_t(s,a) - \gamma PV_t(s,a) + \widehat{r}_t(s,a) - r(s,a)} \leq b_t(s,a)
\]
for all $t \in [T]$, and $s,a \in \S\times\A$ it holds that 
\[
V_t \geq V^{\pi^\star} \quad Q_t \geq Q^{\pi^\star} \quad \forall ~~~~ t \in [T]
\]
\label{lemma:optimism}
\end{lemma}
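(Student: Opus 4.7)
The plan is to proceed by induction on $t \in \{1, \dots, T\}$, which is the canonical route to pointwise optimism for UCBVI-style updates. For the base case $t = 1$, I would invoke the initialization $Q_1(s,a) = V_1(s) = (1-\gamma)^{-1}$ together with the fact that the expected stochastic reward used in the inner loop, $\|\mu_E(\cdot\mid s)-\mu_k(\cdot\mid s)\|^2$, is non-negative and bounded, so that $Q^{\pi^\star}(s,a) \leq (1-\gamma)^{-1}$ trivially.

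For the inductive step, I would assume $V_t \geq V^{\pi^\star}$ and $Q_t \geq Q^{\pi^\star}$ pointwise, fix an arbitrary pair $(s,a)$, and decompose $Q^{\pi^\star}$ via the Bellman optimality equation by adding and subtracting the empirical Bellman backup of $V_t$:
\[
Q^{\pi^\star}(s,a) = \widehat{r}_t(s,a) + \gamma \widehat{P}_t V_t(s,a) + \bigl[r(s,a) - \widehat{r}_t(s,a) + \gamma (P - \widehat{P}_t) V_t(s,a)\bigr] + \gamma P(V^{\pi^\star} - V_t)(s,a).
\]
The bracketed term is controlled by $b_t(s,a)$ thanks to the standing hypothesis of the lemma, and the last summand is non-positive since $V_t \geq V^{\pi^\star}$ and the operator $\gamma P$ is entrywise non-negative hence monotone. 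I would therefore conclude $\widehat{r}_t(s,a) + \gamma \widehat{P}_t V_t(s,a) + b_t(s,a) \geq Q^{\pi^\star}(s,a)$.

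It then remains to argue that the double clipping $Q_{t+1} = [\widehat{r}_t + \gamma \widehat{P}_t V_t + b_t]_0^{Q_t}$ does not destroy optimism. The upper clip at $Q_t$ is harmless because $Q_t \geq Q^{\pi^\star}$ by the inductive hypothesis, and the lower clip at $0$ is harmless because the expected reward in the MURMAIL inner loop is a squared distance, hence $Q^{\pi^\star} \geq 0$. Combining, $Q_{t+1}(s,a) \geq Q^{\pi^\star}(s,a)$, and then $V_{t+1}(s) = \max_a Q_{t+1}(s,a) \geq \max_a Q^{\pi^\star}(s,a) = V^{\pi^\star}(s)$ closes the induction. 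The step I expect to be the main obstacle, modest as it is, is precisely this double clipping: one must simultaneously verify that $Q^{\pi^\star}$ stays below $Q_t$ (by the inductive hypothesis) and above $0$ (by the structural non-negativity of the inner-loop reward), since failure of either would allow a genuinely optimistic quantity to be clipped below $Q^{\pi^\star}$.
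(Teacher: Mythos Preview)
Your proof is correct and follows the same inductive route as the paper: base case by initialization at $(1-\gamma)^{-1}$, inductive step via the Bellman decomposition combined with bonus validity and monotonicity of $P$, then $V_{t+1}(s)=\max_a Q_{t+1}(s,a)\geq Q_{t+1}(s,a^\star)\geq Q^{\pi^\star}(s,a^\star)=V^{\pi^\star}(s)$. One minor simplification: you do not need $Q^{\pi^\star}\geq 0$ to handle the lower clip, since $\max(0,x)\geq x$ means the lower truncation can only increase $Q_{t+1}$ above the already-optimistic unclipped value; this is how the paper argues it, so the lemma holds for arbitrary bounded rewards without appealing to the specific non-negativity of the MURMAIL inner-loop reward.
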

\begin{proof}
First, let us proof the base case. This is easy since $Q_1(s,a) = \frac{1}{1-\gamma}$ and $V_1(s) = \frac{1}{1-\gamma}$ for all $s,a \in \S\times \A$ and it holds that $Q^{\pi^\star}(s,a) \leq \frac{1}{1-\gamma} $ and $V^{\pi^\star}(s) \leq \frac{1}{1-\gamma} $ for all $s,a \in \S \times \A$.
    For the inductive step, let us set as inductive hypothesis that
    $Q_t - Q^{\pi^\star} \geq 0$ and $V_t - V^{\pi^\star} \geq 0$.
    Then, recall the update for $Q_{t+1}$,
    \[
    Q_{t+1} = \bs{ \widehat{r}_t + \gamma \widehat{P}_tV_t + b_t}^{Q_t}_0
    \]
    In case the upper truncation is triggered in a generic state action pair $s,a$, we have that
    $Q_{t+1}(s,a) - Q^{\pi^\star}(s,a) = Q_t(s,a) - Q^{\pi^\star}(s,a) \geq 0$ by the inductive hypothesis.
    For the state action pairs, where the upper transitions is not triggered we have that
    \begin{align*}
Q_{t+1}(s,a)& - Q^{\pi^\star}(s,a) \geq \widehat{r}_t(s,a) + \gamma \widehat{P}_tV_t(s,a) + b_t(s,a) - Q^{\pi^\star}(s,a) \\
&= \widehat{r}_t(s,a) + \gamma \widehat{P}_tV_t(s,a) + b_t(s,a) - r(s,a) - \gamma PV^{\pi^\star}(s,a) \\
&= \widehat{r}_t(s,a) + \gamma \widehat{P}_tV_t(s,a) - \gamma PV_t(s,a) + b_t(s,a) - r(s,a) - \gamma PV^{\pi^\star}(s,a) + \gamma PV_t(s,a) \\
&\geq  \gamma PV_t(s,a) - \gamma PV^{\pi^\star}(s,a) \\
&\geq 0 .
    \end{align*}
Notice that the second  last step follows from the validity of the bonuses and the last one follows from the monotonicity of the operator $P$ and by the inductive hypothesis $V_t - V^{\pi^\star} \geq 0$.

At this point we have proven that $Q_{t+1} - Q^\star \geq 0$. For proving the optimism of the estimated state value functions we proceed as follows. Let $a^\star = \argmax_{a \in \A}Q^{\pi^\star}(s,a) $,
\begin{align*}
V_{t+1}(s) - V^{\pi^\star}(s) &= \max_{a \in \A} Q_{t+1}(s,a) - \max_{a \in \A}Q^{\pi^\star}(s,a) \\
&= \max_{a \in \A}  Q_{t+1}(s,a) - Q^{\pi^\star}(s,a^\star) \\
&\geq Q_{t+1}(s,a^\star) - Q^{\pi^\star}(s,a^\star) \geq 0.
\end{align*}
\end{proof} 
The next lemma bounds the regret of \texttt{UCBVI} (Algorithm~\ref{alg:UCBVI}) with a sequence of valid bonuses with the sum of expected on policy bonuses.
\begin{lemma}
\label{lem:dec}
Let us consider \texttt{UCBVI} run for $T$ iteration with a sequence of valid bonuses $\bcc{b_t}^T_{t=1}$, then it holds that
\begin{align*}
\frac{1}{T}\sum^T_{t=1} \innerprod{\initial}{V^{\pi^\star} - V^{\pi_t} }  \leq  \frac{2}{T (1-\gamma)}\sum^T_{t=1}\innerprod{d^{\pi_t}}{ b_t }+ \frac{\abs{\S} \abs{\A}}{(1-\gamma)^2T}.
\end{align*}
\end{lemma}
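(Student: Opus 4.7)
My plan is to combine the two lemmas already established in this section, namely optimism (\cref{lemma:optimism}) and the extended performance difference lemma (\cref{lemma:infinite_extendend_pdl}), then use the one-step Bellman backup defining $Q_{t+1}$ to translate the Bellman residual into the on-policy bonus, with the leftover discrepancy between $V_{t-1}$ and $V_t$ absorbed by telescoping.

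First, by \cref{lemma:optimism}, $V^{\pi^\star}(s) \leq V_t(s)$ for every $t$ and $s$, hence $\innerprod{\initial}{V^{\pi^\star} - V^{\pi_t}} \leq \innerprod{\initial}{V_t - V^{\pi_t}}$. Next, I apply \cref{lemma:infinite_extendend_pdl} with $\widehat{Q} = Q_t$ and $\pi = \pi' = \pi_t$. Since $\pi_t$ is greedy with respect to $Q_t$, we have $\widehat{V}^{\pi_t} = V_t$ and the policy-difference term in the lemma vanishes, giving
\begin{equation*}
(1-\gamma)\innerprod{\initial}{V_t - V^{\pi_t}} = \innerprod{d^{\pi_t}}{Q_t - r - \gamma P V_t}.
\end{equation*}

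Second, I control the Bellman residual $Q_t - r - \gamma P V_t$. From the update rule, for $t \geq 2$ one has $Q_t \leq \widehat{r}_{t-1} + \gamma \widehat{P}_{t-1} V_{t-1} + b_{t-1}$, and by validity of the bonus, $\widehat{r}_{t-1} + \gamma \widehat{P}_{t-1} V_{t-1} \leq r + \gamma P V_{t-1} + b_{t-1}$. Combining these and rewriting $P V_{t-1} = P V_t + P(V_{t-1} - V_t)$, I obtain the key pointwise inequality
\begin{equation*}
Q_t - r - \gamma P V_t \leq 2 b_{t-1} + \gamma P(V_{t-1} - V_t),
\end{equation*}
where the second term is nonnegative because the cap $[\cdot]_{0}^{Q_t}$ in the update enforces $V_t \leq V_{t-1}$.

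Third, I take inner product against $d^{\pi_t}$, sum over $t$, and divide by $T(1-\gamma)$. The bonus piece produces $\frac{2}{T(1-\gamma)}\sum_t \innerprod{d^{\pi_t}}{b_{t-1}}$, which matches the target main term after a harmless reindexing. The telescoping piece is bounded via $\innerprod{d^{\pi_t}}{P(V_{t-1} - V_t)} \leq \norm{V_{t-1} - V_t}_{\infty}$ and the identity $\sum_t (V_{t-1}(s) - V_t(s)) = V_1(s) - V_{T+1}(s) \leq (1-\gamma)^{-1}$ for each $s$; summing over $s$ (via $\norm{\cdot}_\infty \leq \norm{\cdot}_1$ on the nonnegative differences) and accounting for the boundary contribution at $t=1$, where only the trivial bound $Q_1 \leq (1-\gamma)^{-1}$ is available, yields the lower-order term $\abs{\S}\abs{\A}/[(1-\gamma)^2 T]$.

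The main obstacle is the telescoping step: because the occupancy measures $d^{\pi_t}$ vary with $t$ while the monotone sequence $V_t$ only telescopes once the $\sup$ is replaced by $\sum_s$, one loses a factor of $\abs{\S}$; matching the precise $\abs{\S}\abs{\A}$ in the statement requires accounting for the contribution of the first visits to each state-action pair, where the bonus is large but the truncation at $Q_t = (1-\gamma)^{-1}$ limits the damage. Modulo this bookkeeping, the argument is a direct chaining of optimism, the extended PDL, bonus validity, and monotonicity of $\{V_t\}_{t \geq 1}$.
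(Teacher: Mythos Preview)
Your approach is correct but takes a different route from the paper at the key step. Both arguments begin identically: optimism gives $\innerprod{\initial}{V^{\pi^\star}-V^{\pi_t}}\le\innerprod{\initial}{V_t-V^{\pi_t}}$, and the extended PDL with $\widehat Q=Q_t$, $\pi=\pi'=\pi_t$ yields $(1-\gamma)\innerprod{\initial}{V_t-V^{\pi_t}}=\innerprod{d^{\pi_t}}{Q_t-r-\gamma PV_t}$. From here the paper simply adds and subtracts $Q_{t+1}$: the piece $\innerprod{d^{\pi_t}}{Q_{t+1}-r-\gamma PV_t}$ is controlled by $2\innerprod{d^{\pi_t}}{b_t}$ using $Q_{t+1}\le\widehat r_t+\gamma\widehat P_tV_t+b_t$ together with bonus validity, while the leftover $\innerprod{d^{\pi_t}}{Q_t-Q_{t+1}}$ is nonnegative (monotonicity) and bounded crudely by $\sum_{s,a}(Q_t(s,a)-Q_{t+1}(s,a))$, which telescopes exactly to $\sum_{s,a}Q_1(s,a)=\abs{\S}\abs{\A}/(1-\gamma)$. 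This keeps the bonus index at $t$ and avoids any detour through $V$.

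Your route---bounding $Q_t$ by the \emph{previous} backup to obtain $Q_t-r-\gamma PV_t\le 2b_{t-1}+\gamma P(V_{t-1}-V_t)$---is also valid, and your worry in the last paragraph is misplaced: your $V$-telescope $\sum_t\norm{V_{t-1}-V_t}_\infty\le\sum_s(V_1(s)-V_{T+1}(s))\le\abs{\S}/(1-\gamma)$ together with the $t=1$ boundary actually delivers a lower-order term of order $\abs{\S}/[(1-\gamma)^2T]$, which is \emph{smaller} than the paper's $\abs{\S}\abs{\A}/[(1-\gamma)^2T]$, so there is nothing to ``match''. The only cost of your route is the index shift to $b_{t-1}$; this is not literally a reindexing since the weights $d^{\pi_t}$ move as well, but it is harmless for the downstream bonus-sum lemma because $N_{t-1}(S_t,A_t)$ and $N_t(S_t,A_t)$ differ by one. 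The paper's add-and-subtract-$Q_{t+1}$ trick is the cleaner way to align indices and telescope in one shot.
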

\begin{proof}
Using the point wise optimism in Lemma~\ref{lemma:optimism} and the decomposition in Lemma~\ref{lemma:infinite_extendend_pdl} we have the following decomposition on the regret of \texttt{UCBVI}

\begin{align*}
&\frac{1 - \gamma}{T}\sum^T_{t=1} \innerprod{\initial}{V^{\pi^\star} - V^{\pi_t} }\\
&\leq \frac{1-\gamma}{T}\sum^T_{t=1} \innerprod{\initial}{V_t - V^{\pi_t} } \\
&= \frac{1}{T}\sum^T_{t=1}\innerprod{d^{\pi_t}}{Q_{t+1} - r + \gamma P V_t} +  \frac{1}{T}\sum^T_{t=1}\innerprod{d^{\pi_t}}{ Q_t - Q_{t+1}}\\
&\leq 
\frac{1}{T}\sum^T_{t=1}\innerprod{d^{\pi_t}}{\widehat{r}_t + \gamma \widehat{P}_tV_t + b_t - r + \gamma P V_t} + \frac{1}{T}\sum^T_{t=1}\innerprod{d^{\pi_t}}{ Q_t - Q_{t+1}}\\
& \leq 
 \frac{2}{T}\sum^T_{t=1}\innerprod{d^{\pi_t}}{ b_t } + \frac{1}{T}\sum^T_{t=1}\innerprod{d^{\pi_t}}{ Q_t - Q_{t+1}}
\end{align*}
where last inequality holds thanks to the validity of the bonuses. 
For the second term, we can get the following bound which crucially use in the first inequality the fact that the sequence $\bcc{Q_t}^T_{t=1}$ is decreasing.
\begin{align*}
\frac{1}{T}\sum^T_{t=1}\innerprod{d^{\pi_t}}{ Q_t - Q_{t+1}} &\leq \frac{1}{T}\sum^T_{t=1}\sum_{s,a} Q_t(s,a) - Q_{t+1}(s,a) \\
&= \frac{1}{T}\sum_{s,a}\sum^T_{t=1} Q_t(s,a) - Q_{t+1}(s,a) \\
&= \frac{1}{T} \sum_{s,a} Q_1(s,a) \\
&= \frac{\abs{\S} \abs{\A}}{(1-\gamma)T}.
\end{align*}
\end{proof}
\subsection{Showing validity of the bonuses}
We show in this section how to design a valid sequence of bonuses.
\begin{lemma}
\label{lemma:vaid_bonus}
Let us consider run \texttt{UCBVI} for $T$ for a stochastic reward almost surely bounded by $2$, i.e. $R_{\max} \leq 2$ iterations and consider the following transition and reward estimators
\[
\widehat{P}_t(s'|s,a) = \frac{N_t(s,a,s')}{N_t(s,a) + 1} \quad \widehat{r}_t(s,a) = \frac{\sum^T_{t=1} R_t \mathds{1}_{\bcc{S_t,A_t =s,a}}}{N_t(s,a) + 1}
\]
then the bonus sequence defined as
\[
b_t(s,a) = \frac{4 \abs{\S} }{1-\gamma} \sqrt{\frac{\log(2 T (T + 1) \abs{\S} /\delta)}{N_t(s,a) + 1}}
\]
satisfies 
\[
\mathbb{P}\bs{\abs{\widehat{r}_t (s,a) + \gamma \widehat{P}_t V_t (s,a) - r(s,a) - \gamma P V_t(s,a) } \leq b_t(s,a) ~~~\forall t \in [T]} \geq 1-2\delta .
\]
\end{lemma}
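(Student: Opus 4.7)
The strategy is to split the quantity inside the absolute value via the triangle inequality as $|\widehat{r}_t(s,a) - r(s,a)| + \gamma |\widehat{P}_t V_t(s,a) - P V_t(s,a)|$, bound each piece with high probability, then combine via a union bound over $t$, $(s,a)$, and the random counts $N_t(s,a)$. The core conceptual obstacle is that $V_t$ is measurable with respect to the data collected up to time $t$, so one cannot naively apply Hoeffding's inequality to $(\widehat{P}_t - P)V_t$ with $V_t$ treated as a fixed vector.

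\textbf{Reward error.} I would first rewrite $\widehat{r}_t(s,a) = \frac{N_t(s,a)}{N_t(s,a)+1}\bar R_t(s,a)$ with $\bar R_t$ the plain empirical mean of the $N_t(s,a)$ observed rewards at $(s,a)$, so that the $+1$ regularization produces only a deterministic bias of magnitude $|r(s,a)|/(N_t(s,a)+1) \leq 2/(N_t(s,a)+1)$, easily absorbed into $b_t$. Since the stochastic rewards fed into \texttt{UCBVI} are by hypothesis a.s.\ bounded by $2$ and unbiased, the centering error $|\bar R_t - r|$ is controlled by Hoeffding's inequality applied, for each fixed $(s,a)$ and each possible value $n$ of the count, to the first $n$ i.i.d.\ samples collected at that pair. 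When $N_t(s,a) = 0$, the gap is trivially at most $R_{\max} \leq 2$, which is within $b_t$.

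\textbf{Transition error.} Here I would exploit that by the truncation baked into \texttt{UCBVI}, $\|V_t\|_\infty \leq (1-\gamma)^{-1}$ deterministically, which yields $|\widehat{P}_t V_t(s,a) - P V_t(s,a)| \leq \|\widehat{P}_t(\cdot\mid s,a) - P(\cdot\mid s,a)\|_1 (1-\gamma)^{-1}$. The crucial point is that $\|\widehat{P}_t - P\|_1$ bounds the error against \emph{every} bounded test function simultaneously, so the adaptive dependence of $V_t$ on the data is neutralized: the concentration is now a purely in-distribution statement about the empirical transition counts. I would then bound the $\ell_1$ norm itself by applying Hoeffding separately to each of the $|\S|$ next-state Bernoulli indicators and summing, giving a bound of order $|\S|\sqrt{\log(\cdot)/N_t(s,a)}$; this explains the factor $|\S|$ appearing in the bonus coefficient $\frac{4|\S|}{1-\gamma}$ (rather than a $\sqrt{|\S|}$ factor that would arise from a Bretagnolle--Huber--Carol / Weissman bound).

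\textbf{Union bound and assembly.} Finally, to lift the fixed-$n$, fixed-$(s,a)$ statements to a uniform bound I would union bound over $t \in [T]$, the possible counts $n \in \{0,\ldots,T\}$, and $s \in \S$, producing the logarithmic argument $2T(T+1)|\S|/\delta$ (the factor $2$ accounts for the two event families, reward and transition, and the $(s,a)$ multiplicity is subsumed by the $|\S|$ prefactor in front of the square root). Substituting the Hoeffding deviations, handling the $N_t(s,a) = 0$ corner case by the deterministic bound on $R_{\max}$ and on $\|P\|_1 \leq 1$, and summing the reward and transition contributions reproduces exactly $b_t(s,a)$ up to constants. The main obstacle, as highlighted, is the adaptive dependence of $V_t$ on the sampling history; the $\ell_1$-to-sup-norm factorization is precisely what circumvents it, after which the argument is standard concentration bookkeeping plus careful treatment of the $+1$ regularization.
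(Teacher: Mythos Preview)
Your proposal is correct and follows essentially the same architecture as the paper: split via the triangle inequality into a reward piece and a transition piece, control each with Hoeffding/Azuma plus a deterministic bias from the $+1$ regularization, and neutralize the adaptivity of $V_t$ by bounding it uniformly over a norm ball. The only cosmetic difference is the H\"older pairing used for the transition term: the paper bounds $\lvert(\widehat P_t-P)V_t\rvert\le \|\widehat P_t-P\|_\infty\,\|V_t\|_1\le \frac{|\S|}{1-\gamma}\|\widehat P_t-P\|_\infty$ (with a union bound over $s'\in\S$ to get the $\ell_\infty$ concentration), whereas you use the dual pairing $\|\widehat P_t-P\|_1\,\|V_t\|_\infty$ and pick up the $|\S|$ factor by summing per-coordinate Hoeffding bounds; both routes produce the same $\frac{|\S|}{1-\gamma}\sqrt{\log(\cdot)/(N_t+1)}$ contribution. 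Likewise, your union over possible counts $n$ is a standard alternative to the anytime-Azuma argument the paper invokes (via \cite[Exercise 7.1]{lattimore2020bandit}) and leads to the same $\log(T(T+1)|\S|/\delta)$ factor.
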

\begin{proof}
For all $t\in[T]$ simultaneously, we have the following high probability upper bound
\looseness=-1
\begin{align*}
    &\widehat{P}_t(s'|s,a) - P(s'|s,a) \\
    &= \frac{\sum^T_{\tau=1} \mathds{1}_{\bcc{S_\tau,A_\tau =s,a}} \mathds{1}_{\bcc{S'_\tau =s'}}}{N_t(s,a) + 1} - \frac{N_t(s,a) + 1}{N_t(s,a)} \frac{\sum^T_{\tau=1} \mathds{1}_{\bcc{S_\tau,A_\tau =s,a}} P(s'|s,a)}{N_t(s,a) + 1} \\
    &= \frac{\sum^T_{\tau=1} \mathds{1}_{\bcc{S_\tau,A_\tau =s,a}} \brr{\mathds{1}_{\bcc{S'_\tau =s'}} - P(s'|s,a) }}{N_t(s,a) + 1} - \frac{\sum^T_{\tau=1 } \mathds{1}_{\bcc{S_\tau,A_\tau =s,a}} P(s'|s,a)}{N_t(s,a)(N_t(s,a) + 1)} \\
    &= \frac{\sum^T_{\tau=1 : S_\tau,A_\tau =s,a} \brr{\mathds{1}_{\bcc{S'_\tau =s'}} - P(s'|s,a) }}{N_t(s,a) + 1} - \frac{\sum^T_{\tau=1} \mathds{1}_{\bcc{S_\tau,A_\tau =s,a}} P(s'|s,a)}{N_t(s,a)(N_t(s,a) + 1)} \\
    &\leq \frac{\sqrt{N_t(s,a) \log(N_t(s,a) (N_t(s,a) + 1) /\delta)}}{N_t(s,a) + 1} - \frac{\sum^T_{\tau=1} \mathds{1}_{\bcc{S_\tau,A_\tau =s,a}} P(s'|s,a)}{N_t(s,a)(N_t(s,a) + 1)} \\
    &\leq \sqrt{\frac{\log(T (T + 1) /\delta)}{N_t(s,a) + 1}} - \frac{\sum^T_{\tau=1} \mathds{1}_{\bcc{S_\tau,A_\tau =s,a}} P(s'|s,a)}{N_t(s,a)(N_t(s,a) + 1)}
\end{align*}
where the last inequality follows with probability $1-\delta$ from an application of the Azuma Hoeffding inequality making special care of the fact that the total number of 
visits $N_t(s,a)$ is not an independent random variable with respect to the random variables of which we are computing the mean, that is $\bcc{\mathds{1}_{\bcc{S_\tau,A_\tau = s,a}}}^T_{t=1}$.
For this reason we pay the factor $\log (N_t(s,a) (N_t(s,a) + 1))$ in the upper bound. We refer the reader to \cite[Exercise 7.1 ]{lattimore2020bandit} for details.
Therefore, by triangular inequality and a union bound over the state space.
\begin{align*}
\norm{\widehat{P}_t(\cdot|s,a) - P(\cdot|s,a)}_{\infty} &\leq \sqrt{\frac{\log( 2 T (T + 1) \abs{\S}/\delta)}{N_t(s,a) + 1}} + \frac{\sum^T_{\tau=1} \mathds{1}_{\bcc{S_\tau,A_\tau =s,a}} P(s'|s,a)}{N_t(s,a)(N_t(s,a) + 1)} \\
&\leq  \sqrt{\frac{\log(2 T (T + 1) \abs{\S} /\delta)}{N_t(s,a) + 1}} + \frac{1}{(N_t(s,a) + 1)} 
\end{align*}

For the reward concentration we have that 
\begin{align*}
\abs{r(s,a) - \widehat{r}_k(s,a)} & = \abs{ \frac{\sum^T_{t=1} \mathds{1}_{\bcc{S_t,A_t =s,a}} (R_t - r(s,a))}{N_{t}(s,a) + 1} - \frac{\sum^T_{t=1} \mathds{1}_{\bcc{S_t,A_t =s,a}} r(s,a)}{N_t(s,a) (N_t(s,a) + 1)} } \\
&\leq \abs{ \frac{\sum^T_{t=1} \mathds{1}_{\bcc{S_t,A_t =s,a}} (R_t - r(s,a))}{N_{t}(s,a) + 1}} + \abs{ \frac{\sum^T_{t=1} \mathds{1}_{\bcc{S_t,A_t =s,a}} r(s,a)}{N_t(s,a) (N_t(s,a) + 1)} } \\
&\leq \sqrt{\frac{R_{\max} \log(2 T (T + 1) /\delta)}{N_t(s,a) + 1}} + \frac{R_{\max}}{(N_t(s,a) + 1)}
\end{align*}
where the last inequality holds with probability $1-\delta$ thanks to the double sided Azuma-Hoeffding inequality.

For the second part of the statement consider that each possible element of the sequence $\bcc{V_t}^T_{t=1}$ generated by \texttt{UCBVI} satisfies $\norm{V_t}_{1} \leq \frac{\abs{\S}}{1-\gamma}$. Therefore, it holds that
\begin{align*}
\abs{\widehat{P}_t V_t (s,a) - P V_t(s,a)} &\leq \norm{\widehat{P}_t(\cdot|s,a) - P (\cdot|s,a) }_\infty \norm{V_t}_{1} \\
&\leq \frac{\abs{\S}}{1-\gamma} \norm{\widehat{P}_t(\cdot|s,a) - P (\cdot|s,a) }_\infty  \\
&\leq \frac{\abs{\S}}{1-\gamma} \brr{\sqrt{\frac{\log(2 T (T + 1) \abs{\S} /\delta)}{N_t(s,a) + 1}} + \frac{1}{(N_t(s,a) + 1)}}
\end{align*}
where the last inequality holds with probability $1-\delta$. 
Therefore, it follows that for all $t \in [T]$ simultaneously, with probability at least $1-2\delta$ 
\begin{align*}
\abs{\widehat{r}_t (s,a) + \gamma \widehat{P}_t V_t (s,a) - r(s,a) - \gamma P V_t(s,a)} &\leq \frac{R_{\max} +  \abs{\S} }{1-\gamma} \\
&\phantom{=}\cdot\brr{\sqrt{\frac{\log(2 T (T + 1) \abs{\S} /\delta)}{N_t(s,a) + 1}} + \frac{1}{(N_t(s,a) + 1)}} \\
&\leq b_t(s,a),
\end{align*}
where the final upper bound by the bonus uses that $\abs{\S} \geq 2$ and 
$\sqrt{\frac{\log(2 T (T + 1) \abs{\S} /\delta)}{N_t(s,a) + 1}} \geq  \frac{1}{(N_t(s,a) + 1)}$.
\end{proof}

\subsection{Bound the bonus sum}
\begin{lemma}
The expected on policy bonus sum is bounded as follows with probability $1-\delta$
\begin{align*}
\sum^T_{t=1}\innerprod{d^{\pi_t}}{b_t}   &\leq \frac{4 \abs{\S} \sqrt{\log(2 T (T + 1) \abs{\S} /\delta)} }{1-\gamma} \sqrt{ 2 \abs{\S} \abs{\A} T \log (T)} \\
&\phantom{=} + \frac{16 \abs{\S} }{1-\gamma} \sqrt{\log(2 T (T + 1) \abs{\S} /\delta)} \log \brr{\frac{2 T}{\delta}} \\
&\leq \widetilde{\mathcal{O}}\brr{\frac{\sqrt{\abs{\S}^{3} \abs{\A} T \log(1/\delta) } }{1-\gamma}}.
\end{align*}
\label{lem:opt_bound}
\end{lemma}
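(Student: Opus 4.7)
The plan is to control the expected on-policy bonus sum in two phases: first pass from the expectation $\innerprod{d^{\pi_t}}{b_t}$ to the observed bonus $b_t(S_t,A_t)$ along the realized trajectory via a martingale concentration, and then sum the observed bonuses using a pigeonhole argument over the visit counts.

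For the first phase, let $\mathcal{F}_t$ be the filtration generated by all algorithmic randomness up to (and including) the update of $b_t$ but before sampling $S_t,A_t$. Since $(S_t,A_t)\sim d^{\pi_t}$ conditional on $\mathcal{F}_t$, the sequence $X_t := \innerprod{d^{\pi_t}}{b_t} - b_t(S_t,A_t)$ is a martingale difference sequence. The uniform bound $\norm{b_t}_\infty \leq \tfrac{4\abs{\S}\sqrt{\log(2T(T+1)\abs{\S}/\delta)}}{1-\gamma}$ gives an almost-sure range bound on each $X_t$. Applying a variance-aware (Freedman-type) martingale inequality, or alternatively decomposing via Azuma-Hoeffding combined with the pigeonhole control of $\sum_t \mathbb{E}[b_t(S_t,A_t)^2\mid \mathcal{F}_t]$, yields with probability at least $1-\delta$ a deviation bound of the form $\frac{16\abs{\S}\sqrt{\log(\cdot)}}{1-\gamma}\log(2T/\delta)$, which matches the second term of the claimed bound.

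For the second phase, substituting the explicit form of $b_t$ gives
\[
\sum_{t=1}^T b_t(S_t,A_t) = \frac{4\abs{\S}\sqrt{\log(2T(T+1)\abs{\S}/\delta)}}{1-\gamma}\sum_{t=1}^T \frac{1}{\sqrt{N_t(S_t,A_t)+1}}.
\]
For each state-action pair $(s,a)$, the counter $N_t(s,a)$ takes each value in $\{1,\dots,N_T(s,a)\}$ exactly once across the iterations where $(S_t,A_t)=(s,a)$. Hence the inner sum is $\sum_{(s,a)}\sum_{n=1}^{N_T(s,a)} (n+1)^{-1/2}$, which by comparing with an integral is at most $2\sum_{(s,a)}\sqrt{N_T(s,a)}$ up to a harmonic-type $\sqrt{\log T}$ factor (this is where the $\log T$ in the first term arises). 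A Cauchy-Schwarz step then bounds $\sum_{(s,a)}\sqrt{N_T(s,a)}\leq \sqrt{\abs{\S}\abs{\A}\sum_{(s,a)}N_T(s,a)} = \sqrt{\abs{\S}\abs{\A} T}$, producing the first term of the statement.

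The routine pigeonhole/Cauchy-Schwarz chain is standard; the main technical obstacle is the martingale step, where a naive Azuma-Hoeffding application would produce an extra $\sqrt{T}$ factor and spoil the advertised rate. Achieving the stated $\log(2T/\delta)$ dependence instead requires a Bernstein/Freedman-type concentration that exploits the conditional second moment $\mathbb{E}[b_t(S_t,A_t)^2\mid\mathcal{F}_t]$; these conditional variances themselves admit a pigeonhole bound of order $\widetilde{\mathcal{O}}(\abs{\S}^3\abs{\A}\log(1/\delta)/(1-\gamma)^2)$, so that combining Freedman's inequality with the already-derived bound on the observed bonus sum closes a self-bounding argument. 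Finally, collecting both terms and absorbing all polylogarithmic factors yields the claimed $\widetilde{\mathcal{O}}\brr{\sqrt{\abs{\S}^3\abs{\A} T\log(1/\delta)}/(1-\gamma)}$ rate.
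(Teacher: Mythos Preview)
Your two-phase plan matches the paper's proof: the paper also first passes from $\langle d^{\pi_t}, b_t\rangle$ to the realized $b_t(S_t,A_t)$ by invoking a Freedman-type martingale bound (specifically Lemma~D.4 of Rosenberg et al., which yields $\sum_t \langle d^{\pi_t}, b_t\rangle \leq 2\sum_t b_t(S_t,A_t) + O(\|b\|_\infty \log(T/\delta))$ with high probability), and then controls the observed bonus sum by a pigeonhole over visit counts. The only difference is in that second step: the paper applies Cauchy--Schwarz \emph{first}, $\sum_t (N_t(S_t,A_t)+1)^{-1/2} \leq \sqrt{T}\sqrt{\sum_t (N_t(S_t,A_t)+1)^{-1}}$, and then bounds the resulting harmonic-type sum by $|\S||\A|\log T$; your direct route $\sum_t (N_t(S_t,A_t)+1)^{-1/2} = \sum_{(s,a)}\sum_{n=1}^{N_T(s,a)} (n+1)^{-1/2} \leq 2\sum_{(s,a)}\sqrt{N_T(s,a)} \leq 2\sqrt{|\S||\A|T}$ is in fact tighter and does \emph{not} produce any $\log T$ factor. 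So your parenthetical ``this is where the $\log T$ in the first term arises'' is a mis-attribution --- the $\log T$ in the stated bound is an artifact of the paper's Cauchy--Schwarz ordering, and your argument would actually deliver a slightly sharper constant.
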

\begin{proof}
We apply \cite[Lemma D.4]{rosenberg2020near} to conclude that with probability at least  $1-\delta$
\begin{equation*}
\sum^T_{t=1}\innerprod{d^{\pi_t}}{b_t}  = 2 \sum^T_{t=1} b_t(S_t,A_t) + \frac{16 \abs{\S} }{1-\gamma} \sqrt{\log(2 T (T + 1) \abs{\S} /\delta)} \log \brr{\frac{2 T}{\delta}}
\end{equation*}
Then, we have that 
\begin{align*}
    \sum^T_{t=1} b_t(S_t,A_t) &= \sum^T_{t=1} \frac{4 \abs{\S} }{1-\gamma} \sqrt{\frac{\log(2 T (T + 1) \abs{\S} /\delta)}{N_t(S_t,A_t) + 1}} \\
    &\leq  \frac{4 \abs{\S} \sqrt{\log(2 T (T + 1) \abs{\S} /\delta)} }{1-\gamma} \sqrt{ T \sum^T_{t=1} \frac{1}{N_t(S_t,A_t) + 1}} \\
    &
    \leq  \frac{4 \abs{\S} \sqrt{\log(2 T (T + 1) \abs{\S} /\delta)} }{1-\gamma} \sqrt{ T \sum^T_{t=1} \frac{1}{N_t(S_t,A_t) + 1}} 
\end{align*}

Finally, it holds that
\begin{align*}
\sum^T_{t=1} \frac{1}{N_t(S_t,A_t) + 1} &= \sum_{s,a} \sum^T_{t=1} \frac{\mathds{1}_{\bcc{S_t,A_t = s,a}}}{1 + \sum^t_{\tau=1} \mathds{1}_{\bcc{S_\tau,A_\tau = s,a}}} \\
& \leq \abs{\S}\abs{\A} \log (T)
\end{align*}
where the last inequality follows applying \cite[Lemma 4.13]{orabona2023modern} for $f(x) = x^{-1}$.
Putting everything together concludes the proof.
\end{proof}

\subsection{Final \texttt{UCBVI} bound}

\begin{lemma}
Let us consider \texttt{UCBVI} (Algorithm~\ref{alg:UCBVI})  in an environment with a stochastic unbiased reward almost surely bounded by $2$ run for $T$ iteration with a sequence of valid bonuses $\bcc{b_t}^T_{t=1}$ specified in the statement of \Cref{lemma:vaid_bonus}, then it holds that with probability at least $1-3\delta$
\begin{align*}
\frac{1}{T}\sum^T_{t=1} \innerprod{\initial}{V^{\pi^\star} - V^{\pi_t} }  \leq \widetilde{\mathcal{O}}\brr{\sqrt{\frac{\abs{\S}^{3} \abs{\A} \log(1/\delta) }{\brr{1-\gamma}^4 T} }} + \frac{\abs{\S} \abs{\A}}{(1-\gamma)^2T}.
\end{align*}
Therefore, for the mixture policy $\pi_{\mathrm{out}}$ such that
$ \frac{1}{T}\sum^T_{t=1} d^{\pi_t} = d^{\pi_{\mathrm{out}}}$ it holds that with probability $1-3\delta$
\[
\innerprod{\initial}{V^{\pi^\star} - V^{\pi_{\mathrm{out}}}} \leq \varepsilon_{\mathrm{opt}}
\]
for $T = \widetilde{\mathcal{O}}\brr{\frac{\abs{\S}^3 \abs{\A} \log(1/\delta)}{(1-\gamma)^4 \varepsilon^2_{\mathrm{opt}}}}$.
\label{lem:UCBVI}
\end{lemma}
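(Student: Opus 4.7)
The plan is to chain together the three preceding lemmas via a union bound and then convert the resulting per-iteration average bound into a guarantee for the mixture policy $\pi_{\mathrm{out}}$. First I would invoke \Cref{lemma:vaid_bonus} to secure, with probability at least $1-2\delta$, the validity of the bonus sequence $\{b_t\}_{t=1}^T$. On this event the hypothesis of \Cref{lem:dec} is satisfied and I may write
\[
\frac{1}{T}\sum_{t=1}^{T} \innerprod{\initial}{V^{\pi^\star} - V^{\pi_t}} \leq \frac{2}{T(1-\gamma)}\sum_{t=1}^{T}\innerprod{d^{\pi_t}}{b_t} + \frac{\abs{\S}\abs{\A}}{(1-\gamma)^2 T}.
\]

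Next I would apply \Cref{lem:opt_bound}, which on a further event of probability at least $1-\delta$ controls the expected on-policy bonus sum by $\widetilde{\mathcal{O}}\brr{\sqrt{\abs{\S}^{3}\abs{\A} T \log(1/\delta)}/(1-\gamma)}$. A single union bound keeps the total failure probability below $3\delta$, and substituting this bound into the decomposition produces exactly the stated rate: the $(1-\gamma)^{-1}$ factor sitting outside \Cref{lem:dec} combines with the $(1-\gamma)^{-1}$ inside the bonus sum bound to yield a $(1-\gamma)^{-2}$ prefactor, which when moved inside the square root becomes the $(1-\gamma)^{-4}$ appearing in the statement.

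For the second claim I would exploit the fact that the output policy $\pi_{\mathrm{out}}$ is defined through its occupancy measure $d^{\pi_{\mathrm{out}}} = T^{-1}\sum_{t=1}^T d^{\pi_t}$, together with the linear representation $(1-\gamma)\innerprod{\initial}{V^{\pi}} = \innerprod{d^{\pi}}{r}$. This immediately gives $\innerprod{\initial}{V^{\pi_{\mathrm{out}}}} = T^{-1}\sum_{t=1}^T \innerprod{\initial}{V^{\pi_t}}$, so the average regret bound transfers verbatim to the single policy $\pi_{\mathrm{out}}$. Inverting the leading term $\widetilde{\mathcal{O}}(\sqrt{\abs{\S}^{3}\abs{\A}\log(1/\delta)/((1-\gamma)^4 T)}) \leq \varepsilon_{\mathrm{opt}}/2$ then yields $T = \widetilde{\mathcal{O}}(\abs{\S}^{3}\abs{\A}\log(1/\delta)/((1-\gamma)^4 \varepsilon_{\mathrm{opt}}^2))$, and one checks that for this choice the residual $\abs{\S}\abs{\A}/((1-\gamma)^2 T)$ term is of strictly lower order and is absorbed into the same rate.

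The main obstacle is essentially bookkeeping rather than conceptual: one needs to verify that the two high-probability events (bonus validity from \Cref{lemma:vaid_bonus} and bonus sum concentration from \Cref{lem:opt_bound}) are adapted to compatible filtrations so that the union bound is legitimate, and that the passage from $\{\pi_t\}_{t=1}^T$ to $\pi_{\mathrm{out}}$ invokes linearity through the occupancy representation rather than through the value functions directly, since $V^{\pi}$ is not linear under a behavioral mixing of policies. Both points are standard but should be made explicit to ensure the final probability budget and the rate for $\pi_{\mathrm{out}}$ are correctly tracked.
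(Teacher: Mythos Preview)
Your proposal is correct and follows essentially the same approach as the paper, which simply states that the result follows from combining \Cref{lem:dec} and \Cref{lem:opt_bound} together with a union bound over the bonus-validity event from \Cref{lemma:vaid_bonus} and the event under which \Cref{lem:opt_bound} holds. You provide more detail than the paper does---in particular the explicit linearity argument $(1-\gamma)\innerprod{\initial}{V^{\pi}}=\innerprod{d^{\pi}}{r}$ to pass to $\pi_{\mathrm{out}}$ and the rate inversion---but the underlying strategy is identical.
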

\begin{proof}
The proof follows trivially from the combination of \Cref{lem:opt_bound} and \Cref{lem:dec} and a union bound over the event that the bonus are valid and the event under which the bound in \Cref{lem:opt_bound} holds .
\looseness=-1
\end{proof}

\subsection{Properties of the reward estimate}
\begin{lemma}
\label{lemma:stochastic_reward} 
For any policy $\pi \in \Pi$ and expert policy $\pi_E \in \Pi$ consider a particular state $s\in \S$ and sampling $A_E \sim \pi_E(\cdot|s)$, $A'_E \sim \pi_E(\cdot|s)$ . Then, the following facts hold true
\[
\mathbb{E}\bs{\mathds{1}\bcc{A_E = A_E'} - 2 \pi(A_E|s) + \norm{ \pi(\cdot|s)}^2} = \norm{\pi_E(\cdot|s) - \pi(\cdot|s)}^2
\]
and 
\[
 \mathds{1}\bcc{A_E = A_E'} - 2 \pi(A_E|s) + \norm{ \pi(\cdot|s)}^2 \leq 2 ~~~\text{almost surely}
\]
\end{lemma}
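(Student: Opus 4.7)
The plan is to treat the two claims separately. For the expectation identity, I would simply expand the three terms using independence of $A_E$ and $A_E'$ and linearity. Concretely, I would compute $\mathbb{E}[\mathds{1}\{A_E = A_E'\}] = \sum_{a \in \mathcal{A}} \pi_E(a\mid s)^2 = \|\pi_E(\cdot\mid s)\|^2$, then $\mathbb{E}[2\pi(A_E\mid s)] = 2\sum_{a \in \mathcal{A}} \pi_E(a\mid s) \pi(a\mid s) = 2\langle \pi_E(\cdot\mid s), \pi(\cdot\mid s)\rangle$, and finally observe that $\|\pi(\cdot\mid s)\|^2$ is deterministic given $s$. Combining the three pieces yields $\|\pi_E(\cdot\mid s)\|^2 - 2\langle \pi_E(\cdot\mid s), \pi(\cdot\mid s)\rangle + \|\pi(\cdot\mid s)\|^2$, which is exactly $\|\pi_E(\cdot\mid s) - \pi(\cdot\mid s)\|^2$ by expanding the squared Euclidean norm.

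For the almost-sure upper bound, I would bound each summand separately: the indicator is at most $1$, the term $-2\pi(A_E\mid s)$ is non-positive since $\pi(a\mid s)\in[0,1]$, and $\|\pi(\cdot\mid s)\|^2 = \sum_a \pi(a\mid s)^2 \leq \sum_a \pi(a\mid s) = 1$ because each $\pi(a\mid s) \in [0,1]$ implies $\pi(a\mid s)^2 \leq \pi(a\mid s)$. Summing yields $X \leq 1 + 0 + 1 = 2$ almost surely.

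There is no real obstacle here; both statements follow from elementary manipulations. The only subtlety worth emphasising is the independence of $A_E$ and $A_E'$, which is what allows the indicator's expectation to collapse to $\sum_a \pi_E(a\mid s)^2$ rather than to $1$; this independence is already stipulated in the lemma statement. The almost-sure bound, meanwhile, depends crucially on $\pi(\cdot\mid s)$ lying in the simplex, so it is worth stating that step explicitly.
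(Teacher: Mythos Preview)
Your proposal is correct and follows essentially the same approach as the paper: expand the squared Euclidean norm, identify $\mathbb{E}[\mathds{1}\{A_E=A_E'\}]=\sum_a \pi_E(a\mid s)^2$ via independence, and for the almost-sure bound drop the non-positive $-2\pi(A_E\mid s)$ term and bound the remaining two summands by $1$ each. If anything, your presentation of the indicator computation is cleaner than the paper's, which somewhat awkwardly writes the identity pointwise in $a$ before implicitly summing.
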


\begin{proof}
First note that
\begin{align*}
\norm{\pi_E - \pi}^2 &= \norm{\pi_E(\cdot|s)}^2 - 2 \innerprod{\pi_E(\cdot|s)}{x^k(\cdot|s)} + \norm{ \pi(\cdot|s)}^2 \\
&= \sum_a \pi_E(a \mid s)^2 + \sum_a \pi(a \mid s)^2 - 2 \sum_a \pi_E(a \mid s) \pi(a \mid s) \\
&= \sum_a \pi_E(a \mid s)^2 + \sum_a \pi(a \mid s)^2 - 2 \expect_{A \sim \pi_E(\cdot \mid s)}[\pi(a \mid s)]. \\
\end{align*}
Now, note that for a given $a \in \gA$, we get that
\begin{align*}
    \pi_E^2(a \mid s) = \mathbb{P}(A_E = a)^2 = \mathbb{P}(A_E = a) \mathbb{P}(A'_E = a) = \mathbb{P}(A_E = A'_E) = \expect[\mathds{1}_{\{A_E = A'_E\}}], 
\end{align*}
where $A_E,A'_E$ are independent samples from $\pi_E(\cdot \mid s).$ Therefore, we can conclude that 
\[
\mathds{1}\bcc{A_E = A_E'} - 2 \pi(A_E|s) + \norm{ \pi(\cdot|s)}^2.
\]
is an unbiased estimator of $\norm{\pi_E - \pi}^2$.
The second statement is easy to show
\[
\mathds{1}\bcc{A_E = A_E'} - 2 \pi(A_E|s) + \norm{ \pi(\cdot|s)}^2 \leq \mathds{1}\bcc{A_E = A_E'} + \norm{ \pi(\cdot|s)}^2  \leq 2.
\]
\end{proof}

\section{Extension to $n$-player general-sum games}
\label{sec:n_extension}
In this section, we sketch the analysis for the $n$-player general sum extension. The goal of this section is to show that the algorithm design is decentralized, meaning that it can avoid \textbf{the curse of multi-agents} in $n$-player general-sum Games as stated in \cref{rem:n_extension}. The idea is that the introduced algorithms keep the other players fixed, in the RL inner-loop and the BR oracle calls respectively. This results in a decentralized execution. This section starts with the introduction of $n$-player general-sum Games and all the necessary notations. Then, we show how the objective varies slightly from the one in Zero-Sum Games. Last, we give the proof sketch for the $n$-player general-sum case.

First note that, an infinite-horizon general-sum Markov game is defined by $\gG = (n,\gS, \gA, P, r, \gamma, \initial),$ where $n$ is the number of players, $\gS$ is the finite (joint-)state space, $\gA := \A_1 \times \ldots \times \gA_n$ is the finite (joint-)action space, where $\gA_i$ is the action space of player $i \in \{1, \ldots,n\}$,  $P \in \mathbb{R}^{\abs{\gS}\abs{\gA} \times \abs{\gA}}$ is the (unknown) transition function, $r \in \mathbb{R}^{\abs{\gS}\abs{\gA}}$ the reward vector, a discount factor $\gamma \in [0,1)$ and $\initial$ a distribution over the state space from which the starting distribution is sampled. In general-sum games there is no additional restriction on the reward function. A policy  of a player $i$ is defined as $\pi_i: \gS \to \Delta_{\gA_i}$ and we denote the joint policy as $\boldsymbol{\pi} = (\pi_1, \ldots, \pi_n) = (\pi_i, \pi_{-i}),$ where $\pi_{-i}$ denotes the policy of all players except player $i.$ We also use $\pi_{-(i,j)}$ to denote all players but players $i,j$. Additionally, we denote a joint action as $\textbf{a} = (a_1, \ldots, a_n)$. The value function and state-action value function for any player $i$ for a given state $s \in \gS$,  and any state-action pair $(s,\textbf{a}) \in \gS \times \gA$ is given by
\begin{align*}
    &V_i^{\jopolicy}(s) := \expect_{\jopolicy}\ls\sum_{t=0}^\infty r_i(S, \boldsymbol{A}) \mid S_0=s\rs \\
    &Q_i^{\jopolicy}(s, \joa) := \expect_{\jopolicy}\ls\sum_{t=0}^\infty r_i(S, \boldsymbol{A}) \mid S_0=s, A_0 = \joa\rs.
\end{align*}
All other expressions are defined as in \cref{sec:Preliminaries} with the extension that the joint actions are now given by $\joa$ and one fixes the policies of all players expect player $i$, i.e. $\pi_{-i}$, for the induced Games. 

The important difference for our analysis is in the change of the objective. In the two player Zero-sum case the objective of the the Nash Gap \eqref{eq:Nash_gap} is already a simplified form. In general, the Nash Gap is defined as the sum of exploitabilities of each player, i.e. 
\begin{equation}
\label{eq:general_nash_gap}
    \textrm{Nash-Gap}(\boldsymbol{\pi}):= \sum_{i=1}^n \max_{\pi'_i}V^{\pi'_i, {\pi}_{-i}}_i(s_0) - V^{\pi_i, {\pi}_{-i}}_i(s_0).  
\end{equation}
One can easily see the structure of the 2 player zero-sum Game leads to 
\begin{align*}
    \textrm{Nash-Gap}(\boldsymbol{\pi}) &:= \sum_{i=1}^2 \max_{\pi'_i}V^{\pi'_i, {\pi}_{-i}}_i(s_0) - V^{\pi_i, {\pi}_{-i}}_i(s_0) \\
    &= \max_{\pi'_1}V^{\pi'_1, {\pi}_{2}}_1(s_0) - V^{\pi_1, {\pi}_{2}}_1(s_0) + \max_{\pi'_2}V^{ {\pi}_{1}, \pi'_2}_2(s_0) - V^{\pi_1, {\pi}_{2}}_2(s_0) \\
    &=  \max_{\pi'_1}V^{\pi'_1, {\pi}_{2}}_1(s_0) - V^{\pi_1, {\pi}_{2}}_1(s_0) - \min_{\pi'_2}V^{\pi'_2, {\pi}_{1}}_1(s_0) + V^{\pi_1, {\pi}_{2}}_1(s_0) \\
    &= \max_{\pi'_1}V^{\pi'_1, {\pi}_{2}}_1(s_0) - \min_{\pi'_2}V^{{\pi}_{1}, \pi'_2}_1(s_0),
\end{align*}
where in the third equality, we used the assumption on the reward for two player zero-sum games, i.e. $r^1(s,a,b) = - r^2(s,a,b).$
Noting that $\jopolicy=(\pi_1,\pi_2)= (\mu, \nu)$ this is exactly the definition of \eqref{eq:Nash_gap}. 

In the following, we will show the implication of the change of the objective on the Multi-agent Imitation Learning setting. We start again by rewriting the objective with the expert policies
\begin{align*}
    \textrm{Nash-Gap}(\boldsymbol{\pi}) &= \sum_{i=1}^n \max_{\pi'_i}V^{\pi'_i, {\pi}_{-i}}_i(s_0) - V^{\pi_i, {\pi}_{-i}}_i(s_0) \\
    &= \sum_{i=1}^n V^{\pi^{\star}_i, {\pi}_{-i}}_i(s_0) - V^{\pi_{E_i}, {\pi}_{E_{-i}}}_i(s_0) +V^{\pi_{E_i}, {\pi}_{E_{-i}}}_i(s_0) - V^{\pi_i, {\pi}_{-i}}_i(s_0) \\
    &\leq \sum_{i=1}^n \lp\underbrace{V^{\pi^{\star}_i, {\pi}_{-i}}_i(s_0) - V^{\pi^{\star}_i, {\pi}_{E_{-i}}}_i(s_0)}_{\mathrm{Exploit-Gap}_i} + \underbrace{V^{\pi_{E_i}, {\pi}_{E_{-i}}}_i(s_0) - V^{\pi_i, {\pi}_{-i}}_i(s_0)}_{\mathrm{Value-Gap}}\rp. 
\end{align*}
The \emph{Exploit-Gap} is similar to the objective analyzed in the zero-sum case with the difference that now that the policies of the other players are varying in $n-1$ cases. The \emph{Value-Gap} is new and does not appear in the zero-sum case as one can again use the structure of the reward in that case. However, the latter is easy to analyze as it can be seen as a single-agent MDP with the joint policy $\jopolicy$ and the joint expert $\jopolicy_E$ respectively. 

We can now compute the analysis for Behavior Cloning and start by bounding $\mathrm{Exploit-Gap}$
\begin{align*}
    \mathrm{Exploit-Gap}_i &\leq \frac{2}{1-\gamma} \max_{\pi_i \in \mathrm{br}(\pi_{-i})} \expect_{\pi_i, \pi_{E_{-i}}}\ls \sum_{t=0}^\infty \gamma^t \mathrm{TV}\lp\pi_{E_{-i}}(\cdot \mid s), \widehat{\pi}_{-i} (\cdot \mid s)\rp\rs \\
    &\leq \frac{2}{1-\gamma} \max_{\pi_i \in \mathrm{br}(\pi_{-i})} \expect_{\pi_i, \pi_{E_{-i}}}\ls \sum_{t=0}^\infty \gamma^t \mathrm{TV}\lp\pi_{E}(\cdot \mid s), \widehat{\pi} (\cdot \mid s)\rp\rs \\
    & \leq \frac{2}{1-\gamma} \max_{\pi_i \in \mathrm{br}(\pi_{-i})} \lnorm \frac{d^{\pi_i, \pi_{E_{-i}}}}{d^{\pi_{E_i}, \pi_{E_{-i}}}}\rnorm_{\infty} \expect_{\jopolicy_E}\ls \sum_{t=0}^\infty \gamma^t \mathrm{TV}\lp\pi_{E}(\cdot \mid s), \widehat{\pi} (\cdot \mid s)\rp\rs
\end{align*}
Next, we can use that the policies are all conditionally independent in the state as we assumed to have a NE expert. Therefore, it holds true that
\[\mathrm{TV}\lp\pi_{E}(\cdot \mid s), \widehat{\pi} (\cdot \mid s)\rp \leq \sum_{i=1}^n \mathrm{TV}\lp\pi_{E_{i}}(\cdot \mid s), \widehat{\pi}_i (\cdot \mid s)\rp.\]
Similar arguments can be used to minimize the \emph{Value-Gap} without the change of measure to obtain 
\begin{equation}
\label{eq:ValueGap}
\mathrm{Value-Gap} \leq \frac{2}{1-\gamma} \lp\sum_{i=1}^n \mathrm{TV}(\pi_{E_i}, \widehat{\pi}_{i})\rp
\end{equation}
Using this for each player we obtain
\begin{align*}
    \textrm{Nash-Gap}(\boldsymbol{\pi}) \leq  \frac{8n}{(1-\gamma)^2} \max_i \max_{\pi_i \in \mathrm{BR}(\pi_{-i})} \lnorm \frac{d^{\pi_i, \pi_{E_{-i}}}}{d^{\pi_{E_i}, \pi_{E_{-i}}}}\rnorm_{\infty} \sqrt{\frac{\abs{\gS} \lp\sum_i \gA_i\rp \log^2(n\abs{\gS} /\delta)}{N}}
\end{align*}

Next, we want to sketch the extension of \cref{alg:name} for $n$-player general-sum games. We only give the extension for this algorithm as the ideas translate analogously to \cref{alg:broracle}. In a first step we have to adjust the decomposition in \eqref{eq:decomposition_BR}. We get
\begin{align*}
    &\brr{\frac{1}{K}\sum^K_{k=1}\sum^n_{i=1} \max_{\pi_i\in\Pi} \innerprod{\initial}{V_i^{\pi_i,\pi^k_{-i}}} -  \innerprod{\initial}{V_i^{\pi^k_i,\pi^k_{-i}}}}^2 \\
    &= \brr{\frac{1}{K}\sum^K_{k=1} \sum^n_{i=1}\innerprod{\initial} {V_i^{\pi^{\star,k}_i,\pi^k_{-i}} - V_i^{\pi^k_i,\pi^k_{-i}}}}^2 \\
    &= \brr{\frac{1}{K}\sum^K_{k=1} \sum^n_{i=1} \innerprod{\initial} {V_i^{\pi^{\star,k}_i,\pi^k_{-i}} - V_i^{\pi_{E_i},\pi_{E_{-i}}} + V_i^{\pi_{E_i},\pi_{E_{-i}}}-V_i^{\pi^k_i,\pi^k_{-i}}}}^2 \\
    & \leq \brr{\frac{1}{K}\sum^K_{k=1} \sum^n_{i=1} \innerprod{\initial} {V_i^{\pi^{\star,k}_i,\pi^k_{-i}} - V_i^{\pi^{\star,k}_i,\pi_{E_{-i}}} + V_i^{\pi_{E_i},\pi_{E_{-i}}}-V_i^{\pi^k_i,\pi^k_{-i}}}}^2 \\
    & \leq  2\brr{\underbrace{\frac{1}{K}\sum^K_{k=1} \sum^n_{i=1} \innerprod{\initial} {V_i^{\pi^{\star,k}_i,\pi^k_{-i}} - V_i^{\pi^{\star,k}_i,\pi_{E_{-i}}}}}_{ \mathrm{(i) Exploit-Gap}}}^2 \\
    & \quad + 2\brr{\underbrace{\frac{1}{K}\sum^K_{k=1} \sum^n_{i=1} \innerprod{\initial} {  V_i^{\pi_{E_i},\pi_{E_{-i}}}-V_i^{\pi^k_i,\pi^k_{-i}}}}_{\mathrm{(ii) Value-Gap}}}^2 \\
    &\leq  2n \left(\brr{\frac{1}{K}\sum^K_{k=1} \innerprod{\initial} {V_1^{\mathrm{br}(\pi^k_{-1}),\pi^k_{-1}} - V_1^{\mathrm{br}(\pi^k_{-1}),\pi_{E_{-1}}}}}^2 \right.\\
    &\left. \quad + \ldots  + \brr{\frac{1}{K}\sum^K_{k=1} \innerprod{\initial} {V_n^{\mathrm{br}(\pi^k_{-n}),\pi^k_{-n}} - V_n^{\mathrm{br}(\pi^k_{-n}),\pi_{E_{-n}}}}}^2\right) \\
    & \quad+2\brr{\frac{1}{K}\sum^K_{k=1} \sum^n_{i=1} \innerprod{\initial} {  V_i^{\pi_{E_i},\pi_{E_{-i}}}-V_i^{\pi^k_i,\pi^k_{-i}}}}^2,
\end{align*}
where $\pi^{\star,k}_i \in\mathrm{br}(\pi^k_{-i}).$
We will focus on $\mathrm{(i)}$ first. Then, we tackle the Value Gap. In particular, we will focus on the composition for any player $i \in \{1, \ldots,n\}.$ For $(i)$, we cannot continue directly as done in proof of \cref{thm:Alg2} as now the policies inside differ in $n-1$ other policies and therefore we cannot directly apply the performance difference lemma. Instead, we first have to do the following 
\begin{align}
    \mathrm{(i)} &=  \brr{
    \frac{1}{K}\sum^K_{k=1}  \innerprod{\initial} {V_i^{\pi^{\star,k}_i,\pi^k_{-i}} - V_i^{\pi^{\star,k}_i,\pi_{E_{-i}}}}}^2 \nonumber\\
    &= \brr{
    \frac{1}{K}\sum^K_{k=1}  \innerprod{\initial} {V_i^{\pi^{\star,k}_i,(\pi^k_{1}, \ldots,\pi^k_{i-1},\pi^k_{i+1}, \ldots\pi^k_{n} )} - V_i^{\pi^{\star,k}_i,(\pi_{E_1}, \ldots, \pi_{E_{i-1}}, \pi_{E_{i+1}}, \ldots, \pi_{E_{n}})}}}^2 \nonumber \\
    &= \bigg(
    \frac{1}{K}\sum^K_{k=1}  \left\langle\initial, V_i^{\pi^{\star,k}_i,(\pi^k_{1}, \ldots,\pi^k_{i-1},\pi^k_{i+1}, \ldots\pi^k_{n} )} - V_i^{\pi^{\star,k}_i,(\pi^k_{1}, \ldots,\pi^k_{i-1},\pi^k_{i+1}, \ldots, \pi^k_{n-1},\pi_{E_n} )} \right. \nonumber\\
    &\left. \qquad \qquad + V_i^{\pi^{\star,k}_i,(\pi^k_{1}, \ldots,\pi^k_{i-1},\pi^k_{i+1},  \pi^k_{n-1},\ldots\pi_{E_n} )} - V_i^{\pi^{\star,k}_i,(\pi^k_{1}, \ldots,\pi^k_{i-1},\pi^k_{i+1}, \ldots,  \pi_{E_{n-1}},\pi_{E_n} )} \right. \nonumber\\
    &\qquad \qquad\quad\vdots \nonumber \\
    &\left.\qquad \qquad + V_i^{\pi^{\star,k}_i,(\pi^k_{1}, \ldots, \pi_{E_{i-1}}, \pi_{E_{i+1}}, \ldots, \pi_{E_{n}})} - V_i^{\pi^{\star,k}_i,(\pi_{E_1}, \ldots, \pi_{E_{i-1}}, \pi_{E_{i+1}}, \ldots, \pi_{E_{n}})} \right\rangle\bigg)^2 \nonumber\\
    &\leq (n-1)\left( \left( \frac{1}{K}\sum^K_{k=1}  \left\langle\initial, V_i^{\pi^{\star,k}_i,(\pi^k_{1}, \ldots,\pi^k_{i-1},\pi^k_{i+1}, \ldots\pi^k_{n} )} - V_i^{\pi^{\star,k}_i,(\pi^k_{1}, \ldots,\pi^k_{i-1},\pi^k_{i+1}, \ldots, \pi^k_{n-1},\pi_{E_n} )} \right\rangle \right)^2 \right.\nonumber\\
    &\qquad \qquad\quad\vdots \nonumber \\
    &\left.+ \left(\frac{1}{K}\sum^K_{k=1}  \left\langle\initial ,V_i^{\pi^{\star,k}_i,(\pi^k_{1}, \ldots, \pi_{E_{i-1}}, \pi_{E_{i+1}}, \ldots, \pi_{E_{n}})} - V_i^{\pi^{\star,k}_i,(\pi_{E_1}, \ldots, \pi_{E_{i-1}}, \pi_{E_{i+1}}, \ldots, \pi_{E_{n}})} \right\rangle\right)^2\right) \nonumber
\end{align}

Note that by the telescopic sum construction, we now have that each difference of value function only differs in one policy and last we applied $(a+b)^2 \leq 2(a^2+b^2)$ . We will now focus on one term for the exploit Gap for any $i$. Therefore, we can proceed similar as in proof \cref{thm:Alg2} and by dividing out $K^2$, applying the performance difference lemma ($(n-1)$ times, for every player but player $i$), Cauchy Schwarz and Jensen we get
\begin{align*}
    &(n-1)\left(\left( \frac{1}{K}\sum^K_{k=1}  \left\langle\initial, V_i^{\pi^{\star,k}_i,(\pi^k_{1}, \ldots,\pi^k_{i-1},\pi^k_{i+1}, \ldots\pi^k_{n} )} - V_i^{\pi^{\star,k}_i,(\pi^k_{1}, \ldots,\pi^k_{i-1},\pi^k_{i+1}, \ldots, \pi^k_{n-1},\pi_{E_n} )} \right\rangle \right)^2 \right.\nonumber\\
    &\quad\vdots \\
    &\left.+ \left(\frac{1}{K}\sum^K_{k=1}  \left\langle\initial ,V_i^{\pi^{\star,k}_i,(\pi^k_{1}, \ldots, \pi_{E_{i-1}}, \pi_{E_{i+1}}, \ldots, \pi_{E_{n}})} - V_i^{\pi^{\star,k}_i,(\pi_{E_1}, \ldots, \pi_{E_{i-1}}, \pi_{E_{i+1}}, \ldots, \pi_{E_{n}})} \right\rangle\right)^2\right) \nonumber \\
    &\leq \frac{(n-1) \abs{A_{\max}}}{(1-\gamma)^2K}\left( \sum^K_{k=1} \expect_{s \sim d^{y^{\star,k}_{i,1}, \pi^k_{-i}}}\left[\lnorm \pi^k_n(\cdot \mid s) - \pi_{E_n}(\cdot \mid s) \rnorm^2 \right] + \varepsilon^{i,1}_{\mathrm{opt}}\right.\nonumber\\
    &\qquad \qquad\qquad \qquad \qquad\vdots \\
    & \left.\qquad \qquad\qquad \qquad  +\sum^K_{k=1} \expect_{s \sim d^{y^{\star,k}_{i,n}, (\pi^k_1, \pi_{E_{-(i,1)}})}}\left[\lnorm \pi^k_1(\cdot \mid s) - \pi_{E_1}(\cdot \mid s) \rnorm^2 \right] + \varepsilon^{i,n}_{\mathrm{opt}} \right) \\
    &= \frac{(n-1) \abs{A_{\max}}}{(1-\gamma)^2K} \sum^K_{k=1} \sum^n_{j\neq i} \mathbb{E}_{s \sim d^{y^{\star,k}_{ij},(\pi^k_{j:n} \pi_{\expert,1:j})_{-i}} }\bs{\norm{\pi^k_{n-j+1}(s) - \pi_{\expert, n-j+1}(s)}^2 } + \varepsilon^{i,j}_{\mathrm{opt}}
\end{align*}
where we introduced $n-1$ RL inner loops for the player $i$ to approximate the maximum uncertainty response policy denoted by $y^{\star,k}_{ij}$ for $j \in [n]$. More precisely, we have 
\begin{align*}
&\mathbb{E}_{s \sim d^{\pi^{\star,k}_i,(\pi^k_{j:n} \pi_{\expert,1:j})_{-i}} }\bs{\norm{\pi^k_{n-j+1}(s) - \pi_{\expert, n-j+1}(s)}^2 } \\
&\leq \mathbb{E}_{s \sim d^{y^{\star,k}_{ij},(\pi^k_{j:n} \pi_{\expert,1:j})_{-i}} }\bs{\norm{\pi^k_{n-j+1}(s) - \pi_{\expert, n-j+1}(s)}^2 } + \varepsilon^{ij}_{\mathrm{opt}}
\end{align*}

For the Value Gap, we obtain the following:

\begin{align}
    &\brr{\frac{1}{K}\sum^K_{k=1} \sum^n_{i=1} \innerprod{\initial} {  V_i^{\pi_{E_i},\pi_{E_{-i}}}-V_i^{\pi^k_i,\pi^k_{-i}}}}^2 \\
    & \leq \brr{\frac{2n}{1-\gamma}\frac{1}{K}\sum^K_{k=1}  \expect_{s \sim d^{\pi^k}} \ls\TV\lp\pi_E(\cdot \mid s), \pi^k(\cdot \mid s)\rp\rs}^2  \\
    &\leq \frac{4n^2}{(1-\gamma)^2} \frac{1}{K} \sum_{k=1}^K \expect_{s \sim d^{\pi_k}} \ls\TV^2\lp\pi_E(\cdot \mid s), \pi^k(\cdot \mid s)\rp\rs\\
    &= \frac{4n^2}{(1-\gamma)^2} \frac{1}{K} \sum_{k=1}^K \sum_{i=1}^n\expect_{s \sim d^{\pi^k}} \ls\TV^2\lp\pi_{E_i}(\cdot \mid s), \pi_i^k(\cdot \mid s)\rp\rs\\
\end{align}
where for the first inequality we have applied the performance difference lemma to the joint policies $\pi^E$ and $\pi^k$ as well as Hölder with $\lnorm\cdot\rnorm_1$ and $\lnorm\cdot\rnorm_{\infty}$ and bound the value function with its maximum value $\frac{1}{1-\gamma}.$ For the second inequality we used Cauchy Schwarz/Jensen's inequality and in the last step that the policies are all conditional independent on $s$.
Therefore, the whole objective can be upper bounded by 
\begin{align*}
&\frac{4n^2}{(1-\gamma)^2} \frac{1}{K} \sum_{k=1}^K \sum_{i=1}^n\expect_{s \sim d^{\pi^k}} \ls\TV^2\lp\pi_{E_i}(\cdot \mid s), \pi_i^k(\cdot \mid s)\rp\rs \\ &+ \frac{(n-1) \abs{A_{\max}}}{(1-\gamma)^2K} \sum^K_{k=1} \sum^n_{i=1}\sum^n_{j\neq i} \mathbb{E}_{s \sim d^{y^{\star,k}_{ij},(\pi^k_{j:n} \pi_{\expert,1:j})_{-i}} }\bs{\norm{\pi^k_{n-j+1}(s) - \pi_{\expert, n-j+1}(s)}^2 } + \varepsilon^{i,j}_{\mathrm{opt}} \\
&=  \frac{1}{K} \sum_{k=1}^K \sum_{i=1}^n \left(\frac{4n^2}{(1-\gamma)^2} \expect_{s \sim d^{\pi^k}} \ls\TV^2\lp\pi_{E_i}(\cdot \mid s), \pi_i^k(\cdot \mid s)\rp\rs \right.\\
&\left.+
\frac{(n-1) \abs{A_{\max}}}{(1-\gamma)^2} \sum^n_{j\neq i} \mathbb{E}_{s \sim d^{y^{\star,k}_{ij},(\pi^k_{j:n} \pi_{\expert,1:j})_{-i}} }\bs{\norm{\pi^k_{n-j+1}(s) - \pi_{\expert, n-j+1}(s)}^2 }  + \varepsilon^{i,j}_{\mathrm{opt}}\right) \\
&\leq \frac{n^2 \abs{A_{\max}}}{(1-\gamma)^2}\frac{1}{K} \sum_{k=1}^K \sum_{i=1}^n \left( \expect_{s \sim d^{\pi^k}} \ls\norm{\pi_{E_i}( s)- \pi_i^k( s)}^2\rs \right.\\
&\left.+
\sum^n_{j\neq i} \mathbb{E}_{s \sim d^{y^{\star,k}_{ij},(\pi^k_{j:n} \pi_{\expert,1:j})_{-i}} }\bs{\norm{\pi^k_{n-j+1}(s) - \pi_{\expert, n-j+1}(s)}^2 } + \varepsilon^{i,j}_{\mathrm{opt}}\right),
\end{align*}
where we used that $\TV^2(\pi_{E_i}, \pi_i) \leq \frac{A_{i}}{4} \norm{\pi_{E_i} - \pi_i}^2.$

Now note that we can sample according to all state occupancy measures.
In particular, we have $S^{i,j}_k \sim  d^{y^{\star,k}_{ij},(\pi^k_{j:n} \pi_{\expert,1:j})_{-i}}$ and $S_k^{\pi_k} \sim d^{\pi^k}.$ Again adding and subtracting $\norm{\pi_{E_i}(\cdot \mid S^{\pi^k}_{k}) - \pi^k(\cdot \mid S^{\pi^k}_{k})}^2$ and $\norm{\pi_{E_i}(\cdot \mid S^{i,j}_{k}) - \pi^k(\cdot \mid S^{i,j}_{k})}^2$ we get

\begin{align*}
    &\frac{n^2 \abs{A_{\max}}}{(1-\gamma)^2}\frac{1}{K}\sum_{i=k}^K \sum_{i=1}^n \\
    &\left( \sum^n_{j\neq i} \mathbb{E}_{s \sim d^{y^{\star,k}_{ij},(\pi^k_{j:n} \pi_{\expert,1:j})_{-i}} }\bs{\norm{\pi^k_{n-j+1}(\cdot \mid s) - \pi_{\expert, n-j+1}(\cdot \mid s)}^2 } \right. \\
    & \left. \quad - \norm{\pi_{E_, n-j+1}(\cdot \mid S^{i,j}_{k}) - \pi_{ n-j+1}^k(\cdot \mid S^{i,j}_{k})}^2  \right. \\&\left.\quad+\expect_{s \sim d^{\pi^k}} \ls\norm{\pi_{E_i}( \cdot \mid s)- \pi_i^k(\cdot \mid  s)}^2\rs - \norm{\pi_{E_i}(\cdot \mid S^{\pi^k}_{k}) - \pi^k(\cdot \mid S^{\pi^k}_{k})}^2\right. \quad \tag{Martingale} \label{eq:n_player_martingale}\\
    &\left. \quad+ \sum^n_{j\neq i}\norm{\pi_{\expert, n-j+1}(\cdot \mid S^{i,j}_{k}) - \pi_{ n-j+1}^k(\cdot \mid S^{i,j}_{k})}^2 \right. \\ 
    & \left.\quad + \norm{\pi_{E_i}(\cdot \mid S^{\pi^k}_{k}) - \pi^k(\cdot \mid S^{\pi^k}_{k})}^2\right) \quad \tag{Regret} \label{eq:n_player_regret} \\
    & \quad+ \frac{n^3 \abs{A_{\max}}}{(1-\gamma)^2}\varepsilon_{\mathrm{opt}}
\end{align*}

Note that now again, we have analogously to \eqref{eq:martingale} a martingale term here and a regret term that we can control by updating the policies with online mirror descent. In particular, we can rearrange the regret term as follows
\[
\resizebox{\textwidth}{!}{$
    \eqref{eq:n_player_regret} = \sum^n_{i=1} \brr{\sum_{j \neq n -i -1} \norm{\pi_{\expert_i}(\cdot|S^{j,n-i+1}_k) - \pi^k_i(\cdot|S^{j,n-i+1}_k)}^2
    + \norm{\pi_{E_i}(\cdot \mid S^{\pi^k}_{k}) - \pi^k(\cdot \mid S^{\pi^k}_{k})}^2}.
    $}
\]

At this point, following an analogous analysis as done in proof of \cref{thm:Alg2}, we get a total bound in the order of $\gO(\frac{\mathrm{poly}(n, \abs{\gS}, \abs{A_{\max}}, (1-\gamma)^{-1})}{\varepsilon^8})$. Similar steps can also be done for \cref{alg:broracle} to obtain $\gO(\frac{\mathrm{poly}(n, \abs{\gS}, \abs{A_{\max}}, (1-\gamma)^{-1})}{\varepsilon^4})$. This shows that the algorithm design indeed allows to avoid the \textbf{curse of multi-agents} and instead scales polynomial in the number of agents $n$.

\section{Comparison to Lower Bound in Tang et al.}
\label{sec:comparisontang}
In this section, we compare our result \cref{thm:lowerbound} to Theorem 4.3 in \citet{tang2024multiagentimitationlearningvalue}. In particular, we emphasize how their construction allows to avoid a linear regret in the case of a \textbf{fully known transition model}. Note that they consider a finite horizon setting and general-sum games with a correlated equilibrium expert. For a better readability we first restate their Theorem in a infinite horizon setting.
\begin{theorem}[Theorem 4.3 in \citet{tang2024multiagentimitationlearningvalue}]
    There exists a Markov Game, an expert policy pair $(\muE,\nuE)$ and a learner policy $(\mu,\nu)$, such that even when the state visitation distribution of $(\mu,\nu)$ exactly matches $(\muE,\nuE)$, the Nash gap satisfies
    \[
\mathrm{Nash}\text{-}\mathrm{Gap}(\mu,\nu) \geq \Omega\lp(1-\gamma)^{-1}\rp. 
\]
\end{theorem}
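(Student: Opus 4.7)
The plan is to exhibit a small zero-sum Markov Game in which a learner can perfectly match the expert's state visitation distribution, yet, by playing structurally different actions at the only visited state, leave the door open to very large unilateral exploitation. I will build a game with three states $\{s_0, s^+, s^-\}$ and two actions per player at $s_0$; the transition rule at $s_0$ is chosen so that the two \emph{diagonal} joint actions $(a_1,b_1)$ and $(a_2,b_2)$ both loop back to $s_0$, while the two \emph{off-diagonal} pairs $(a_1,b_2)$ and $(a_2,b_1)$ transition to the absorbing states $s^+$ and $s^-$ respectively. Rewards are $0$ at $s_0$ and $\pm 1$ at $s^{\pm}$, and I set $\initial = \delta_{s_0}$.

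Next I verify that $(\muE, \nuE) = (a_1, b_1)$ is a Nash equilibrium: the self-loop payoff is $0$, any unilateral deviation by the maximizer routes to $s^-$ with asymptotic value $-\gamma/(1-\gamma)$, and any unilateral deviation by the minimizer routes to $s^+$ with asymptotic value $\gamma/(1-\gamma)$, both strictly worse than the equilibrium value for the deviating player. Then I pick the candidate learner $(\mu, \nu) = (a_2, b_2)$; this profile again self-loops deterministically at $s_0$, so $d^{\mu, \nu} = \delta_{s_0} = d^{\muE, \nuE}$ and the exact state-visitation match required by the theorem is secured. The key observation is that the on-path state distribution is insensitive to swapping one diagonal for the other, even though the induced action marginals at $s_0$ are completely disjoint from the expert's.

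To close the argument I compute the Nash gap directly. The maximizer's best response to $\nu = b_2$ is $\mu^{\star} = a_1$, which diverts the trajectory to $s^+$ and yields $V^{\mu^{\star}, \nu}(s_0) = \gamma/(1-\gamma)$; symmetrically $\nu^{\star} = b_1$ is a best response to $\mu = a_2$ and forces $V^{\mu, \nu^{\star}}(s_0) = -\gamma/(1-\gamma)$. Substituting into \eqref{eq:Nash_gap} yields $\Nashgap(\mu,\nu) = 2\gamma/(1-\gamma) = \Omega\bigl((1-\gamma)^{-1}\bigr)$, as required. The only non-routine step is designing the gadget; the conceptual subtlety that the authors will want to flag in the ensuing discussion is that a transition-aware imitator could in principle exclude the degenerate $(a_2,b_2)$ profile because it is inconsistent with the demonstrated expert actions at $s_0$, and this is precisely the loophole that the authors' stronger \cref{thm:lowerbound} is designed to close.
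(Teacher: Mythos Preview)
Your construction is correct for the theorem as stated: the three-state zero-sum gadget with diagonals self-looping at $s_0$ and off-diagonals absorbing at $s^{\pm}$ indeed gives a Nash equilibrium expert $(a_1,b_1)$, a learner $(a_2,b_2)$ with identical state occupancy $\delta_{s_0}$, and $\Nashgap(\mu,\nu) = 2\gamma/(1-\gamma)$.

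However, the paper follows Tang et al.'s original construction, which is structurally different. There one has a five-state \emph{cooperative} game in which the expert traverses only the path $s_0 \to s_2 \to s_4$, and the learner \emph{matches the expert's actions exactly on every visited state} but plays an arbitrary (exploitable) action at the unvisited state $s_1$. The exploitation arises because a deviating player can steer the trajectory through $s_1$ and take advantage of the learner's ignorance there. Your gadget is smaller and already zero-sum, which is tidy, but it proves a conceptually weaker statement: your learner plays the \emph{wrong action at the one visited state}, so any algorithm that looks at expert actions---including vanilla BC---would never output $(a_2,b_2)$. In Tang et al.'s construction the learner is fully consistent with all observed expert actions, so the hardness survives action-based imitation and isolates the phenomenon of out-of-support deviations.

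Relatedly, your closing remark slightly misdiagnoses the loophole. You attribute the fragility of your example to a ``transition-aware'' imitator, but in fact an \emph{action-aware} imitator (with no transition knowledge at all) already rules out $(a_2,b_2)$ from the demonstrations at $s_0$. In the paper's construction the loophole really is transition-awareness: no action data exists at $s_1$, but knowing the transition map there lets the learner pick the robust action $a_3$ that routes back to expert-covered states. That is the specific gap that the paper's \cref{thm:lowerbound} is engineered to close.
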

The Markov Game that they construct is given in \cref{fig:tanglowerbound}.
\begin{figure}[h!]
    \centering
\begin{tikzpicture}[>=stealth',shorten >=1pt,auto,node distance=2.1cm]
  \node[state] (s0) [draw=blue] [text=blue] {$s_0$};
  \node[state] (s1) [above right of=s0] {$s_1$};
  \node[state] (s2) [draw=blue] [below right of=s0] [text=blue]{$s_2$};
  \node[state] (s3) [right of=s1] {$s_3$};
  \node[state] (s4)[draw=blue] [right of=s2] [text=blue]{$s_4$};
  \path[->] 
    (s0) edge node {$a_2a_1$} (s1)
    (s0) edge [draw=blue] node[text=blue] {else} (s2)
    (s1) edge node {$a_2a_1$} (s3)
    (s2) edge [draw=blue] node[text=blue] {all} (s4)

    (s1) edge [draw=greenp] node[text=greenp] {else} (s4)
    
    (s4) edge [loop right, draw=blue] node[text=blue] {all} (s4)    
    (s3) edge [loop right] node {all} (s3);
\end{tikzpicture}
    \caption{Cooperative Markov Game with Linear Regret in case of unknown transitions}
    \label{fig:tanglowerbound}
\end{figure}
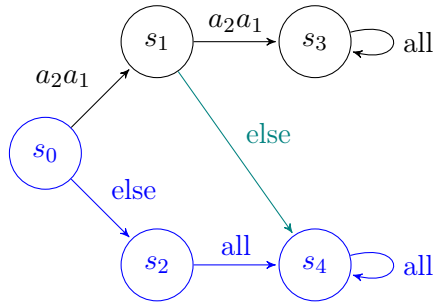

The Markov Game consists of the action space $\gA = \{a_1,a_2,a_3\}$ and the state space $\gS= \{s_0,s_1,s_2,s_3,s_4\}$. For the transition model, which is unknown to the learner, it holds true that
    \begin{equation*}
    P(\cdot|s_0, a,b) = 
        \begin{dcases*}
                s_1 & \text{if $(a,b) = a_2a_1$,}
                \\
                s_4 & \text{otherwise.}
        \end{dcases*}
\end{equation*}
and for all other states transition to one neighboring state with probability one, independent of the chosen action.
The state-only reward of the cooperative Markov Game is given by
\[
R_1(s) = 
\begin{dcases*}
                1 & \text{if $s=s_3$,}
                \\
                0 & \text{otherwise.}
        \end{dcases*}
\]
It follows immediately, that an expert with a Nash-gap of $0$, i.e. an NE is given by the following policy pair
\[\muE(a_1 \mid s_0) = \nuE(a_1 \mid s_0) =1, \muE(a_3 \mid s_1) = \nuE(a_3 \mid s_1) =1,\]
and in the other states any action $a\in \gA$ can be chosen. As the expert data is not covering data for state $s_1$, it only covers the \textcolor{blue}{blue path} in \cref{fig:tanglowerbound}, \citet{tang2024multiagentimitationlearningvalue} argue that any policy can be chosen for the learner in state $s_1$ and choosing $\mu(a_1 \mid s_1) = \nu(a_1 \mid s_1) =1.$ However, if we know the transition model, the learner can be steered to choose a \emph{robust} action such that the learner will be taken back to states known from the expert data, i.e. state $s_4$, even when one agent would deviate from the current policy. \textcolor{greenp}{This is highlighted in green in \cref{fig:tanglowerbound}}. The only action that is considered \emph{robust} is action $a_3$ for both agents, exactly the action chosen from the expert. This implies that the learning policy in the case of a fully known transition model has 
\[\mathrm{Nash}\text{-}\mathrm{Gap}(\mu,\nu) = 0.\]
This in contrast to our construction in \cref{fig:lowerbound}, where even under a known transition there is no way to steer the learner to known paths. Therefore, \cref{thm:lowerbound} shows the necessity of the expert agent deviation coefficient and separates MAIL form SAIL, where effective learning is possible under known transitions \cite{rajaraman2020toward}. Additionally, we consider a Zero-Sum Markov Game, not considered in Tang et al. \cite{tang2024multiagentimitationlearningvalue}. 
\section{Experiments}
\label{sec:experimental_validations}
In this section, we give a detailed description of the underlying environments used for the numerical validation of MURAIL and describe the setup in general. Additionally, we give some practical insights that could speed up convergence.
\subsection{Environments}
We consider two different environments for our numerical validation, one that has $\gC(\muE,\nuE) < \infty,$  and the lower bound construction \cref{fig:lowerbound} with different NE experts to control $\gC(\muE,\nuE).$ In particular we have multiple with  $\gC(\muE,\nuE) < \infty$ and the same NE expert as in \cref{thm:lowerbound} to get $\gC(\muE,\nuE) = \infty.$
\looseness=-1
\paragraph{Environments with $\gC(\muE,\nuE) < \infty$.} For this we consider two environments. For the first environment, we generate a random Zero-Sum Markov Game with $\abs{\gS} = 10, \abs{\gA} = \abs{\gB} = 3$ and a reward between $-1$ and $1$. To ensure that the expert covers all states we use a uniform initial state distribution, i.e $d_0(s_0) := \mathrm{Unif}(\gS).$ We set the discount factor to $0.9.$

Additionally, we choose the Markov Game from the Lower bound construction and use that the set of Nash equilibria is convex for Zero-sum Games. This way we take a mixture of Nash equilibria that chooses the $S_{\mathrm{copy}}$ path and the \textcolor{blue}{blue path}, for a detailed description see \cref{sec:expert_dists}. 
\paragraph{Environment with $\gC(\muE,\nuE) = \infty$.}

For $\gC(\muE,\nuE) = \infty$, we use the Zero-Sum Markov Game given in \cref{fig:lowerbound} with the simplification that $\abs{S_{\mathrm{xplt1}}} = \abs{S_{\mathrm{xplt1}}} = \abs{S_{\mathrm{copy}}} =1$ as our goal here is only to verify the theoretical insights, but not to prove that also the transition model cannot be used for non-interactive Imitation Learning algorithms.  This means that we have $\abs{\gS} = 7.$ Additionally, we have $\abs{\gA} = \abs{\gB} = 3$ and $d_0(s_0) = \delta_{s_0}.$ We set the discount factor to $0.9.$ The reward is given by $R(S_{\mathrm{xplt2}}),$ $R(S_{\mathrm{xplt1}}) = -0.1$ and $0$ otherwise. 
 
\paragraph{Exploitability.} To calculate the exploitability, we fix the current policies of one player iteratively and then run a standard Value Iteration for single-agent MDPs under the true underlying reward function.
\looseness=-1
\subsection{Experimental Setup}
We run the experiments for each environments 1000 times over different seeds and average the results. For both environments we compute the optimal learning rate $\eta$. For simplicity, we use UCBVI algorithm for a state only reward as the RL inner loop of MURMAIL. Note, that this can be replaced by any other no regret algorithm.
\paragraph{Expert distributions for different $\mathcal{C}(\muE,\nuE)$.} \label{sec:expert_dists}To get control over $\mathcal{C}(\muE,\nuE)$, note that the set of Nash equilibria is convex for Two Player Zero-Sum Games. Therefore, consider the Lower bound example illustrated in \cref{fig:lowerbound}. Note that we have a pure NE that chooses \textcolor{blue}{action $a_3b_3$ to get on the blue path.} Choosing a different pure action, let us assume $a_2,b_2$ will lead the agent to choose the path that goes on $s_1, S_{\mathrm{copy}}.$ Now, as the set of NE is convex, we can also mix these equilibria. To choose the minimal $\mathcal{C}(\muE,\nuE),$ we pick for $(a)$ the Nash equilibrium such that $\muE(a_2 \mid s_0) = \nuE(a_2 \mid s_0) =\muE(a_3 \mid s_0) = \nuE(a_3 \mid s_0) = 0.5.$ To increase $\mathcal{C}(\muE,\nuE),$ we have to increase the probability of the experts to take action $a_3$ and $b_3$ respectively. We choose for $(c)$  $\muE(a_3 \mid s_0) = \nuE(a_3 \mid s_0) = 0.999,$ for $(c)$ we pick $\muE(a_3 \mid s_0) = \nuE(a_3 \mid s_0) = 0.9999.$ For $(d),$ we use the same expert policy as in the lower bound construction, i.e. $\muE(a_3 \mid s_0) = \nuE(a_3 \mid s_0) = 1.$ 

To generate the expert distributions, we use a Value Iteration algorithm for Two Player Zero-Sum Games as e.g. described in~\citet{pmlr-v37-perolat15} for the randomly generated Markov Game. 
\subsection{Practical considerations}
Next, we list practical considerations for our algorithms, that could speed up the performance. First, note that while solving the RL inner loop in \cref{alg:name} can be computationally expensive, the objective between successive iterations changes only through the updates of the policies $\mu_k$ and $\nu_k$. Consequently, if these policies change only slightly between iterations, the optimal solutions for $y_k$ and $z_k$ may also vary only marginally. This observation suggests that initializing the optimization with the solution from the previous iteration, a common technique known as \emph{warm-start optimization}, can significantly accelerate convergence.

Second, although the samples generated in the RL inner loop cannot formally be reused for the outer loop policy updates due to measurability issues of the resulting Martingale sequence, in practice it is often beneficial to recycle these samples. Doing so can reduce the total number of required samples without noticeably affecting empirical performance.

Last, note that we assumed for our analysis that there is no initial dataset for interactive Imitation Learning \cref{sec:Preliminaries}. However, in general it is possible to consider an initial dataset $\gD$, from which we can learn initial policies with a non-interactive Imitation Learning algorithm like BC. This can speed up the convergence of our proposed algorithms as the maximum uncertainty exploration will mainly focus on states out of the distribution from the initial dataset. We give the algorithm of MURMAIL with an initial dataset in \cref{alg:initial}. Similarly, one can adjust \cref{alg:broracle}. 

\begin{algorithm}[!ht]
\caption{MURMAIL with initial dataset}
\label{alg:initial}
\KwIn{ number of iterations $K$, learning rates $\eta$, inner iteration budget $T$, dataset $\gD$, non-interactive Imitation Learning algorithm $\mathrm{Alg}$}
\KwOut{$\varepsilon$-Nash equilibrium $(\hat{\mu},\hat{\nu})$}
\SetAlgoLined
\textcolor{blue}{\% Run non-interactive Imitation Learning algorithm to initialize policies} \\
$(\mu_1, \nu_1) = \mathrm{Alg}(\gD)$\\
\For{$k = 1$ \textbf{to} $K$}{
    \textbf{Inner Single-Agent RL Updates:}\\
    \textcolor{blue}{\% Maximum uncertainty response to $\mu$-player update} \\
        Define single agent transition  $P_{\mu_k}(s'\mid s,b) = \sum_{a \in \mathcal{A}} \mu_k(a\mid s) P(s'\mid s,a,b)$; \\
        Define single agent stochastic reward
        $R_{\mu_k}(s) \rightarrow \mathds{1}_{\{A_E = A_E'\}} - 2 \mu_k(A_E\mid s) + \| \mu_k(\cdot|s)\|^2$ where $A_E, A_E' \sim \mu_E(\cdot \mid s)$;\\
     $y_k = \texttt{UCBVI}(T, P_{\mu_k}, R_{\mu_k})$;\\
    \textcolor{blue}{\% Maximum uncertainty response to $\nu$-player update} \\
         $P_{\nu_k}(s'|s,a) = \sum_{b \in \mathcal{B}} \nu_k(b|s) P(s' \mid s,a,b)$; \\
        $R_{\nu_k}(s) \rightarrow \mathds{1}_{\{A_E = A_E'\}} - 2 \nu_k(A_E \mid s) + \| \nu_k(\cdot \mid s)\|^2$ where $A_E, A_E' \sim \nu_E(\cdot \mid s)$;\\
     $z_k = \texttt{UCBVI}(T, P_{\nu_k}, R_{\nu_k})$ \\
    \textbf{Update policies:}\\
    Sample $S^\mu_k \sim d^{\mu_k,y_k}$, $A^\mu_k \sim \mu_E(\cdot \mid S^\mu_k)$, $S^\nu_k \sim d^{z_k, \nu_k}$, $A^\nu_k \sim \nu_E(\cdot \mid S^\nu_k)$. \\
    $g^\mu_k(s,a) = \mu_k(a \mid S^\mu_k)\mathds{1}_{S^\mu_k=s} - \mathds{1}_{A^\mu_k = a}$ \\
    $g^\nu_k(s,a) =  \nu_k(a \mid S^\nu_k)\mathds{1}_{S^\nu_k=s} - \mathds{1}_{A^\nu_k = a}$ \\
    $\mu_{k+1}(a \mid s) \propto \mu_k(a\mid s) \exp\brr{ - \eta g^\mu_k(s,a)}$ \;
    $\nu_{k+1}(b \mid s) \propto \nu_k(b\mid s) \exp\brr{ - \eta g^\nu_k(s,a)}$  
}
\Return{$\mu_{\widehat{k}}$, $\nu_{\widehat{k}}$ for $\widehat{k} \sim \mathrm{Unif}([K])$}
\end{algorithm}

\subsection{Additional plots}
In this section, we list an additional plot for the second more involved environment that has $\mathcal{C}(\muE, \nuE) < \infty.$ Here we can observe similarly to case $(a)$ of \cref{fig:empirical_evaluation} that the speed of convergence from BC is higher compared to MURMAIL. It indicates that the chosen algorithm has a small concentrability coefficient $\gC(\muE,\nuE)$ and again highlights the importance of algorithm selection depending on the underlying environment.
\begin{figure}[ht]
    \centering
    \includegraphics[width=0.5\linewidth]{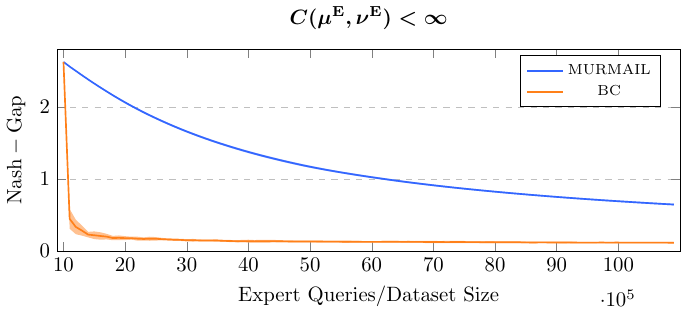}
    \caption{Nash Gap for MURMAIL and BC}
    \label{fig:environment 2}
\end{figure}

\section{Useful Results}
In this section, we list useful theorems and lemmas used to prove the main results.

\begin{lemma}[see e.g. Lemma IX.5 by~\citet{10885842}]
\label{lem:pdl_ma}
    For any policy of the max-player $\mu$ and two policies of the min-player $\nu$ and $\nu'$, we have 
    \begin{align*}
        &V^{\mu, \nu}_1 (s_0) - V^{\mu, \nu'}_1 (s_0) \\
        &= \expect_{\mu, \nu} \ls \sum_{t=0}^{\infty} \gamma^t \expect_{(a,b) \sim (\mu, \nu)} \ls Q^{\mu, \nu'}(s,a,b) \rs  - \expect_{(a,b) \sim(\mu, \nu')} \ls Q^{\mu, \nu'}(s,a,b) \rs  \rs.
    \end{align*}
    Similarly, for any two policies of the max-player $\mu$ and $\widehat{\mu}$ and policy of the min-player $\nu$, we have
    \begin{align*}
        &V^{\mu, \nu}_1 (s_0) - V^{\widehat{\mu}, \nu}_1 (s_0) \\
        &= \expect_{\mu, \nu} \ls \sum_{t=0}^{\infty} \gamma^t \expect_{(a,b) \sim (\mu, \nu)} \ls Q^{\mu, \nu'}(s,a,b) \rs  - \expect_{(a,b) \sim(\mu, \nu')} \ls Q^{\mu, \nu'}(s,a,b) \rs  \rs.
    \end{align*}
\end{lemma}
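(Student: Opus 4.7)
}
The plan is to derive the identity by the standard one-step decomposition plus induction (telescoping), treating the $\mu$-player's policy as fixed so that the argument reduces to an essentially single-agent performance difference between the two $\nu$-policies in the $\mu$-induced MDP. I will prove only the first statement; the second follows by the symmetric argument with the roles of $\mu$ and $\nu$ swapped.

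First, using $V^{\mu,\nu}(s) = \mathbb{E}_{(a,b)\sim(\mu,\nu)}[Q^{\mu,\nu}(s,a,b)]$ and the analogous identity for $\nu'$, I would write
\begin{align*}
V^{\mu,\nu}(s_0)-V^{\mu,\nu'}(s_0)
&= \mathbb{E}_{(a,b)\sim(\mu,\nu)}\!\left[Q^{\mu,\nu}(s_0,a,b)\right]-\mathbb{E}_{(a,b)\sim(\mu,\nu')}\!\left[Q^{\mu,\nu'}(s_0,a,b)\right],
\end{align*}
and then add and subtract the crossover term $\mathbb{E}_{(a,b)\sim(\mu,\nu)}[Q^{\mu,\nu'}(s_0,a,b)]$. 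This produces exactly the $t=0$ summand of the target identity, plus a remainder equal to $\mathbb{E}_{(a,b)\sim(\mu,\nu)}[Q^{\mu,\nu}(s_0,a,b)-Q^{\mu,\nu'}(s_0,a,b)]$.

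Next, I invoke the Bellman equation for each $Q$. Since the reward $r(s,a,b)$ does not depend on the policies and the transition $P(\cdot\mid s,a,b)$ is also policy-independent, the instantaneous terms cancel and we get
\begin{align*}
Q^{\mu,\nu}(s,a,b)-Q^{\mu,\nu'}(s,a,b)
&= \gamma\, \mathbb{E}_{s'\sim P(\cdot\mid s,a,b)}\!\left[V^{\mu,\nu}(s')-V^{\mu,\nu'}(s')\right].
\end{align*}
Substituting back, the remainder becomes $\gamma\,\mathbb{E}_{S_1\sim d_1^{\mu,\nu}(\cdot\mid s_0)}[V^{\mu,\nu}(S_1)-V^{\mu,\nu'}(S_1)]$, where $S_1$ is the state after one step under $(\mu,\nu)$ starting from $s_0$. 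Applying the same decomposition recursively at $S_1, S_2, \dots$ unrolls the difference into the claimed discounted sum along the trajectory generated by $(\mu,\nu)$.

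To make the induction rigorous I would formalize it as follows: define $\Delta_T(s_0) := V^{\mu,\nu}(s_0)-V^{\mu,\nu'}(s_0) - \mathbb{E}_{\mu,\nu}\!\left[\sum_{t=0}^{T}\gamma^t\big(\mathbb{E}_{(a,b)\sim(\mu,\nu)}[Q^{\mu,\nu'}(S_t,a,b)]-\mathbb{E}_{(a,b)\sim(\mu,\nu')}[Q^{\mu,\nu'}(S_t,a,b)]\big)\right]$, and show by induction on $T$ that $\Delta_T(s_0)=\gamma^{T+1}\mathbb{E}_{\mu,\nu}[V^{\mu,\nu}(S_{T+1})-V^{\mu,\nu'}(S_{T+1})]$. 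The main (and only) nontrivial point is justifying the $T\to\infty$ limit: since rewards are bounded in $[-1,1]$, each value function is uniformly bounded by $(1-\gamma)^{-1}$, and hence the residual term $\gamma^{T+1}\mathbb{E}[V^{\mu,\nu}(S_{T+1})-V^{\mu,\nu'}(S_{T+1})]$ vanishes as $T\to\infty$, while the partial sums converge absolutely by dominated convergence. Passing to the limit yields the claimed identity. The symmetric statement with two $\mu$-policies and a fixed $\nu$ is obtained by repeating the argument after swapping the roles of the two players in every step.
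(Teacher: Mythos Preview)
Your proposal is correct and follows essentially the same approach as the paper: both arguments use the Bellman equation, add and subtract the crossover term $\mathbb{E}_{(a,b)\sim(\mu,\nu)}[Q^{\mu,\nu'}(s,a,b)]$ so that the immediate reward cancels, and then unroll the resulting recursion along the $(\mu,\nu)$ trajectory by induction. Your treatment is in fact slightly more careful than the paper's, since you explicitly justify the $T\to\infty$ limit via the uniform bound $\lvert V\rvert\le(1-\gamma)^{-1}$, whereas the paper simply writes ``applying the same argument inductively completes the proof.''
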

\begin{proof}
    The proof can seen as the two player case of the standard simulation lemma for MDPs as one player remains fixed. For completeness reasons we the first statement, the second one follows analogously. By the Bellman equation it holds true that
    \[V^{\mu, \nu}_1 (s) = \expect_{a \sim\mu, b \sim \nu} \ls r(s,a,b) + \gamma \expect_{s' \sim P}[V^{\mu,\nu}(s')]\rs.\]
    Applying this to the difference of the value functions yields
    \begin{align*}
    &V^{\mu, \nu}_1 (s) - V^{\mu, \nu'}_1 (s) \\
    &= \expect_{a \sim\mu, b \sim \nu} \ls r(s,a,b) + \gamma \expect_{s' \sim P}[V^{\mu,\nu}(s')]\rs - \expect_{a \sim\mu, b \sim \nu'} \ls r(s,a,b) + \gamma \expect_{s' \sim P}[V^{\mu,\nu'}(s')] \rs \\
    &= \lp\expect_{a \sim\mu, b \sim \nu} \ls r(s,a,b) + \gamma \expect_{s' \sim P}[V^{\mu,\nu}(s')] - \expect_{a \sim\mu, b \sim \nu}[r(s,a,b)+\gamma\expect_{s' \sim P}[ V^{\mu,\nu'}(s')]\rs\rp \\
    & \quad+\lp \expect_{a \sim\mu, b \sim \nu}[r(s,a,b)+\gamma \expect_{s' \sim P}[V^{\mu,\nu'}(s')]- \expect_{a \sim\mu, b \sim \nu'} \ls r(s,a,b) + \gamma \expect_{s' \sim P}[V^{\mu,\nu'}(s')] \rs\rp \\
    &= \gamma \expect_{a \sim\mu, b \sim \nu} \ls \expect_{s' \sim P}[V^{\mu,\nu}(s')] - \expect_{s' \sim P}[V^{\mu,\nu'}(s')]\rs\\
    & \quad + \lp\expect_{a \sim\mu, b \sim \nu}[Q^{\mu, \nu'}(s,a,b)]- \expect_{a \sim\mu, b \sim \nu'} \ls Q^{\mu, \nu'}(s,a,b)) \rs\rp,
    \end{align*}
where we used that the immediate reward cancels out for $a \sim \mu, b \sim \nu$. Applying the same argument inductively for $s=s_0$ completes the proof.
\end{proof}

\begin{lemma}[Concentration Inequality for Total Variation Distance, see e.g. Thm 2.1 by~\citet{concentrationofmissingmass}]  \label{lemma:l1_concentration}
Let $\gX = \{1, 2, \cdots, |\gX|\}$ be a finite set. Let $P$ be a distribution on $\gX$. Furthermore, let $\widehat{P}$ be the empirical distribution given $m$ i.i.d. samples $x_1, x_2, \cdots, x_n$ from $P$, i.e.,
\begin{align*}
    \widehat{P}(j) = \frac{1}{n} \sum_{i=1}^{n} \mathbb{I} \lb x_i = j \rb.
\end{align*}
Then, with probability at least $1-\delta$, we have that 
\begin{align*}
    \lnorm P - \widehat{P} \rnorm_1 := \sum_{x \in \gX} \labs P(x) - \widehat{P}(x) \rabs \leq \sqrt{\frac{2 |\gX| \log(1/\delta) }{n}}.
\end{align*}
\end{lemma}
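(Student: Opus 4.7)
The plan is to view $\|P - \widehat{P}\|_1$ as a function of the $n$ i.i.d.\ samples $x_1,\dots,x_n$ and combine a concentration-of-measure argument with a bound on its expectation. The first step is to verify the bounded-differences property: if one sample $x_i$ is replaced by some $x_i'$, then at most two of the empirical frequencies $\widehat{P}(j)$ shift, each by at most $1/n$. Consequently $\|P - \widehat{P}\|_1$ changes by at most $2/n$. An application of McDiarmid's inequality then yields
\begin{equation*}
\mathbb{P}\!\left( \|P - \widehat{P}\|_1 - \mathbb{E}\|P - \widehat{P}\|_1 \geq t \right) \leq \exp\!\left(-\tfrac{n t^2}{2}\right),
\end{equation*}
so that with probability at least $1 - \delta$, $\|P-\widehat{P}\|_1 \leq \mathbb{E}\|P-\widehat{P}\|_1 + \sqrt{2\log(1/\delta)/n}$.

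Next, I would control the expectation. Since $n\widehat{P}(j) \sim \mathrm{Bin}(n,P(j))$, Jensen's inequality gives $\mathbb{E}|P(j) - \widehat{P}(j)| \leq \sqrt{\mathrm{Var}(\widehat{P}(j))} = \sqrt{P(j)(1-P(j))/n} \leq \sqrt{P(j)/n}$. Summing over $j \in \mathcal{X}$ and applying Cauchy--Schwarz,
\begin{equation*}
\mathbb{E}\|P - \widehat{P}\|_1 \;=\; \sum_{j \in \mathcal{X}} \mathbb{E}\,|P(j)-\widehat{P}(j)| \;\leq\; \sum_{j \in \mathcal{X}} \sqrt{P(j)/n} \;\leq\; \sqrt{|\mathcal{X}|/n}.
\end{equation*}

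Combining the two pieces gives $\|P-\widehat{P}\|_1 \leq \sqrt{|\mathcal{X}|/n} + \sqrt{2\log(1/\delta)/n}$ with probability $1-\delta$, and the stated bound $\sqrt{2|\mathcal{X}|\log(1/\delta)/n}$ follows by the elementary inequality $\sqrt{a}+\sqrt{b} \leq \sqrt{2(a+b)}$ together with absorbing constants (using $|\mathcal{X}| \geq 2$ and $\log(1/\delta) \geq \log 2$ in the relevant regime), or alternatively by invoking the Bretagnolle--Huber--Carol / Weissman direct bound $\mathbb{P}(\|P-\widehat{P}\|_1 > \varepsilon) \leq 2^{|\mathcal{X}|} e^{-n\varepsilon^2/2}$ and inverting.

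The only real subtlety is aligning the two $\sqrt{\cdot}$ terms into the single factor of $\sqrt{2|\mathcal{X}|\log(1/\delta)/n}$; this is purely algebraic and can be handled by a constant-absorbing step or by using the alternative one-shot Weissman-type inequality. Since the lemma is cited from prior work, the cleanest presentation is simply to invoke the cited result; my plan above justifies why that citation holds and exhibits the two standard routes (McDiarmid $+$ mean bound, or the exponential-in-$|\mathcal{X}|$ union-style bound).
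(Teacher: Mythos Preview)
Your proposal is correct and follows essentially the same route as the paper's proof: McDiarmid's bounded-differences inequality with constant $2/n$, combined with the expectation bound $\mathbb{E}\|P-\widehat{P}\|_1 \leq \sqrt{|\mathcal{X}|/n}$, then merging the two $\sqrt{\cdot}$ terms. You actually supply more detail than the paper does (you derive the expectation bound via Jensen and Cauchy--Schwarz, whereas the paper simply cites Berend--Kontorovich), and you are more candid about the final constant-absorption step, which in both versions is somewhat loose as written.
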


\begin{proof}
Define the function $f(x_1, \dots, x_n) = \sum_{x \in \gX} |\widehat{P}(x) - P(x)|$, where $\widehat{P}$ is the empirical distribution. Replacing one sample $x_i$ can change $f$ by at most $2/n$, since the empirical frequencies change by at most $1/n$ per coordinate and total variation sums these differences.

By McDiarmid’s inequality, we have for any $\varepsilon > 0$,
\[
\Pr\left( f - \mathbb{E}[f] \geq \varepsilon \right) \leq \exp\left(-\frac{n \varepsilon^2}{2}\right).
\]

Berend and Kontorovich (2013) show that $\mathbb{E}[f] \leq \sqrt{\frac{|\gX|}{n}}$. Setting the failure probability to $\delta$, we solve
\[
\exp\left(-\frac{n \varepsilon^2}{2}\right) = \delta \quad \implies \quad \varepsilon = \sqrt{\frac{2 \log(1/\delta)}{n}}.
\]

Therefore, with probability at least $1 - \delta$,
\[
\lnorm P - \widehat{P} \rnorm_1 \leq \sqrt{\frac{|\gX|}{n}} + \sqrt{\frac{2 \log(1/\delta)}{n}} \leq \sqrt{\frac{2 |\gX| \log(1/\delta)}{n}},
\]
\end{proof}

\begin{lemma}[Binomial concentration, see e.g. Lemma A.1 by~\citet{10.5555/3540261.3542359}]
  \label{lemma:binomial-concentration}
  Suppose $N\sim \operatorname{Bin} (n, p)$ where $n\ge 1$ and $p\in[0,1]$. Then with probability at least $1-\delta$, we have
  \begin{align*}
    \frac{p}{N\vee 1} \le \frac{8\log(1/\delta)}{n},
  \end{align*}
  where $N \vee 1 := \max\{1,N\}.$
\end{lemma}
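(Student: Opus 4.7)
My plan is to split the argument into two regimes depending on the size of $np$ relative to $\log(1/\delta)$. The basic intuition is that when $\mathbb{E}[N] = np$ is large, a multiplicative Chernoff bound gives $N \gtrsim np$ with high probability, so $p/N \lesssim 1/n$; when $np$ is small, $p$ itself is already small enough that the deterministic lower bound $N\vee 1 \ge 1$ suffices.

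More precisely, I would first dispose of the easy regime: if $np \le 8\log(1/\delta)$, then $p \le 8\log(1/\delta)/n$, and since $N\vee 1 \ge 1$ by definition, the inequality $\frac{p}{N\vee 1} \le p \le \frac{8\log(1/\delta)}{n}$ holds deterministically, with no appeal to randomness.

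Second, I would handle the regime $np > 8\log(1/\delta)$ using the standard multiplicative Chernoff bound for Binomial random variables, namely $\Pr[N \le (1-\varepsilon)np] \le \exp(-np\varepsilon^2/2)$. Taking $\varepsilon = 1/2$, this gives $\Pr[N \le np/2] \le \exp(-np/8) \le \exp(-\log(1/\delta)) = \delta$, where the second inequality uses the hypothesis of this regime. Thus with probability at least $1-\delta$ we have $N \ge np/2 \ge 4\log(1/\delta) \ge 1$ (for the mild assumption $\delta \le e^{-1/4}$, otherwise the bound is vacuous), so $N\vee 1 = N$ and we conclude
\[
\frac{p}{N\vee 1} \;\le\; \frac{p}{np/2} \;=\; \frac{2}{n} \;\le\; \frac{8\log(1/\delta)}{n}.
\]

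Combining the two regimes yields the stated bound with probability at least $1-\delta$. I do not expect any real obstacle here: the only subtlety is bookkeeping the constants so that the threshold used to split cases (namely $8\log(1/\delta)$) matches both the Chernoff exponent (which dictates the case split) and the final constant in the bound. A cleaner alternative would be to apply a Chernoff-style lower tail bound directly to $p/(N\vee 1)$ and optimize the threshold, but the two-case argument above is the most transparent route and is what I would write up.
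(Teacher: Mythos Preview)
Your proposal is correct and follows essentially the same two-case argument as the paper's proof: the same threshold $np \lessgtr 8\log(1/\delta)$, the same multiplicative Chernoff bound with $\varepsilon = 1/2$, and the same arithmetic. One minor remark: your caveat about needing $\delta \le e^{-1/4}$ is unnecessary, since $N$ is integer-valued and the event $N > np/2 > 0$ already forces $N \ge 1$.
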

\begin{proof}
    We consider two cases.
    Case 1: $p \leq \frac{8\log(1/\delta)}{n}.$ As $N \vee 1 \geq 1,$ we have $\frac{p}{N\vee 1} \leq p \leq \frac{8\log(1/\delta)}{n}$ almost surely. 
    Case 2: $p > \frac{8\log(1/\delta)}{n}.$ Note, that then $\expect[N] = np > 8\log(1/\delta)$ and by the multiplicative Chernoff bound, for any $0 < \varepsilon < 1$ it holds true that
    \[\mathbb{P}\lp N < (1-\varepsilon)np\rp \leq \exp \lp-\frac{\varepsilon^2}{2} np\rp.\]
    Now, with $\varepsilon= \frac12$ we have
    \[\mathbb{P}\lp N < (1-\varepsilon)np\rp \leq \exp\lp-\frac{np}{8} \rp \leq \delta.\]
    Therefore, with probability of at least $1-\delta$ it holds $N \geq \frac{np}2$ and therefore on this event also $\frac p{n \vee 1} \leq \frac2n.$ In total we get $\frac{p}{N \vee 1} \leq \frac{8\log(1/\delta)}{n}.$
    Combining both cases completes the proof.
\end{proof}

\end{document}